\documentclass{article}

\PassOptionsToPackage{numbers,sort&compress}{natbib}
\usepackage[preprint]{neurips_2026}

\usepackage[utf8]{inputenc}
\usepackage[T1]{fontenc}
\usepackage[strings]{underscore}
\usepackage{hyperref}
\usepackage{url}
\usepackage{booktabs}
\usepackage{tabularx}
\usepackage{longtable}
\usepackage{amsfonts}
\usepackage{nicefrac}
\usepackage{microtype}
\usepackage[table]{xcolor}

\usepackage{amsmath,amssymb,amsthm}
\usepackage{multirow}
\usepackage{graphicx}
\usepackage{float}
\usepackage{morefloats}
\usepackage[section]{placeins}
\usepackage{dblfloatfix}
\usepackage{subcaption}
\usepackage{import}
\usepackage{titletoc}
\usepackage{listings}

\DeclareUnicodeCharacter{00B1}{\ensuremath{\pm}}
\DeclareUnicodeCharacter{00B2}{\textsuperscript{2}}
\DeclareUnicodeCharacter{00B3}{\textsuperscript{3}}
\DeclareUnicodeCharacter{00B9}{\textsuperscript{1}}
\DeclareUnicodeCharacter{2070}{\textsuperscript{0}}
\DeclareUnicodeCharacter{2074}{\textsuperscript{4}}
\DeclareUnicodeCharacter{2075}{\textsuperscript{5}}
\DeclareUnicodeCharacter{2076}{\textsuperscript{6}}
\DeclareUnicodeCharacter{2077}{\textsuperscript{7}}
\DeclareUnicodeCharacter{2078}{\textsuperscript{8}}
\DeclareUnicodeCharacter{2079}{\textsuperscript{9}}
\DeclareUnicodeCharacter{00D7}{times}
\DeclareUnicodeCharacter{2192}{\ensuremath \to }
\DeclareUnicodeCharacter{223C}{\ensuremath{\sim}}
\DeclareUnicodeCharacter{2248}{\ensuremath{\approx}}
\DeclareUnicodeCharacter{2212}-

\definecolor{tableheadgray}{gray}{0.96}
\definecolor{quadcodebg}{HTML}{F7F9FA}
\definecolor{quadcodekw}{HTML}{1F5E8C}
\definecolor{quadcodestring}{HTML}{8A4B08}
\definecolor{quadcodecomment}{HTML}{6B7280}
\definecolor{quadcodeaccent}{HTML}{0F766E}
\lstdefinestyle{appendixpython}{
  language=Python,
  basicstyle=\ttfamily\scriptsize,
  columns=fullflexible,
  keepspaces=true,
  breaklines=true,
  showstringspaces=false,
  frame=single,
  framerule=0.25pt,
  rulecolor=\color{black!30},
  xleftmargin=1em,
  xrightmargin=1em
}
\lstdefinestyle{quadnormcode}{
  style=appendixpython,
  backgroundcolor=\color{quadcodebg},
  rulecolor=\color{quadcodeaccent!45},
  keywordstyle=\color{quadcodekw}\bfseries,
  stringstyle=\color{quadcodestring},
  commentstyle=\color{quadcodecomment}\itshape,
  emph={QuadNorm,trapezoidal_weights_nd,torch,nn},
  emphstyle=\color{quadcodeaccent}
}
\newcommand{\tableheadrulebelow}{\noalign{\kern-\belowrulesep{\color{tableheadgray}\hrule height \belowrulesep}}}
\newcommand{\tableheadruleabove}{\noalign{{\color{tableheadgray}\hrule height \aboverulesep}\kern-\aboverulesep}}
\theoremstyle{plain}
\newtheorem{theorem}{Theorem}
\newtheorem{proposition}[theorem]{Proposition}
\newtheorem{definition}{Definition}
\theoremstyle{remark}
\newtheorem{remark}[theorem]{Remark}
\theoremstyle{definition}

\theoremstyle{plain}

\extrafloats{1000}
\newcommand{\R}{\mathbb{R}}
\newcommand{\E}{\mathbb{E}}

\newcommand{\norm}[1]{\lVert #1 \rVert}

\DeclareRobustCommand{\Eqref}[1]{\hyperref[#1]{Eq.~\ref*{#1}}}
\DeclareRobustCommand{\Eqrefrange}[2]{\hyperref[#1]{Eqs.~\ref*{#1}} to~\hyperref[#2]{\ref*{#2}}}
\renewcommand{\eqref}[1]{\Eqref{#1}}

\title{QuadNorm: Resolution-Robust Normalization \\ for Neural Operators}
\author{
  Bum Jun Kim \quad Makoto Kawano \quad Yusuke Iwasawa \quad Yutaka Matsuo\\
  \normalfont The University of Tokyo, Japan\\
  \normalfont\texttt{\{bumjun.kim,kawano,iwasawa,matsuo\}@weblab.t.u-tokyo.ac.jp}
}

\begin{document}

\maketitle

\begin{abstract}
	Normalization layers in neural operators usually compute statistics by uniformly averaging discrete grid values, making the normalization itself discretization-dependent and thereby a source of transfer error across different resolutions or meshes. To enable discretization robustness, we introduce a quadrature normalization family that replaces existing uniform averaging in normalization layers with numerical quadrature: QuadNorm and BlendQuadNorm. On endpoint-inclusive uniform grids, the proposed quadrature moments are $O(h^2)$-consistent across discretizations, meaning that their cross-resolution mismatch decays quadratically with grid spacing. A transfer-error bound then predicts how normalization-induced mismatch scales with both the resolution gap and network depth. The experiments show the same gap- and depth-scaling trends predicted by the transfer-error bound. On Darcy, QuadNorm delivers the best cross-resolution performance at every tested target resolution from $64^2$ to $256^2$; on real-data benchmarks, Transolver with QuadNorm achieves nearly resolution-invariant transfer. The largest gains appear on nonperiodic PDEs and nonspectral architectures, where native-resolution improvements also emerge. We also validate BlendQuadNorm, which stays close to LayerNorm behavior and serves as a conservative default for periodic FNO settings. These results identify normalization as a previously overlooked source of resolution dependence in neural operators.
\end{abstract}

\begingroup
\let\FloatBarrier\relax
\section{Introduction}
\label{sec:intro}
\endgroup
Neural operators are designed to approximate maps between infinite-dimensional function spaces from finite-dimensional discretizations \citep{kovachki2023neural, li2021fno, lu2021deeponet, kovachki2021universal}. A defining appeal of this paradigm is discretization invariance: A model trained on one grid should generalize to different resolutions without retraining, a goal that also motivates physics-informed, factorized, and transformer-based operator variants \citep{li2022pino, tran2023ffno, li2022transformerpde, wang2025cvit}. Accordingly, cross-resolution evaluation, where models are trained on one discretization and evaluated on other discretizations, such as $32^2 \to 64^2$ or $32^2 \to 128^2$, is a common generalization setting in neural operators.

In practice, however, several components of neural operator architectures are not resolution-invariant. Normalization layers are a critical yet under-examined example. Consider a standard LayerNorm \citep{ba2016layer} applied to a feature map $x$ of shape $(B, C, H, W)$ sampled on an $H$-by-$W$ grid. The per-sample mean is
\begin{equation}\label{eq:discrete-mean}
	\hat{\mu}^{\text{disc}} = \frac{1}{CHW} \sum_{c=1}^{C} \sum_{i=1}^{H} \sum_{j=1}^{W} x_{c}(i,j).
\end{equation}
If the same continuous field is resampled at a different resolution $(H', W') \neq (H, W)$, the discrete mean changes, even though the underlying function is the same. This discretization dependence creates a pathway for the normalization to break the resolution invariance that the spectral convolution carefully maintains \citep{li2021fno, reno2023}. It is also orthogonal to recent analyses of discretization, truncation, and mismatch error in operator kernels and training pipelines \citep{lanthaler2024discretization, subedi2024controlling, gao2025discretization}.

\begin{figure}[t]
	\centering
	\includegraphics[width=\textwidth]{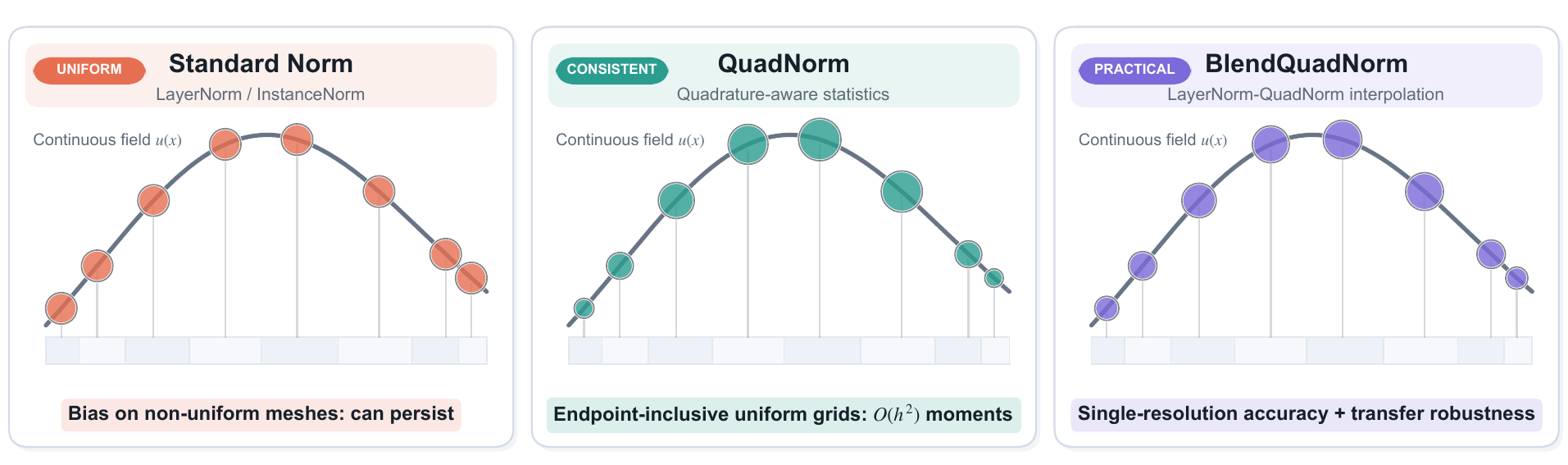}
	\caption{Schematic comparison of normalization strategies for neural operators. The left schematic shows standard norms such as LayerNorm and InstanceNorm, which assign uniform point weights $w_i = 1/N$ and can induce nonvanishing bias on nonuniform mesh families. The middle schematic shows QuadNorm, which uses quadrature and control-volume weights $w_i = |V_i|$ and yields second-order-consistent moments on endpoint-inclusive uniform grids. The right schematic shows BlendQuadNorm, which interpolates between the two and combines single-resolution accuracy with transfer robustness.}
	\label{fig:schematic}
\end{figure}

The consequences are both practical and measurable. Cross-resolution degradation appears when an existing neural operator trained at $64^2$ and tested at $128^2$ encounters shifted normalization statistics, thereby degrading prediction accuracy during resolution transfer. An empirical error-comparison ablation in Appendix~\ref{sec:exp-error-decomp} shows that quadrature-consistent normalization reduces this degradation to only 0.22 times the no-normalization reference on Darcy. Inconsistency across grids also appears on nonuniform meshes, where discrete averaging assigns equal weight to all grid points regardless of their spatial coverage and therefore further distorting the statistics, as shown in Figure~\ref{fig:nonuniform-bias}.

\paragraph{Why the issue remained overlooked.} Most benchmarks evaluate at the training resolution, which masks the effect of normalization-induced cross-resolution degradation. Yet the consequences vary depending on both the architecture and the underlying partial differential equation (PDE). In nonspectral architectures, ensuring normalization consistency can produce substantial gains even at the native resolution, far larger than the effect observed at a 2$\times$ gap in the Fourier Neural Operator (FNO).

As neural operators increasingly scale toward foundation models \citep{herde2024poseidon, hao2024dpot, mccabe2024multiple, subramanian2024towards} and long-horizon neural PDE solvers \citep{lippe2023pderefiner}, ensuring discretization consistency at every layer becomes critical. To solve this problem, we introduce two variants: 1) QuadNorm, which replaces uniform weights with trapezoidal quadrature, the classical rule that gives endpoint samples half weight and interior samples full weight on a one-dimensional, endpoint-inclusive uniform grid; and 2) BlendQuadNorm, a drop-in replacement for normalization layers with no additional hyperparameters. Figure~\ref{fig:schematic} illustrates the core idea by replacing uniform weights with quadrature weights in the normalization statistics. The QuadNorm advantage amplifies with model scale, rising from $18\%$ at 307K to $38\%$ at 9.6M, suggesting principled normalization will matter more as models grow. BlendQuadNorm shows native-resolution equivalence to LayerNorm for FNO on the Darcy benchmark, as reported in Appendix Table~\ref{tab:tost}.

\paragraph{Contributions.} While the method is a principled application of numerical quadrature, our experimental results reveal consequences that are architecture- and PDE-dependent, nonobvious, and not predictable from numerical analysis alone. We identify normalization as an overlooked source of discretization dependence in neural operators and formalize discretization consistency in Definition~\ref{def:consistency}. We also derive, in Appendix~\ref{app:proof-transfer}, a conditional transfer-error propagation bound with an $O(L \cdot \max(h^p,h'^p))$ normalization-induced term. That bound yields two testable scaling laws, one for resolution gaps and one for depth, and both are confirmed experimentally, with the scaling comparison reported in Figure~\ref{fig:scaling}. The quadrature normalization family spans a practical spectrum: pure quadrature-weighted moments, including the QuadNorm variants studied here and the LayerNorm-style endpoint with $\alpha = 0$, have provable $O(h^2)$ consistency on endpoint-inclusive uniform grids, while BlendQuadNorm preserves native accuracy in practice in a 10-seed native Darcy comparison using FNO, reported in Appendix Table~\ref{tab:tost}.

We also conduct a broad empirical study across multiple PDEs, architectures, and various settings, each evaluated with multiple seeds and rigorous statistical testing. For nonperiodic PDEs with nonspectral architectures, QuadNorm can yield substantial native-resolution improvements, including 26\% for the Poisson using Galerkin and 21\% for variable-coefficient diffusion using Transolver. For FNO, the benefit grows with the resolution gap, reaching $35\%$ at an 8$\times$ gap and $42\%$ at a 13$\times$ gap, and it also grows with depth, from 2.9 times at 4 layers to 4.7 times at 8 layers. The advantage further amplifies with model scale, increasing from $18\%$ at 307K parameters to $38\%$ at 9.6M parameters, and the same qualitative picture persists under stronger recipes and foundation-style multi-resolution training.

Finally, the empirical study uncovers phenomena beyond a straightforward transfer of classical quadrature ideas. These include a spectral mismatch, namely an $O(h)$ boundary perturbation resulting from endpoint correction in Proposition~\ref{prop:spectral-mismatch} that is empirically harmful for FNO's periodic features; an empirical overcompensation effect, where QuadNorm reduces degradation to 0.22 times the no-normalization reference in the empirical error comparison study; an $\alpha$-reversal, where QuadNorm outperforms BlendQuadNorm on nonspectral architectures but not for FNO; and a model-scale amplification from $18\%$ to $38\%$, which suggests that discretization consistency becomes a bottleneck at scale. A detailed related-work discussion appears in Appendix~\ref{app:related}.

\section{The Quadrature Normalization Family}
\label{sec:method}

\subsection{Problem: Discretization-Dependent Normalization}
\label{sec:problem}
For a single sample, let $u: \Omega \to \R^C$ be a continuous field defined on $\Omega = [0,1]^d$, and let $x$ be its discretization with shape $(C, n_1, \cdots, n_d)$ on an endpoint-inclusive uniform grid with coordinate spacings $h_k = (n_k - 1)^{-1}$. In the isotropic case considered below, we write $h = h_1 = \cdots = h_d$. Standard normalization layers compute statistics as follows:
\begin{equation}
	\hat{\mu}_c^{\text{disc}} = \frac{1}{\prod_{k} n_k} \sum_{\mathbf{i}} x_c(\mathbf{i}), \qquad
	\hat{\sigma}_c^{2,\text{disc}} = \frac{1}{\prod_{k} n_k} \sum_{\mathbf{i}} (x_c(\mathbf{i}) - \hat{\mu}_c^{\text{disc}})^2.
\end{equation}
These are uniformly weighted discrete averages. On endpoint-inclusive uniform grids, they can yield first-order mismatch in $h$ (Proposition~\ref{prop:standard-inconsistent}), and this statistic-level discrepancy can translate into substantial cross-resolution errors in practice; on nonuniform mesh families, equal point weights can additionally induce nonvanishing bias because the point density does not match cell volume (Appendix~\ref{sec:exp-nonuniform}, especially Figure~\ref{fig:nonuniform-mechanism}).

\begin{definition}[Discretization Consistency]
	\label{def:consistency}
	A family of normalization operators $\{\mathcal{N}_h\}$ is discretization-consistent if, for any $u \in C^2(\Omega;\R^C)$ and discretizations $x_h, x_{h'}$ of $u$ at grid spacings $h, h'$:
	\begin{equation}
		\norm{\mathcal{N}_h(x_h) - P_{h' \to h}\mathcal{N}_{h'}(x_{h'})}_{L^2} = O(\max(h^p, h'^p))
	\end{equation}
	for some $p > 0$, where $P_{h' \to h}$ denotes interpolation from grid $h'$ to grid $h$.
\end{definition}

Whenever two discretizations are compared on the $h$-grid, we use the quadrature-weighted discrete comparison norm
\begin{align*}
	\|z_h\|_{L_h^2}^2 := \sum_{\mathbf{i}} \omega_{\mathbf{i}}^{(h)}  \|z_h(\mathbf{r}_{\mathbf{i}})\|_2^2,
\end{align*}
where $\omega_{\mathbf{i}}^{(h)}$ are the spatial quadrature weights on the comparison grid. In the uniform-grid results below, these weights are tensor-product trapezoidal weights, normalized so that $\sum_{\mathbf{i}} \omega_{\mathbf{i}}^{(h)} = |\Omega|$. When the comparison grid is clear, we write $\|\cdot\|_{L^2}$ for this norm. With this convention, an $L^\infty$ bound of order $O(\eta)$ implies the same order in $L^2$. Standard normalization layers do not, in general, satisfy this definition of discretization consistency at second order (Proposition~\ref{prop:standard-inconsistent}; Appendix~\ref{sec:exp-nonuniform}).

\subsection{QuadNorm: Resolution-Consistent Moments via Quadrature}
\label{sec:quadnorm}
Our goal is to design a normalization layer that is robust to discretization, and we achieve this by computing mean and variance estimates that remain consistent across different discretizations. Specifically, we replace discrete averages with quadrature-weighted integrals. The spatial quadrature can be paired with any fixed, resolution-independent choice of channel reduction axes. As a canonical example, the per-channel spatial moments used by an InstanceNorm-style pure quadrature variant are obtained by assigning quadrature weights $w_i > 0$ to each spatial location $\mathbf{r}_i$ and defining:
\begin{equation}
	\mu_c = \frac{\sum_i w_i  x_c(\mathbf{r}_i)}{\sum_i w_i}, \quad
	\sigma_c^2 = \frac{\sum_i w_i  (x_c(\mathbf{r}_i) - \mu_c)^2}{\sum_i w_i}.
\end{equation}
When the weights $w_i$ are uniform, these moments reduce to the usual uniformly weighted per-channel spatial statistics used by standard normalizations such as InstanceNorm.

The key insight is that these weighted sums approximate continuous integrals:
\begin{equation}
	\mu_c \approx \frac{1}{|\Omega|}\int_\Omega u_c(\mathbf{r})  d\mathbf{r}, \qquad
	\sigma_c^2 \approx \frac{1}{|\Omega|}\int_\Omega (u_c(\mathbf{r}) - \bar{u}_c)^2  d\mathbf{r},
	\qquad
	\bar{u}_c := \frac{1}{|\Omega|}\int_\Omega u_c(\mathbf{r})  d\mathbf{r}.
\end{equation}

For $d$-dimensional uniform grids, the weights are tensor products of 1D quadrature weights. Specifically, for the trapezoidal rule on $[0,1]$ with $n$ endpoint-inclusive points:
\begin{equation}
	w_j = \begin{cases}
		\frac{h}{2} & j = 0 \text{ or } j = n-1 \\
		h           & \text{otherwise}
	\end{cases}, \qquad h = \frac{1}{n-1}.
\end{equation}
This is the classical composite trapezoidal rule; its second-order behavior on endpoint-inclusive intervals and its especially favorable behavior on periodic grids are classical \citep{trefethen2014trapezoidal, fornberg2021improving}.

The normalization then proceeds as standard:
\begin{equation}
	y_c(\mathbf{r}_i) = \gamma_c \frac{x_c(\mathbf{r}_i) - \mu_c}{\sqrt{\sigma_c^2 + \varepsilon}} + \beta_c,
\end{equation}
where $\gamma_c$ and $\beta_c$ are learnable per-channel affine parameters. The same quadrature construction can be combined with LayerNorm-style or GroupNorm-style channel reduction by multiplying the spatial weights across the chosen channel reduction axes; this construction is the version used by BlendQuadNorm below. The nonuniform-grid extension, which uses control-volume weights and includes supporting bias analysis, is presented in Appendix~\ref{sec:exp-nonuniform}.

\subsection{BlendQuadNorm: Blending LayerNorm with Quadrature-Weighted Statistics}
\label{sec:blendquadnorm}
QuadNorm might slightly reduce single-resolution accuracy because it changes the effective weighting of boundary points. Here, we introduce BlendQuadNorm, a drop-in replacement for LayerNorm that keeps LayerNorm's reduction axes together with the same spatially constant affine parameterization used in our implementations, while interpolating between standard LayerNorm statistics and their quadrature-weighted counterparts. Let $\mu_{\text{WLN}}$ and $v_{\text{WLN}}$ denote the Weighted LayerNorm (WLN) statistics obtained by applying quadrature weights to the LayerNorm reduction axes within BlendQuadNorm:
\begin{align}
	\mu_{\text{WLN}} & = \frac{\sum_{c,i} w_i  x_c(\mathbf{r}_i)}{C \sum_i w_i}, \qquad
	v_{\text{WLN}} = \frac{\sum_{c,i} w_i  (x_c(\mathbf{r}_i) - \mu_{\text{WLN}})^2}{C \sum_i w_i}, \nonumber \\
	\mu_b            & = \alpha  \mu_{\text{LayerNorm}} + (1 - \alpha)  \mu_{\text{WLN}}, \label{eq:blend-mu} \\
	v_b              & = \alpha  v_{\text{LayerNorm}} + (1 - \alpha)  v_{\text{WLN}}
	+ \alpha(1-\alpha)(\mu_{\text{LayerNorm}} - \mu_{\text{WLN}})^2, \label{eq:blend-var}                     \\
	y                & = \gamma \frac{x - \mu_b}{\sqrt{v_b + \varepsilon}} + \beta. \label{eq:blend-norm}
\end{align}
WLN shares QuadNorm's quadrature-weighted moment construction, differing only in that its mean and variance are computed over all channels and spatial positions, as in LayerNorm. Equivalently, if the normalized uniform and quadrature weights over the reduced axes are denoted by $p_{\text{LayerNorm}}$ and $p_{\text{WLN}}$, then $p_b = \alpha p_{\text{LayerNorm}} + (1-\alpha)p_{\text{WLN}}$, $\mu_b$ is the exact mean, and \eqref{eq:blend-var} is its exact second central moment by the law of total variance.

In experiments, QuadNorm denotes the pure quadrature endpoint matched to the native reduction pattern of the layer being replaced. The per-channel spatial formulas above recover the InstanceNorm-style pure quadrature variant, while the $\alpha = 0$ endpoint of \Eqrefrange{eq:blend-mu}{eq:blend-norm} recovers WLN, the corresponding pure quadrature endpoint when the reduction axes match LayerNorm.

Unless stated otherwise, our BlendQuadNorm uses a fixed $\alpha = 0.3$ across all layers in all experiments. Appendix~\ref{app:details} gives the corresponding LayerNorm-style per-sample forward pass used in our experiments. We discuss architecture-dependent $\alpha$ choices in Appendix Table~\ref{tab:alpha-matrix}.

\section{Theoretical Analysis}
\label{sec:theory}
\paragraph{Scope of the analysis.} This section formalizes the consistency guarantee of quadrature-weighted normalization by proving the second-order cross-resolution agreement of its statistics and outputs, in contrast to the first-order mismatch that uniform averaging can exhibit. This second-order behavior is the key robustness mechanism of the quadrature normalization family: the normalization-induced statistic mismatch decays rapidly under grid refinement, making cross-resolution transfer less sensitive to discretization changes. The results below are stated for a fixed channel-reduction pattern. For pure quadrature variants with per-channel statistics, all statements are interpreted componentwise over channels. For LayerNorm-style BlendQuadNorm, the statistics are scalars because the reduction axes include channels. Because the number of channels or groups is finite and resolution-independent, passing from a scalar statement to finitely many components does not change the order in $h$. Throughout, LayerNorm-style refers to this reduction pattern together with affine parameters that are constant over spatial positions, either scalar or per-channel, as in our implementations. Proofs for all results in this section are given in Appendix~\ref{app:theory-proofs}.

\begin{proposition}[Second-order consistency of quadrature-weighted means]
	\label{thm:consistency}
	Let $f \in C^2(\Omega)$ with $\Omega = [0,1]^d$. Let $\hat{\mu}_h(f)$ and $\hat{\mu}_{h'}(f)$ denote the tensor-product trapezoidal estimates of $|\Omega|^{-1}\int_\Omega f(\mathbf{r})d\mathbf{r}$ on endpoint-inclusive uniform grids with spacings $h$ and $h'$. Then we have
	\begin{equation}
		|\hat{\mu}_h(f) - \hat{\mu}_{h'}(f)| = O(h^2 + h'^2).
	\end{equation}
\end{proposition}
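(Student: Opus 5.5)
The plan is to compare each estimate with the exact mean $\bar f := |\Omega|^{-1}\int_\Omega f(\mathbf{r})\,d\mathbf{r}$ and then apply the triangle inequality:
\[
|\hat{\mu}_h(f)-\hat{\mu}_{h'}(f)| \le |\hat{\mu}_h(f)-\bar f| + |\bar f-\hat{\mu}_{h'}(f)|.
\]
It therefore suffices to show that $|\hat{\mu}_h(f)-\bar f| \le K h^2$, with a constant $K$ depending only on $d$ and the second derivatives of $f$. Since $|\Omega|=1$ and the tensor-product trapezoidal weights sum to $|\Omega|$, the normalized estimate $\sum_{\mathbf{i}} w_{\mathbf{i}} f(\mathbf{r}_{\mathbf{i}})/\sum_{\mathbf{i}} w_{\mathbf{i}}$ coincides with the trapezoidal quadrature $Q_h f := \sum_{\mathbf{i}} w_{\mathbf{i}} f(\mathbf{r}_{\mathbf{i}})$. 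So the task reduces to the classical error bound for the composite trapezoidal rule.

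In one dimension the estimate is classical. For $g\in C^2[0,1]$, the composite trapezoidal rule $T_h$ with $n$ endpoint-inclusive points satisfies
\[
\Bigl|T_h g - \int_0^1 g\Bigr| \le \frac{h^2}{12}\,\|g''\|_\infty .
\]
To see this, write the error on each cell $[x_j, x_j+h]$ using the linear interpolation remainder $g(x)-p_1(x)=\tfrac12 g''(\xi)(x-x_j)(x-x_j-h)$. Integrating gives $-\tfrac{h^3}{12}g''(\xi_j)$ per cell, and summing over the $h^{-1}$ cells gives the bound.

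For $d>0$ dimensions I will use the tensor structure, $Q_h = T_h^{(1)}\otimes\cdots\otimes T_h^{(d)}$, where $T_h^{(k)}$ acts in coordinate $k$. Let $I^{(k)}$ denote exact integration in coordinate $k$. The key step is the telescoping identity
\[
Q_h - I = \sum_{k=1}^d I^{(1)}\cdots I^{(k-1)}\,\bigl(T_h^{(k)}-I^{(k)}\bigr)\,T_h^{(k+1)}\cdots T_h^{(d)}.
\]
Fix the other coordinates. For the $k$-th term, the one-dimensional bound applied in variable $r_k$ gives an error of at most $\tfrac{h^2}{12}\|\partial_k^2 f\|_\infty$. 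The remaining operators are either exact integrals over $[0,1]$ or trapezoidal sums with nonnegative weights summing to $1$. Both are positive and of norm $1$ on $L^\infty$, so they do not enlarge the bound. Summing over $k$ yields
\[
|Q_h f - \bar f| \le \frac{h^2}{12}\sum_{k=1}^d \|\partial_k^2 f\|_\infty = O(h^2),
\]
and the same argument with $h'$ completes the proof.

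The main obstacle is making the $d$-dimensional step rigorous. The partial trapezoidal sums must be applied to functions whose second derivative in the remaining variable is still controlled. This holds because $T^{(j)}$ for $j\neq k$ is a finite positive combination of point evaluations in other variables, which commutes with $\partial_k^2$, and $I^{(j)}$ commutes with $\partial_k^2$ by differentiation under the integral, justified since $f\in C^2$ on the compact set $\Omega$.

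A secondary point is the anisotropic case. If $h_k$ differs across coordinates, the same argument gives $O(\max_k h_k^2)$, which matches the isotropic statement. The constant is uniform in $h$ and $h'$, so the bound is $O(h^2+h'^2)$ with no relation needed between the two grids.
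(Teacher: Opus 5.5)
Your proposal is correct and takes the same approach as the paper. Both bound each tensor-product trapezoidal estimate against the exact mean at order $O(h^2)$, using the classical one-dimensional composite trapezoidal error applied dimension by dimension, and then finish with the triangle inequality. Your telescoping identity, together with the positivity and unit-mass argument, spells out the tensor-product step that the paper states in one line.
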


\begin{proposition}[Endpoint-inclusive uniform averaging can be first-order]
	\label{prop:standard-inconsistent}
	Let $f \in C^2([0,1])$ and sample it at $m \ge 3$ endpoint-inclusive points $x_j = j/(m-1)$ with spacing $h = 1/(m-1)$. Define
	\begin{align*}
		\hat{\mu}_h^{\mathrm{disc}} = \frac{1}{m}\sum_{j=0}^{m-1} f(x_j),
		\qquad
		\hat{\mu}_h^{\mathrm{trap}} = \frac{h}{2}(f(0) + f(1)) + h \sum_{j=1}^{m-2} f(x_j).
	\end{align*}
	Then we have
	\begin{equation}
		\hat{\mu}_h^{\mathrm{disc}} - \hat{\mu}_h^{\mathrm{trap}}
		= \frac{m-2}{m(m-1)}
		\left(
		\frac{f(0)+f(1)}{2} - \bar{f}_{\mathrm{int}}
		\right)
		= O(h),
	\end{equation}
	where $\bar{f}_{\mathrm{int}} = (1/(m-2))\sum_{j=1}^{m-2} f(x_j)$. If
	\begin{align*}
		\left|
		\frac{f(0)+f(1)}{2} - \bar{f}_{\mathrm{int}}
		\right| \ge c > 0
	\end{align*}
	uniformly along a refinement family, then $|\hat{\mu}_h^{\mathrm{disc}} - \hat{\mu}_h^{\mathrm{trap}}| = \Theta(h)$. Consequently,
	\begin{align*}
		|\hat{\mu}_h^{\mathrm{disc}} - \hat{\mu}_{h'}^{\mathrm{disc}}| = O(h + h').
	\end{align*}
\end{proposition}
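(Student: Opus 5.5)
The identity is a direct algebraic computation, and the asymptotic claims follow from it together with Proposition~\ref{thm:consistency}. Write $S_{\mathrm{int}} = \sum_{j=1}^{m-2} f(x_j) = (m-2)\bar{f}_{\mathrm{int}}$ and $E = f(0)+f(1)$. With $h = 1/(m-1)$ we have
\begin{align*}
	\hat{\mu}_h^{\mathrm{disc}} = \frac{E + S_{\mathrm{int}}}{m},
	\qquad
	\hat{\mu}_h^{\mathrm{trap}} = \frac{E/2 + S_{\mathrm{int}}}{m-1}.
\end{align*}
Subtracting gives
\begin{align*}
	\hat{\mu}_h^{\mathrm{disc}} - \hat{\mu}_h^{\mathrm{trap}}
	= E\Bigl(\frac{1}{m} - \frac{1}{2(m-1)}\Bigr) + S_{\mathrm{int}}\Bigl(\frac{1}{m}-\frac{1}{m-1}\Bigr)
	= \frac{(m-2)E/2 - S_{\mathrm{int}}}{m(m-1)}.
\end{align*}
Substituting $S_{\mathrm{int}} = (m-2)\bar{f}_{\mathrm{int}}$ turns this into $\frac{m-2}{m(m-1)}\bigl(\frac{E}{2}-\bar{f}_{\mathrm{int}}\bigr)$, which is the stated identity. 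The main point to check is the common-denominator step, namely $2(m-1)-m = m-2$.

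For the order claims, note that $\frac{m-2}{m(m-1)} \le \frac{1}{m-1} = h$. Since $f$ is continuous on $[0,1]$, the factor $\bigl|\frac{E}{2}-\bar{f}_{\mathrm{int}}\bigr| \le 2\|f\|_\infty$ is bounded independently of $m$, so the difference is $O(h)$. For the lower bound, $\frac{m-2}{m(m-1)} \ge \frac{h}{3}$ whenever $m\ge 3$, because $3(m-2)\ge m$ is equivalent to $m\ge 3$. If the bracket stays at least $c>0$ along the refinement family, the difference is therefore at least $ch/3$, which gives $\Theta(h)$.

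For the final claim, I would use the triangle inequality through the trapezoidal estimates:
\begin{align*}
	|\hat{\mu}_h^{\mathrm{disc}} - \hat{\mu}_{h'}^{\mathrm{disc}}|
	\le |\hat{\mu}_h^{\mathrm{disc}} - \hat{\mu}_h^{\mathrm{trap}}|
	+ |\hat{\mu}_h^{\mathrm{trap}} - \hat{\mu}_{h'}^{\mathrm{trap}}|
	+ |\hat{\mu}_{h'}^{\mathrm{trap}} - \hat{\mu}_{h'}^{\mathrm{disc}}|.
\end{align*}
The two outer terms are $O(h)$ and $O(h')$ by the identity just proved. The middle term is $O(h^2+h'^2)$ by Proposition~\ref{thm:consistency} with $d=1$; alternatively, both trapezoidal estimates are within $O(h^2)$ and $O(h'^2)$ of $\int_0^1 f$ by the classical trapezoidal error bound for $C^2$ functions. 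Since $h,h'\le 1/2$, the middle term is absorbed, and the total is $O(h+h')$.

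There is no real obstacle here. The only subtlety is that the final bound is an upper bound only: the $\Theta(h)$ mismatches at the two resolutions could partially cancel.
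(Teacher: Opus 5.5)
Your proof is correct and follows essentially the same route as the paper. The same common-denominator computation gives the identity, and the final $O(h+h')$ bound comes from the triangle inequality combined with the $O(h^2)$ trapezoidal error. The paper routes this through $\int_0^1 f$, while you route it through the two trapezoidal estimates, which is equivalent. Your explicit bounds $h/3 \le \frac{m-2}{m(m-1)} \le h$ for $m \ge 3$, together with the $2\|f\|_\infty$ bound on the bracket, make precise the $O(h)$ and $\Theta(h)$ claims that the paper leaves implicit.
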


\begin{theorem}[Output consistency of trapezoidal quadrature-weighted normalization]
	\label{thm:output-consistency}
	Let $u \in C^2(\Omega;\R^C)$, and let $x_h, x_{h'}$ be its exact nodal samples on endpoint-inclusive uniform tensor-product grids with spacings $h$ and $h'$. Let $\mathcal{N}_h$ denote a quadrature-weighted normalization layer with a fixed channel-reduction pattern, whose spatial weights are the corresponding tensor-product trapezoidal weights. Assume the affine parameters $\gamma$ and $\beta$ are resolution-independent and constant over the interpolated spatial axes, for example, as scalar or per-channel affine parameters. Assume the interpolant $P_{h' \to h}$ acts channel-wise on the spatial grid, is linear, preserves spatially constant fields, and satisfies
	\begin{align*}
		\|x_h - P_{h' \to h}x_{h'}\|_{L^\infty} = O(h^2 + h'^2)
	\end{align*}
	and assume the corresponding weighted variance scalars or vectors are bounded below componentwise by $v_{\min} > 0$. Then we have
	\begin{equation}
		\norm{\mathcal{N}_h(x_h) - P_{h' \to h}\mathcal{N}_{h'}(x_{h'})}_{L^2} = O(h^2 + h'^2),
	\end{equation}
	where $P_{h' \to h}$ is the interpolation operator used to compare different resolutions.
\end{theorem}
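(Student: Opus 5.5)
The plan is to reduce output consistency to consistency of the statistics, using Proposition~\ref{thm:consistency} together with a stability estimate for the map $(x,\mu,v)\mapsto \gamma (x-\mu)/\sqrt{v+\varepsilon}+\beta$. Write $\mu_h, v_h$ and $\mu_{h'}, v_{h'}$ for the quadrature-weighted statistics on the two grids. These are scalars for LayerNorm-style reductions and finitely many components otherwise. Because $P_{h'\to h}$ is linear, acts channel-wise, and preserves spatially constant fields, and because $\gamma,\beta,\mu_{h'},v_{h'}$ are constant over the interpolated axes, we can pull them through the interpolant:
\begin{align*}
P_{h'\to h}\mathcal{N}_{h'}(x_{h'}) = \gamma\,\frac{P_{h'\to h}x_{h'} - \mu_{h'}}{\sqrt{v_{h'}+\varepsilon}} + \beta .
\end{align*}
The difference to bound therefore splits as $\gamma\,(x_h - P_{h'\to h}x_{h'})/\sqrt{v_h+\varepsilon}$ plus $\gamma\,(P_{h'\to h}x_{h'}-\mu_{h'})\bigl((v_h+\varepsilon)^{-1/2}-(v_{h'}+\varepsilon)^{-1/2}\bigr)$ plus $-\gamma(\mu_h-\mu_{h'})/\sqrt{v_h+\varepsilon}$.

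The first term is $O(h^2+h'^2)$ in $L^\infty$ by the interpolation hypothesis, since $v_h\ge v_{\min}$. The third term needs $|\mu_h-\mu_{h'}|=O(h^2+h'^2)$. Each $\mu_h$ is a trapezoidal estimate of a fixed $C^2$ function: $u_c$ for per-channel statistics, or $C^{-1}\sum_c u_c$ for LayerNorm-style statistics, the channel sum being finite. Proposition~\ref{thm:consistency} then applies directly. For the second term, note that $|P_{h'\to h}x_{h'}-\mu_{h'}|$ is bounded uniformly in $h'$. This follows from $\|P_{h'\to h}x_{h'}\|_\infty \le \|x_h\|_\infty + O(h^2+h'^2)\le \|u\|_\infty+O(1)$, and from the fact that $\mu_{h'}$ is a convex combination of samples of $u$. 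The map $v\mapsto (v+\varepsilon)^{-1/2}$ is Lipschitz on $[v_{\min},\infty)$ with constant at most $\tfrac12 (v_{\min}+\varepsilon)^{-3/2}$, so the second term reduces to bounding $|v_h-v_{h'}|$.

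The variance step is the one I expect to be the main obstacle, because $v_h$ is centered at the grid-dependent mean $\mu_h$ rather than the exact one. I would use the identity $v_h = \hat{\mu}_h(g) - \mu_h^2$ (with $g=u_c^2$, or $g=C^{-1}\sum_c u_c^2$ in the LayerNorm-style case) instead of working with the centered sum. Then $g\in C^2(\Omega)$, so Proposition~\ref{thm:consistency} gives $|\hat{\mu}_h(g)-\hat{\mu}_{h'}(g)|=O(h^2+h'^2)$. Also $|\mu_h^2-\mu_{h'}^2| \le 2\|u\|_\infty |\mu_h-\mu_{h'}|$, using that $\sum_i w_i$ is normalized, so the means are convex combinations of samples. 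Together these give $|v_h-v_{h'}|=O(h^2+h'^2)$.

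Collecting the three terms gives an $L^\infty$ bound of order $O(h^2+h'^2)$ on the $h$-grid. This passes to the quadrature-weighted $L^2_h$ norm because the weights sum to $|\Omega|=1$, as noted after Definition~\ref{def:consistency}. All constants depend only on $u$ (through $\|u\|_{C^2}$), $\gamma$, $v_{\min}$, $\varepsilon$, $C$ and $d$, not on $h$ or $h'$. Finiteness of the number of channels or groups means the componentwise statements combine without changing the order in $h$.
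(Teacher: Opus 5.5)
Your proposal is correct and takes essentially the same route as the paper. Both arguments commute the spatially constant statistics and affine parameters through $P_{h'\to h}$, then use the same add-and-subtract split with the Lipschitz constant $\tfrac12(v_{\min}+\varepsilon)^{-3/2}$, then Proposition~\ref{thm:consistency} for the means and the identity $v_h=\hat{\mu}_h(g)-\mu_h^2$ for the variances, and finally pass from $L^\infty$ to the normalized quadrature $L^2$ norm. Your explicit choice $g=C^{-1}\sum_c u_c^2$ for LayerNorm-style reductions, and your derivation of the uniform bounds on $P_{h'\to h}x_{h'}$ and $\mu_{h'}$ (which the paper simply posits as $M$ and $M_\mu$), are small clarifications rather than a different argument.
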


\begin{proposition}[Quadrature collapses to uniform weights on periodic fast Fourier transform (FFT) grids]
	\label{prop:periodic}
	Let $\Omega = [0,1]^d$ be a periodic and sample a field on the corresponding periodic grid
	\begin{align*}
		\left\{\left(\frac{j_1}{n_1}, \ldots, \frac{j_d}{n_d}\right) : 0 \leq j_k \leq n_k - 1\right\},
	\end{align*}
	that is, without duplicating the endpoint. Then the composite trapezoidal weights are uniform. Each spatial point receives a weight $\prod_{k=1}^d n_k^{-1}$. Consequently, for any fixed reduction axes, the quadrature-weighted statistics are equal to the uniformly weighted statistics at every resolution. In particular, the weighted LayerNorm statistics exactly equal the LayerNorm statistics, and BlendQuadNorm collapses to LayerNorm on such grids.
\end{proposition}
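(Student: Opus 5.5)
The plan is to compute the composite trapezoidal weights on a periodic grid in one dimension, pass to $d$ dimensions by the tensor-product structure, and then show that every quadrature-weighted statistic reduces to its uniformly weighted counterpart.

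\textbf{One dimension.} Take the periodic grid $\{j/n : 0 \le j \le n-1\}$ and write the composite trapezoidal rule over one full period $[0,1]$ using the $n+1$ nodes $0, 1/n, \ldots, 1$. The endpoint nodes receive weight $h/2$ and the interior nodes receive weight $h$, where $h = 1/n$. Periodicity identifies the node $1$ with the node $0$, so their two half-weights merge into a single weight $h/2 + h/2 = h$ at node $0$. Every one of the $n$ distinct periodic nodes therefore carries weight exactly $1/n$, and these weights sum to $1 = |[0,1]|$.

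\textbf{Passing to $d$ dimensions.} The tensor-product weight at the multi-index $(j_1,\ldots,j_d)$ is the product of the one-dimensional weights, namely $\prod_k n_k^{-1}$. This value does not depend on the multi-index, so the weights are uniform, as the proposition claims.

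\textbf{Collapse of the statistics.} The weighted moments in the QuadNorm definition and in the WLN formulas are invariant under rescaling all weights by a common positive constant, since they are ratios of the form $\sum_i w_i(\cdot)/\sum_i w_i$. Constant weights therefore give
\begin{align*}
	\mu_c = \frac{1}{\prod_k n_k}\sum_{\mathbf{i}} x_c(\mathbf{i}),
\end{align*}
and the analogous identity holds for the variance. This argument is the same for any fixed choice of reduction axes: multiplying the uniform spatial weights by uniform channel weights leaves the combined weights uniform. Hence $\mu_{\text{WLN}} = \mu_{\text{LayerNorm}}$ and $v_{\text{WLN}} = v_{\text{LayerNorm}}$.

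Substituting these identities into \eqref{eq:blend-mu} and \eqref{eq:blend-var} gives $\mu_b = \mu_{\text{LayerNorm}}$. For the variance, the cross term $\alpha(1-\alpha)(\mu_{\text{LayerNorm}}-\mu_{\text{WLN}})^2$ vanishes, so $v_b = v_{\text{LayerNorm}}$. Equation \eqref{eq:blend-norm} is then exactly LayerNorm, for every $\alpha$ and every resolution.

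\textbf{Main obstacle.} The argument contains no real technical difficulty. The only point needing care is the convention for the periodic trapezoidal rule, namely justifying that the duplicated endpoint is folded onto node $0$. An equivalent justification is to write the rule as the average of the left and right Riemann sums; under periodicity these two sums coincide and each equals the uniform sum.
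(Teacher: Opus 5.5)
Your proof is correct and follows the same route as the paper, which simply observes that on the periodic grid every node owns a cell of volume $\prod_k n_k^{-1}$ with no distinguished boundary points, so the trapezoidal weights are uniform and the weighted statistics reduce to uniform averages. Your version is more explicit than the paper's: it spells out how the two endpoint half-weights fold onto node $0$, that the weighted ratios are invariant under a common rescaling of the weights, and that the cross term in $v_b$ vanishes. The paper leaves all three of these points implicit.
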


Thus, on periodic FFT grids, any difference between LayerNorm and a pure quadrature variant can only come from a different channel-reduction pattern; if the reduction axes match LayerNorm, the pure quadrature statistics coincide exactly with the LayerNorm statistics.

\begin{proposition}[Endpoint correction induces an $O(h)$ boundary perturbation]
	\label{prop:spectral-mismatch}
	Let $u \in C^1([0,1])$ and sample it on an endpoint-inclusive uniform grid $x_j = j/(m-1)$, $j=0,\dots,m-1$, with spacing $h = 1/(m-1)$. Assume $m \ge 3$. Let $\hat{\mu}_h^{\text{trap}}$ and $\hat{\mu}_h^{\text{unif}}$ denote the trapezoidal and uniform averages on this grid. Then
	\begin{equation}
		\hat{\mu}_h^{\text{trap}} - \hat{\mu}_h^{\text{unif}}
		= \frac{m-2}{m(m-1)}
		\left(
		\bar{u}_{\mathrm{int}} - \frac{u(0) + u(1)}{2}
		\right)
		= h\left(
		\bar{u}_{\mathrm{int}} - \frac{u(0) + u(1)}{2}
		\right) + O(h^2),
	\end{equation}
	where $\bar{u}_{\mathrm{int}} = (1/(m-2))\sum_{j=1}^{m-2} u(x_j)$. Consequently,
	\begin{align*}
		\left|\hat{\mu}_h^{\text{trap}} - \hat{\mu}_h^{\text{unif}}\right| = O(h).
	\end{align*}
	Moreover,
	\begin{align*}
		\left|\hat{\mu}_h^{\text{trap}} - \hat{\mu}_h^{\text{unif}}\right| = \Theta(h)
	\end{align*}
	whenever
	\begin{align*}
		\left|\bar{u}_{\mathrm{int}} - (u(0) + u(1))/2\right| = \Theta(1).
	\end{align*}
\end{proposition}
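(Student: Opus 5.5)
The plan is a direct algebraic computation followed by a Taylor-type estimate. Write $S_{\mathrm{end}} = u(0)+u(1)$ and $S_{\mathrm{int}} = \sum_{j=1}^{m-2} u(x_j) = (m-2)\bar{u}_{\mathrm{int}}$. Then
\[
\hat{\mu}_h^{\text{unif}} = \frac{S_{\mathrm{end}} + S_{\mathrm{int}}}{m},
\qquad
\hat{\mu}_h^{\text{trap}} = \frac{h}{2}S_{\mathrm{end}} + hS_{\mathrm{int}},
\]
with $h = 1/(m-1)$. Subtracting and collecting coefficients gives the coefficient of $S_{\mathrm{int}}$ as $\frac{1}{m-1}-\frac{1}{m} = \frac{1}{m(m-1)}$. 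The coefficient of $S_{\mathrm{end}}$ is $\frac{1}{2(m-1)} - \frac{1}{m} = -\frac{m-2}{2m(m-1)}$.

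Substituting $S_{\mathrm{int}} = (m-2)\bar{u}_{\mathrm{int}}$ then yields exactly
\[
\frac{m-2}{m(m-1)}\left(\bar{u}_{\mathrm{int}} - \frac{u(0)+u(1)}{2}\right).
\]
This is the negative of the identity in Proposition~\ref{prop:standard-inconsistent}, so one may alternatively just cite that proposition. Its algebra uses only the sample values, so no regularity is needed for this step.

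Next comes the expansion of the prefactor in $h$. Since $m = 1 + 1/h$, we have $\frac{m-2}{m(m-1)} = h\cdot\frac{m-2}{m} = h\,\frac{1-h}{1+h} = h - 2h^2 + O(h^3)$. Hence the difference equals $h\,D_h + O(h^2)|D_h|$, where $D_h := \bar{u}_{\mathrm{int}} - (u(0)+u(1))/2$. Continuity of $u$ on the compact interval gives $|D_h| \le 2\|u\|_{\infty}$ uniformly in $m$. This yields the stated $h D_h + O(h^2)$ form and the $O(h)$ bound.

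The $\Theta(h)$ claim follows because $\frac{m-2}{m(m-1)} \ge \frac{h}{3}$ for $m\ge 3$, since $(m-2)/m \ge 1/3$. So if $|D_h| \ge c>0$ along the refinement family, the difference is at least $ch/3$; the upper bound is already done.

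The $C^1$ hypothesis is only needed if one wants to identify $D_h$ asymptotically. In that case $\bar{u}_{\mathrm{int}} \to \int_0^1 u$ at rate $O(h)$ by a Riemann-sum/mean-value argument, so $D_h = \int_0^1 u - (u(0)+u(1))/2 + O(h)$. The one delicate point is bookkeeping: the $O(h^2)$ remainder must absorb the factor $D_h$, which uses its uniform boundedness. Otherwise the argument is routine.
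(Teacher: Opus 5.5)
Your proof is correct and takes essentially the same route as the paper: you subtract the two explicit averages, written in terms of the endpoint sum and $\bar{u}_{\mathrm{int}}$, and then expand the prefactor $\frac{m-2}{m(m-1)} = h\frac{1-h}{1+h} = h + O(h^2)$. Your bounds $|D_h| \le 2\|u\|_\infty$ and $\frac{m-2}{m(m-1)} \ge h/3$ make explicit the uniformity that the paper leaves implicit when it says the bracket is $O(1)$ and that the estimate sharpens to $\Theta(h)$.
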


At the network level, Appendix~\ref{app:proof-transfer} states a conditional transfer-error propagation bound showing that normalization contributes an $O(L \cdot \max(h^p,h'^p))$ term to the cross-resolution discrepancy. This bound yields the gap- and depth-scaling hypotheses tested below.

\section{Experiments}
\label{sec:experiments}
We evaluate QuadNorm and BlendQuadNorm in a large suite of experiments across various settings spanning multiple PDE families and architecture variants. These experiments include FNO, Galerkin Transformer, and Transolver, with sizes up to about 9.6M parameters; each setting is tested using multiple seeds. The baseline normalization families include LayerNorm, InstanceNorm, GroupNorm, and RMSNorm, along with no normalization. Appendix~\ref{app:supporting-experiments} reports normalization baselines, dataset descriptions, training protocols, the empirical error-comparison ablation, the Transolver Helmholtz and multi-resolution studies, the model-scaling analysis, and additional supporting ablations. It opens with a short roadmap of the full experimental matrix, and Appendix Table~\ref{tab:key-results-summary} collects a compact headline summary. Additional appendix studies include architecture-dependent $\alpha$ sweeps, bootstrap confidence summaries, moderate-gap nonspectral transfer, Darcy architecture transfer, Transolver $\alpha$-sensitivity analyses, Poisson--Dirichlet follow-up studies, extreme-gap studies with a 4$\times$ gap on non-Darcy PDEs, and further ablations.

\subsection{Extreme Cross-Resolution Transfer}
\label{sec:exp-extreme-crossres}

\begin{table}[t]
	\centering
	\caption{Extreme cross-resolution transfer on Darcy, reporting the relative $L^2$ error in percent as the mean $\pm$ 95\% confidence interval (CI) over 10 seeds. Models are trained at $32^2$ and evaluated at $32^2$, $64^2$, $128^2$, and $256^2$, which gives a refinement ratio of up to 8. Bold indicates the best method or a method statistically tied with the best.}
	\label{tab:extreme-crossres}
	\begin{tabular}{lcccc}
		\toprule
		\tableheadrulebelow
		\rowcolor{tableheadgray}
		Method        & $32^2$ (native)          & $32^2\to64^2$            & $32^2\to128^2$           & $32^2\to256^2$           \\
		\tableheadruleabove
		\midrule
		None          & 3.49 $\pm$ 0.08          & 7.90 $\pm$ 0.18          & 11.15 $\pm$ 0.26         & 12.76 $\pm$ 0.29         \\
		LayerNorm     & \textbf{3.26 $\pm$ 0.05} & 6.87 $\pm$ 0.20          & 9.62 $\pm$ 0.29          & 10.99 $\pm$ 0.34         \\
		InstanceNorm  & 4.27 $\pm$ 0.06          & 7.77 $\pm$ 0.09          & 10.49 $\pm$ 0.11         & 11.87 $\pm$ 0.11         \\
		GroupNorm     & 3.36 $\pm$ 0.05          & 7.02 $\pm$ 0.15          & 9.82 $\pm$ 0.23          & 11.22 $\pm$ 0.27         \\
		RMSNorm       & 3.59 $\pm$ 0.08          & 7.31 $\pm$ 0.13          & 10.13 $\pm$ 0.20         & 11.55 $\pm$ 0.23         \\
		QuadNorm      & 4.35 $\pm$ 0.05          & \textbf{5.61 $\pm$ 0.14} & \textbf{6.60 $\pm$ 0.20} & \textbf{7.10 $\pm$ 0.23} \\
		BlendQuadNorm & \textbf{3.27 $\pm$ 0.05} & 6.63 $\pm$ 0.18          & 9.24 $\pm$ 0.26          & 10.54 $\pm$ 0.30         \\
		\bottomrule
	\end{tabular}
\end{table}

We train the FNO at $32^2$ and evaluate it up to $256^2$, which is an 8-fold refinement (Table~\ref{tab:extreme-crossres}). At this 8$\times$ gap, QuadNorm achieves $7.10\%$, a 35\% relative reduction in error from LayerNorm's $10.99\%$; the exact bootstrap confidence interval for this comparison is reported in Appendix Table~\ref{tab:bootstrap-cis}. This extreme-gap result confirms that the benefit grows with the gap, consistent with the conditional transfer bound presented in Appendix~\ref{app:proof-transfer} and the gap-scaling trend summarized in Figure~\ref{fig:scaling}(a). Here, QuadNorm sacrifices native accuracy for superior transfer, while BlendQuadNorm recovers native accuracy with a weaker transfer improvement, again as shown in Table~\ref{tab:extreme-crossres}. The ratio of $\Delta_{\text{LayerNorm}}$ to $\Delta_{\text{QuadNorm}}$ remains stable at roughly 2.8 to 2.9 times across all gaps from 2$\times$ to 8$\times$ in Table~\ref{tab:extreme-crossres}, consistent with the expected difference between first-order and second-order scaling of the statistic mismatch. Figure~\ref{fig:scaling}(a) summarizes the same multiplicative separation, although the table measures end-to-end prediction error rather than the statistic order directly. A complementary deep 8-layer FNO transfer study appears in Table~\ref{tab:deep-crossres}.

\subsection{Deep Cross-Resolution Transfer}
\label{sec:exp-deep-crossres}

\begin{table}[t]
	\centering
	\caption{Deep model cross-resolution transfer on Darcy, reporting the relative $L^2$ error in percent as the mean $\pm$ 95\% CI over 10 seeds. The model is an 8-layer FNO with a width of 48 and 12 modes, trained at $64^2$ and evaluated at $64^2$, $128^2$, and $256^2$. Under the conditional transfer bound stated in Appendix~\ref{app:proof-transfer}, the normalization-induced transfer discrepancy can accumulate at most linearly with depth $L$.}
	\label{tab:deep-crossres}
	\begin{tabular}{lccc}
		\toprule
		\tableheadrulebelow
		\rowcolor{tableheadgray}
		Method        & $64^2$ (native)          & $64^2\to128^2$           & $64^2\to256^2$           \\
		\tableheadruleabove
		\midrule
		None          & \textbf{2.30 $\pm$ 0.03} & 4.19 $\pm$ 0.08          & 5.60 $\pm$ 0.11          \\
		LayerNorm     & \textbf{2.26 $\pm$ 0.03} & \textbf{3.89 $\pm$ 0.07} & 5.14 $\pm$ 0.09          \\
		InstanceNorm  & 3.54 $\pm$ 0.06          & 4.78 $\pm$ 0.04          & 5.89 $\pm$ 0.04          \\
		GroupNorm     & 2.46 $\pm$ 0.05          & 4.03 $\pm$ 0.04          & 5.28 $\pm$ 0.06          \\
		RMSNorm       & 2.97 $\pm$ 0.05          & 4.40 $\pm$ 0.03          & 5.60 $\pm$ 0.04          \\
		QuadNorm      & 3.56 $\pm$ 0.07          & \textbf{3.90 $\pm$ 0.07} & \textbf{4.17 $\pm$ 0.07} \\
		BlendQuadNorm & \textbf{2.26 $\pm$ 0.04} & \textbf{3.86 $\pm$ 0.09} & 5.09 $\pm$ 0.13          \\
		\bottomrule
	\end{tabular}
\end{table}

The conditional transfer bound in Appendix~\ref{app:proof-transfer} gives a conditional linear-in-depth upper bound for the normalization-induced transfer discrepancy. Figure~\ref{fig:scaling}(b) summarizes the depth-scaling comparison. The degradation ratio between LayerNorm and QuadNorm grows from about 2.9 times in the 4-layer extreme cross-resolution setting to 4.7 times in the 8-layer deep cross-resolution setting. In Table~\ref{tab:deep-crossres}, QuadNorm's degradation at $64 \to 256$ is only $0.61\%$, compared with $2.88\%$ for LayerNorm, consistent with the amplified depth effect.

\begin{figure}[t]
	\centering
	\includegraphics[width=\textwidth]{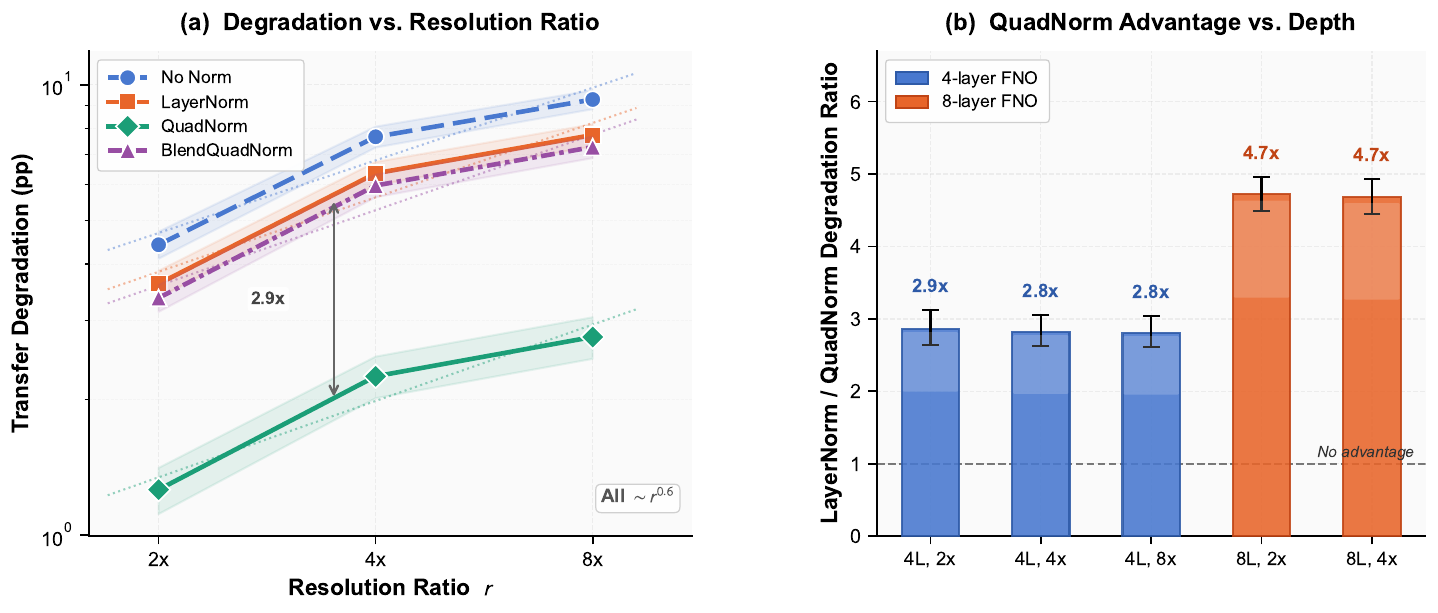}
	\caption{Scaling behavior of QuadNorm's advantage. The first plot shows transfer degradation, measured in percentage points above native-resolution error, plotted against the resolution ratio on log-log axes. All methods scale approximately as $\sim r^{0.6}$, but QuadNorm has a 2.9-fold lower intercept than LayerNorm, confirming a persistent multiplicative advantage consistent with Theorem~\ref{thm:transfer-bound}. The second plot shows the degradation ratio between LayerNorm and QuadNorm against depth. This advantage amplifies from about 2.9 times at 4 layers to about 4.7 times at 8 layers, consistent with the conditional linear-in-$L$ dependence of the normalization-induced term in Theorem~\ref{thm:transfer-bound}.}
	\label{fig:scaling}
\end{figure}

\paragraph{Architecture-dependent $\alpha$ choice.} The $\alpha$-sweep study on nonspectral, nonperiodic settings, reported in Appendix Table~\ref{tab:alpha-matrix}, favors lower $\alpha$, often the pure quadrature endpoint, while BlendQuadNorm remains the conservative default for the Darcy setting using FNO.

\paragraph{Extended baselines and supporting experiments.} Spectral-domain baselines such as SpectralNorm, resolution-invariant spectral band normalization (RI-BandNorm), and QuadBandNorm fail catastrophically under resolution transfer, exceeding $97\%$ error at $256^2$ in Appendix Table~\ref{tab:extended-baselines}; additional supporting experiments appear in Appendix~\ref{app:supporting-experiments}. Additional experiments probe model-scale behavior, Poisson--Dirichlet, and consistent-kernel scaling. In these supporting studies, fixed $\alpha = 0.3$ outperforms learned per-layer $\alpha$; the normalization hierarchy persists at scale.

\subsection{Nonperiodic Multi-Architecture Transfer}
\label{sec:exp-multiarch-nonperiodic}

\begin{table}[t]
	\centering
	\caption{Galerkin Transformer on three nonperiodic PDEs with training at $32^2$: cavity flow, elasticity, and Poisson--Dirichlet. The normalization advantage consistently transfers across different architectures on nonperiodic domains. Bold indicates the best method within each PDE--resolution column.}
	\label{tab:multiarch-nonperiodic}
	\begin{tabular}{llccc}
		\toprule
		\tableheadrulebelow
		\rowcolor{tableheadgray}
		PDE                         & Method        & $32^2$ (native)          & $64^2$ (2$\times$)          & $128^2$ (4$\times$)         \\
		\tableheadruleabove
		\midrule
		\multirow{4}{*}{\shortstack[c]{Cavity                                                                                          \\flow}} & None & 1.63 $\pm$ 0.36 & 2.89 $\pm$ 0.31 & 3.38 $\pm$ 0.39 \\
		                            & LayerNorm     & 1.85 $\pm$ 0.22          & 4.73 $\pm$ 1.15           & 5.41 $\pm$ 1.17           \\
		                            & QuadNorm      & 1.41 $\pm$ 1.35          & 4.44 $\pm$ 0.76           & 6.22 $\pm$ 0.78           \\
		                            & BlendQuadNorm & \textbf{1.30 $\pm$ 0.23} & \textbf{2.32 $\pm$ 0.35}  & \textbf{2.89 $\pm$ 0.54}  \\
		\midrule
		\multirow{4}{*}{Elasticity} & None          & 11.40 $\pm$ 1.03         & 11.99 $\pm$ 0.95          & 12.56 $\pm$ 0.87          \\
		                            & LayerNorm     & 12.43 $\pm$ 0.50         & 12.81 $\pm$ 0.49          & 13.12 $\pm$ 0.52          \\
		                            & QuadNorm      & 10.03 $\pm$ 1.06         & 11.25 $\pm$ 1.36          & 12.54 $\pm$ 1.68          \\
		                            & BlendQuadNorm & \textbf{9.72 $\pm$ 0.85} & \textbf{10.00 $\pm$ 0.95} & \textbf{10.21 $\pm$ 1.04} \\
		\midrule
		\multirow{4}{*}{\shortstack[c]{Poisson--                                                                                       \\Dirichlet}} & None & 10.54 $\pm$ 0.41 & 11.05 $\pm$ 0.44 & 11.56 $\pm$ 0.57 \\
		                            & LayerNorm     & 11.41 $\pm$ 0.87         & 11.80 $\pm$ 0.84          & 12.14 $\pm$ 0.86          \\
		                            & QuadNorm      & \textbf{8.77 $\pm$ 0.28} & 9.99 $\pm$ 0.49           & 11.21 $\pm$ 0.89          \\
		                            & BlendQuadNorm & 8.95 $\pm$ 0.52          & \textbf{9.24 $\pm$ 0.35}  & \textbf{9.47 $\pm$ 0.23}  \\
		\bottomrule
	\end{tabular}
\end{table}

Table~\ref{tab:multiarch-nonperiodic} evaluates the Galerkin Transformer on cavity flow, elasticity, and Poisson--Dirichlet. BlendQuadNorm achieves the lowest degradation across all three nonperiodic PDEs and attains the best native accuracy on cavity flow and elasticity in Table~\ref{tab:multiarch-nonperiodic}. On Poisson--Dirichlet, QuadNorm is slightly better natively at $8.77\%$, compared with BlendQuadNorm at $8.95\%$, also reported in Table~\ref{tab:multiarch-nonperiodic}. In this nonspectral setting, the gains from quadrature-weighted statistics persist, supporting the interpretation that the FNO reversal arises from its Fourier basis and endpoint treatment. The Darcy benchmark results presented below complement this synthetic nonperiodic evidence with official Darcy studies for FNO, Galerkin, and Transolver under stronger training recipes. Appendix~\ref{sec:exp-scaling} provides additional ablations.

\subsection{Darcy Benchmark Studies}
\label{sec:exp-real-darcy-studies}
\paragraph{Comprehensive Darcy Benchmark Evaluation.}\label{sec:exp-real-comprehensive} We evaluate FNO, Galerkin, and Transolver on the official FNO Darcy benchmark \citep{li2021fno}. The full protocol for this stronger benchmark study is given in Appendix~\ref{sec:exp-real-summary}. Table~\ref{tab:real-darcy} reports the resulting native and cross-resolution performance. On Darcy, which is nonperiodic, QuadNorm reduces cross-resolution degradation across all architectures shown in Table~\ref{tab:real-darcy}. At a 4$\times$ gap, the improvement is $32\%$ for FNO, with $3.40\%$ compared with $4.99\%$, and $14\%$ for Galerkin, with $5.54\%$ compared with $6.43\%$. Transolver is nearly resolution-invariant under QuadNorm, with performance changing from $4.24\%$ to $4.46\%$, corresponding to only $5\%$ degradation compared with $38\%$ for LayerNorm. BlendQuadNorm stays near LayerNorm at native resolution for FNO and Galerkin, while Transolver also improves natively. These main-table results on a standard benchmark with a strong training recipe confirm that the benefit is a genuine architectural property, not an artifact of weak baselines.

Appendix Table~\ref{tab:real-benchmark-best-summary} summarizes the best improvements of either QuadNorm or BlendQuadNorm over LayerNorm across the same Darcy benchmark configurations, and Appendix~\ref{sec:exp-real-summary} provides detailed per-resolution tables for the Darcy benchmark cross-resolution transfer study, the Transolver study, and the setting with a 13$\times$ gap (Tables~\ref{tab:real-darcy-crossres}, \ref{tab:transolver-darcy}, and \ref{tab:real-darcy-extreme}). These appendix results confirm the same pattern at a finer granularity: QuadNorm's advantage grows with the resolution gap on the official benchmark, reaching a $42\%$ gain for FNO at $32^2 \to 421^2$ in the extreme-gap setting, and keeps Transolver nearly resolution-invariant.

\begin{table}[t]
	\centering
	\caption{Relative $L^2$ error in percent on the official FNO Darcy benchmark under the comprehensive Darcy benchmark protocol summarized in Appendix~\ref{sec:exp-real-summary}.}
	\label{tab:real-darcy}
	\small
	\begin{tabular}{lccc}
		\toprule
		\tableheadrulebelow
		\rowcolor{tableheadgray}
		Method        & $64^2 \to 64^2$          & $64^2 \to 128^2$         & $64^2 \to 256^2$         \\
		\tableheadruleabove
		\midrule
		\multicolumn{4}{l}{FNO}                                                                        \\
		\midrule
		LayerNorm     & \textbf{2.04 $\pm$ 0.04} & 3.72 $\pm$ 0.08          & 4.99 $\pm$ 0.11          \\
		QuadNorm      & 2.66 $\pm$ 0.04          & \textbf{3.08 $\pm$ 0.08} & \textbf{3.40 $\pm$ 0.10} \\
		BlendQuadNorm & 2.04 $\pm$ 0.04          & 3.67 $\pm$ 0.11          & 4.91 $\pm$ 0.17          \\
		No Norm       & 2.15 $\pm$ 0.06          & 3.78 $\pm$ 0.16          & 5.02 $\pm$ 0.22          \\
		\midrule
		\multicolumn{4}{l}{Galerkin}                                                                   \\
		\midrule
		LayerNorm     & \textbf{4.93 $\pm$ 0.08} & 5.67 $\pm$ 0.08          & 6.43 $\pm$ 0.08          \\
		QuadNorm      & 5.13 $\pm$ 0.11          & \textbf{5.34 $\pm$ 0.13} & \textbf{5.54 $\pm$ 0.15} \\
		BlendQuadNorm & 4.98 $\pm$ 0.03          & 5.71 $\pm$ 0.02          & 6.46 $\pm$ 0.05          \\
		No Norm       & 6.00 $\pm$ 0.25          & 6.70 $\pm$ 0.23          & 7.46 $\pm$ 0.21          \\
		\midrule
		\multicolumn{4}{l}{Transolver}                                                                 \\
		\midrule
		LayerNorm     & 4.62 $\pm$ 0.10          & 5.51 $\pm$ 0.15          & 6.38 $\pm$ 0.17          \\
		QuadNorm      & \textbf{4.24 $\pm$ 0.09} & \textbf{4.36 $\pm$ 0.08} & \textbf{4.46 $\pm$ 0.07} \\
		BlendQuadNorm & 4.28 $\pm$ 0.25          & 5.14 $\pm$ 0.27          & 5.97 $\pm$ 0.27          \\
		No Norm       & 5.00 $\pm$ 0.06          & 5.78 $\pm$ 0.15          & 6.57 $\pm$ 0.17          \\
		\bottomrule
	\end{tabular}
\end{table}

\section{Conclusion}
\label{sec:conclusion}
We identify normalization layers as an overlooked source of discretization dependence in neural operators, proposing a quadrature normalization family consisting of QuadNorm and BlendQuadNorm. The pure quadrature members of this family have moments that are provably $O(h^2)$-consistent on endpoint-inclusive uniform grids. This provides a theoretical basis for the faster decay of normalization-induced mismatch compared with standard normalization statistics, whose endpoint-inclusive uniform-grid error is only $O(h)$. The theory predicts that normalization-induced transfer mismatch grows with resolution gap and depth. Our experiments confirm this trend and show that QuadNorm sharply reduces cross-resolution degradation, especially for nonperiodic PDEs, nonspectral architectures, larger gaps, and larger models. In practice, BlendQuadNorm is the safest drop-in default with no extra hyperparameters when deployment conditions are uncertain, while lower-$\alpha$ settings or QuadNorm are better choices when discretization consistency is the dominant bottleneck in nonspectral, nonperiodic regimes.

\bibliographystyle{plainnat}
\bibliography{references}

\clearpage
\appendix
\startcontents[appendix]
\phantomsection
\pdfbookmark[1]{Appendix Table of Contents}{app:toc}
\section*{Appendix Table of Contents}
\printcontents[appendix]{l}{1}{\setcounter{tocdepth}{2}}

\clearpage
\section{Notation}
\label{app:notation}

\begin{table*}[t!]
	\centering
	\caption{List of notations used throughout the paper.}
	\label{tab:notation}
	\footnotesize
	\resizebox{\textwidth}{!}{%
		\begin{tabular}{ll}
			\toprule
			\tableheadrulebelow
			\rowcolor{tableheadgray}
			Symbol                                                                                  & Meaning                                                                                  \\
			\tableheadruleabove
			\midrule
			$\Omega = [0,1]^d, d, |\Omega|$                                                         & Domain, dimension, and measure.                                                          \\
			$u : \Omega \to \R^C$                                                                   & Continuous $C$-channel field.                                                            \\
			$x, x_h, x_{h'}$                                                                        & Grid samples of $u$ at spacings $h$ and $h'$; $x$ has shape $(B, C, H, W)$.              \\
			$B, C, H, W, n_k$                                                                       & Batch size, channels, height, width, and grid size along axis $k$.                       \\
			$h_k, h$                                                                                & Grid spacing along axis $k$ and the common isotropic spacing.                            \\
			$\mathbf{i}, \mathbf{r}_{\mathbf{i}}$                                                   & Spatial multi-index and grid point.                                                      \\
				$w_i, w_{ij}, \omega_{\mathbf{i}}^{(h)}$                                                & Control-volume weights; $\omega_{\mathbf{i}}^{(h)}$ is the $L_h^2$ comparison weight.    \\
			$\hat{\mu}^{\mathrm{disc}}, \hat{\sigma}^{2,\mathrm{disc}}$                             & Uniform discrete mean and variance from standard normalization.                          \\
			$\hat{\mu}_h^{\mathrm{disc}}, \hat{\mu}_h^{\mathrm{trap}}, \hat{\mu}_h^{\mathrm{unif}}$ & Uniform, trapezoidal, and endpoint-inclusive uniform averages.                           \\
			$\mu_c, \sigma_c^2, \bar{u}_c$                                                          & Quadrature-weighted channel-$c$ mean, variance and continuous spatial mean.              \\
			$\mu_{\text{LayerNorm}}, v_{\text{LayerNorm}}$                                          & Standard LayerNorm mean and variance over the reduction axes.                            \\
			$\mu_{\text{WLN}}, v_{\text{WLN}}, \mu_b, v_b$                                          & Quadrature-weighted LayerNorm stats and blended stats in BlendQuadNorm.                  \\
			$\alpha, \gamma, \beta, \varepsilon$                                                    & Blend coefficient, affine scale, affine shift, and stabilization constant.               \\
			$\mathcal{N}_h, P_{h' \to h}$                                                           & Normalization layer at resolution $h$ and interpolation from $h'$ to $h$.                \\
			$\|\cdot\|_{L_h^2}, \|\cdot\|_{L^2}$                                                    & Discrete quadrature norm on the $h$-grid and its shorthand.                              \\
			$p, \delta_2, \delta_p$                                                                 & Consistency order; $\delta_2 = \max(h^2,h'^2)$ and $\delta_p = \max(h^p,h'^p)$.          \\
			$\hat{\mu}_h(f), \hat{\mu}_{h'}(f)$                                                     & Trapezoidal estimates of the spatial average of $f$ on the two grids.                    \\
			$\mathcal{G}_{\theta,h}, \theta$                                                        & Neural operator with parameters $\theta$ evaluated at spacing $h$.                       \\
				$f_{0,h}, f_{\ell,h}, f_{L+1,h}$                                                        & Lift map, processor-block map $\ell$, and projection map in the transfer decomposition.  \\
			$T_{\ell,h}, \sigma, L, \ell$                                                           & Resolution-invariant part of block $\ell$, activation, total layers, and layer index.    \\
			$L_{T,\ell}, L_{N,\ell}, L_\sigma$                                                      & Lipschitz or stability constants for $T_{\ell,h}$, $\mathcal{N}_{\ell,h}$, and $\sigma$. \\
			$C_{T,\ell}$, $C_{N,\ell}$, $C_{\sigma,\ell}$, $C_{\mathrm{net}}$                       & Resolution-mismatch constants in layerwise and network transfer bounds.                  \\
			$M_\ell, A$                                                                             & Block amplification factor and a uniform bound on cumulative products.                   \\
			$\Delta_{\text{method}}$                                                                & Cross-resolution degradation relative to native-resolution error.                        \\
			\bottomrule
		\end{tabular}
	}
\end{table*}

\paragraph{Quadrature in This Paper.} In this paper, quadrature means approximating a spatial integral from discrete field samples by assigning each sample a deterministic numerical-integration weight that reflects the physical measure it represents. On endpoint-inclusive uniform grids, these are trapezoidal weights; on nonuniform meshes, they are control-volume weights. Thus, the term ``quadrature-weighted'' normalization means that means and variances are computed with these spatial weights rather than with uniform point averaging.

\section{Theoretical Details and Proofs}
\label{app:theory-proofs}
\label{app:proof-transfer}

\begin{proof}[Proof of Proposition~\ref{thm:consistency}]
	For the 1D composite trapezoidal rule on an endpoint-inclusive uniform grid and $f \in C^2([0,1])$, there is a constant $C_f$ such that
	\begin{align*}
		\left|\hat{\mu}_h(f) - \int_0^1 f(t)dt\right| \le C_f h^2.
	\end{align*}
	Applying the rule dimension-by-dimension on the tensor-product grid gives
	\begin{align*}
		\left|\hat{\mu}_h(f) - |\Omega|^{-1}\int_\Omega f(\mathbf{r})d\mathbf{r}\right| = O(h^2),
	\end{align*}
	and the same estimate holds with $h'$ in place of $h$. The claim follows from the triangle inequality.
\end{proof}

\begin{proof}[Proof of Proposition~\ref{prop:standard-inconsistent}]
	Write $u_j = f(x_j)$ and $\bar{u}_{\mathrm{int}} = (1/(m-2))\sum_{j=1}^{m-2} u_j$. Then
	\begin{align*}
		\hat{\mu}_h^{\mathrm{disc}} = \frac{u_0 + u_{m-1} + (m-2)\bar{u}_{\mathrm{int}}}{m},
		\qquad
		\hat{\mu}_h^{\mathrm{trap}} = \frac{u_0 + u_{m-1}}{2(m-1)} + \frac{m-2}{m-1}\bar{u}_{\mathrm{int}}.
	\end{align*}
	Subtracting yields the displayed identity. Since the composite trapezoidal rule satisfies $\hat{\mu}_h^{\mathrm{trap}} = \int_0^1 f(t)dt + O(h^2)$ for $f \in C^2([0,1])$, we obtain $|\hat{\mu}_h^{\mathrm{disc}} - \int_0^1 f(t)dt| = O(h)$ and therefore $|\hat{\mu}_h^{\mathrm{disc}} - \hat{\mu}_{h'}^{\mathrm{disc}}| = O(h + h')$.
\end{proof}

\begin{proof}[Proof of Theorem~\ref{thm:output-consistency}]
	Let $(\mu_h,v_h)$ and $(\mu_{h'},v_{h'})$ denote the scalar or vector weighted statistics computed on the two grids. Write $\mathcal{N}(x;\mu,v) = \gamma \odot ((x-\mu)/\sqrt{v+\varepsilon}) + \beta$. Because $\mu_{h'}$ and $v_{h'}$ are spatial constants, $\gamma$ and $\beta$ are independent of spatial position, and $P_{h' \to h}$ acts channel-wise, is linear, and preserves spatially constant fields,
	\begin{align*}
		P_{h' \to h}\mathcal{N}_{h'}(x_{h'})
		=
		\mathcal{N}\left(P_{h' \to h}x_{h'};\mu_{h'},v_{h'}\right).
	\end{align*}
	For any two triples $(x_1,\mu_1,v_1)$ and $(x_2,\mu_2,v_2)$ with $v_1, v_2 \geq v_{\min} > 0$ componentwise, $\|x_1\|_\infty,\|x_2\|_\infty \leq M$, and $\|\mu_1\|_\infty,\|\mu_2\|_\infty \leq M_\mu$, adding and subtracting $\gamma \odot (x_2-\mu_2)/\sqrt{v_1+\varepsilon}$ gives
	\begin{equation}
		\begin{aligned}
			\|\mathcal{N}(x_1; \mu_1, v_1) - \mathcal{N}(x_2; \mu_2, v_2)\|_\infty
			 & \leq
			\frac{\|\gamma\|_\infty}{\sqrt{v_{\min} + \varepsilon}}
			\left(\|x_1 - x_2\|_\infty + \|\mu_1 - \mu_2\|_\infty\right)
			\\
			 & \quad
			+ \frac{\|\gamma\|_\infty(M + M_\mu)}{2(v_{\min} + \varepsilon)^{3/2}} \|v_1 - v_2\|_\infty.
		\end{aligned}
	\end{equation}
	Apply this estimate pointwise with $x_1 = x_h$, $x_2 = P_{h' \to h}x_{h'}$, $(\mu_1, v_1) = (\mu_h, v_h)$, and $(\mu_2, v_2) = (\mu_{h'}, v_{h'})$. The interpolation assumption gives $\|x_1 - x_2\|_{L^\infty} = O(h^2 + h'^2)$, and Proposition~\ref{thm:consistency} gives $\|\mu_h - \mu_{h'}\|_\infty = O(h^2 + h'^2)$. Because $x_h$ and $x_{h'}$ are exact nodal samples of $u$, the corresponding weighted means satisfy
	\begin{align*}
		\mu_h = \hat{E}_h[u],
		\qquad
		\mu_{h'} = \hat{E}_{h'}[u].
	\end{align*}
	For the variances, write componentwise
	\begin{align*}
		v_h = \hat{E}_h[u^2] - \hat{E}_h[u]^2,
		\qquad
		v_{h'} = \hat{E}_{h'}[u^2] - \hat{E}_{h'}[u]^2,
	\end{align*}
	where $\hat{E}_h$ and $\hat{E}_{h'}$ denote the corresponding quadrature-weighted averages on the two grids. Since $u^2 \in C^2(\Omega)$, Proposition~\ref{thm:consistency} gives $\|\hat{E}_h[u^2] - \hat{E}_{h'}[u^2]\|_\infty = O(h^2 + h'^2)$, and boundedness of $u$ on the compact domain $\Omega$ implies $\|\hat{E}_h[u]^2 - \hat{E}_{h'}[u]^2\|_\infty = O(h^2 + h'^2)$. Hence, $\|v_h - v_{h'}\|_\infty = O(h^2 + h'^2)$: the pointwise discrepancy is $O(h^2 + h'^2)$. The same order in the comparison norm follows from the quadrature-weighted definition of $\|\cdot\|_{L^2}$ and the identity $\sum_{\mathbf{i}}\omega_{\mathbf{i}}^{(h)} = |\Omega|$.
\end{proof}

\begin{proof}[Proof of Proposition~\ref{prop:periodic}]
	On a periodic FFT grid, each cell has the same spatial volume $\prod_k n_k^{-1}$ and there are no distinguished boundary points. The composite trapezoidal rule therefore assigns the same weight to every node, which is exactly the uniform average.
\end{proof}

\begin{proof}[Proof of Proposition~\ref{prop:spectral-mismatch}]
	Using the same notation as in the proof of Proposition~\ref{prop:standard-inconsistent},
	\begin{align*}
		\hat{\mu}_h^{\text{trap}} = \frac{u_0 + u_{m-1}}{2(m-1)} + \frac{m-2}{m-1}\bar{u}_{\mathrm{int}},
		\qquad
		\hat{\mu}_h^{\text{unif}} = \frac{u_0 + u_{m-1} + (m-2)\bar{u}_{\mathrm{int}}}{m}.
	\end{align*}
	Subtracting gives the exact formula. Since $(m-2)/(m(m-1)) = h + O(h^2)$, the perturbation is $O(h)$ whenever the bracketed term is $O(1)$, and under the proposition's hypothesis that the bracketed term is $\Theta(1)$, this sharpens to $\Theta(h)$.
\end{proof}

\subsection{Conditional Transfer Error Propagation Bound}
\begin{theorem}[Conditional Cross-Resolution Transfer Error Propagation Bound]
	\label{thm:transfer-bound}
	Let $\mathcal{G}_{\theta,h} = f_{L+1,h} \circ f_{L,h} \circ \cdots \circ f_{1,h} \circ f_{0,h}$ be a neural operator evaluated on grid spacing $h$, with $L$ normalization layers. Assume each processor block has the form $f_{\ell,h} = \sigma \circ \mathcal{N}_{\ell,h} \circ T_{\ell,h}$ for $1 \le \ell \le L$, and write
	\begin{align*}
		\delta_2 = \max(h^2,h'^2),
		\qquad
		\delta_p = \max(h^p,h'^p).
	\end{align*}
	Assume the following resolution-compatibility conditions. The lifting map $f_{0,h}$ and projection map $f_{L+1,h}$ admit constants $L_0, C_0, L_{\mathrm{out}}, C_{\mathrm{out}}$ independent of $L$ such that
	\begin{align*}
		\|f_{0,h}(x_h) - P_{h' \to h}f_{0,h'}(x_{h'})\|_{L^2}
		\le
		L_0 \|x_h - P_{h' \to h}x_{h'}\|_{L^2} + C_0 \delta_2,
	\end{align*}
	and such that for any corresponding cross-resolution feature maps $z_h, z_{h'}$,
	\begin{align*}
		\|f_{L+1,h}(z_h) - P_{h' \to h}f_{L+1,h'}(z_{h'})\|_{L^2}
		\le
		L_{\mathrm{out}} \|z_h - P_{h' \to h}z_{h'}\|_{L^2} + C_{\mathrm{out}} \delta_2.
	\end{align*}
	The map $T_{\ell,h}$ denotes the resolution-invariant part of the block, namely the residual connection, kernel, and pointwise linear map, and there exist constants $L_{T,\ell}, C_{T,\ell}$ such that
	\begin{align*}
		\|T_{\ell,h}(z_h) - P_{h' \to h}T_{\ell,h'}(z_{h'})\|_{L^2}
		\le
		L_{T,\ell}\|z_h - P_{h' \to h}z_{h'}\|_{L^2} + C_{T,\ell}\delta_2.
	\end{align*}
	For every pair of corresponding pre-normalization feature maps $z_h, z_{h'}$ produced by the two forward passes being compared, $\mathcal{N}_{\ell,h}$ is resolution-compatible with constants $L_{N,\ell}, C_{N,\ell}$ in the sense that
	\begin{align*}
		\|\mathcal{N}_{\ell,h}(z_h) - P_{h' \to h}\mathcal{N}_{\ell,h'}(z_{h'})\|_{L^2}
		\le
		L_{N,\ell}\|z_h - P_{h' \to h}z_{h'}\|_{L^2} + C_{N,\ell}\delta_p.
	\end{align*}
	Finally, $\sigma$ is $L_\sigma$-Lipschitz, and there exists $C_{\sigma,\ell}$ such that for the corresponding pre-activations $z_h, z_{h'}$ at resolutions $h, h'$,
	\begin{align*}
		\|\sigma(z_h) - P_{h' \to h}\sigma(z_{h'})\|_{L^2}
		\le
		L_\sigma \|z_h - P_{h' \to h}z_{h'}\|_{L^2}
		+ C_{\sigma,\ell}\delta_2.
	\end{align*}
	Define the per-block amplification factors
	\begin{align*}
		M_\ell = L_\sigma L_{N,\ell} L_{T,\ell},
	\end{align*}
	and assume the cumulative products are uniformly bounded in the sense that
	\begin{align*}
		\max_{0 \le k \le L} \prod_{j=k+1}^{L} M_j \le A
	\end{align*}
	for a constant $A$ independent of $L$. The empty product is defined to equal $1$. Assume additionally that the layerwise quantities admit depth-independent uniform bounds
	\begin{align*}
		\begin{aligned}
			\sup_{1 \le \ell \le L} L_{T,\ell} & \le \overline{L}_T, \qquad
			\sup_{1 \le \ell \le L} L_{N,\ell} \le \overline{L}_N,          \\
			\sup_{1 \le \ell \le L} C_{T,\ell} & \le \overline{C}_T, \qquad
			\sup_{1 \le \ell \le L} C_{N,\ell} \le \overline{C}_N, \qquad
			\sup_{1 \le \ell \le L} C_{\sigma,\ell} \le \overline{C}_\sigma,
		\end{aligned}
	\end{align*}
	with $\overline{L}_T,\overline{L}_N,\overline{C}_T,\overline{C}_N,\overline{C}_\sigma$ independent of $L$. Let $x_h$ and $x_{h'}$ denote discretizations of the same input function $u \in C^2(\Omega)$. Then
	\begin{equation}
		\norm{\mathcal{G}_{\theta,h}(x_h) - P_{h' \to h}\mathcal{G}_{\theta,h'}(x_{h'})}_{L^2}
		\leq
		C_{\mathrm{net}}
		\left(
		\norm{x_h - P_{h' \to h}x_{h'}}_{L^2}
		+ L \delta_p
		+ (L+1)\delta_2
		\right),
	\end{equation}
	where $C_{\mathrm{net}}$ depends only on the lifting and projection constants $L_0, C_0, L_{\mathrm{out}}, C_{\mathrm{out}}$, on $A$, $L_\sigma$, and on the uniform bounds $\overline{L}_T,\overline{L}_N,\overline{C}_T,\overline{C}_N,\overline{C}_\sigma$.
\end{theorem}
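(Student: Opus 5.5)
The plan is a layerwise error recursion followed by unrolling, a discrete Gronwall-type telescoping. Let $z^{(0)}_h = f_{0,h}(x_h)$ and $z^{(\ell)}_h = f_{\ell,h}(z^{(\ell-1)}_h)$ for $1\le\ell\le L$, and define the same quantities at $h'$. Set the transfer error at depth $\ell$ to be
\begin{align*}
e_\ell := \|z^{(\ell)}_h - P_{h'\to h} z^{(\ell)}_{h'}\|_{L^2}.
\end{align*}
The lifting hypothesis gives directly $e_0 \le L_0 \|x_h - P_{h'\to h}x_{h'}\|_{L^2} + C_0\delta_2$.

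Next I bound one processor block, which is the core step. For block $\ell$ I apply the three resolution-compatibility hypotheses in order: first $T_{\ell}$, then $\mathcal{N}_\ell$, then $\sigma$. At each stage the relevant inputs are the corresponding intermediate feature maps of the two forward passes, so the ``corresponding pre-normalization / pre-activation'' quantifiers in the hypotheses apply exactly. Composing these gives
\begin{align*}
e_\ell \le L_\sigma\bigl(L_{N,\ell}(L_{T,\ell} e_{\ell-1} + C_{T,\ell}\delta_2) + C_{N,\ell}\delta_p\bigr) + C_{\sigma,\ell}\delta_2 = M_\ell e_{\ell-1} + b_\ell,
\end{align*}
where $b_\ell = L_\sigma L_{N,\ell} C_{T,\ell}\delta_2 + L_\sigma C_{N,\ell}\delta_p + C_{\sigma,\ell}\delta_2$. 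Using the uniform bounds, $b_\ell \le \bar b_2\,\delta_2 + \bar b_p\,\delta_p$, where $\bar b_2 = L_\sigma \overline{L}_N\overline{C}_T + \overline{C}_\sigma$ and $\bar b_p = L_\sigma\overline{C}_N$ are both independent of $L$.

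Unrolling this linear recursion by induction gives
\begin{align*}
e_L \le \Bigl(\prod_{j=1}^L M_j\Bigr) e_0 + \sum_{k=1}^{L}\Bigl(\prod_{j=k+1}^{L} M_j\Bigr) b_k \le A e_0 + A L(\bar b_2\delta_2 + \bar b_p\delta_p).
\end{align*}
Here the cumulative-product hypothesis with $k=0$ handles the $e_0$ coefficient, and the cases $k\ge1$ handle each source term. This is where the linear-in-$L$ dependence appears.

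Finally I apply the projection hypothesis with $z_h = z^{(L)}_h$ and $z_{h'} = z^{(L)}_{h'}$:
\begin{align*}
\|\mathcal{G}_{\theta,h}(x_h) - P\mathcal{G}_{\theta,h'}(x_{h'})\| \le L_{\mathrm{out}} e_L + C_{\mathrm{out}}\delta_2.
\end{align*}
Substituting the bounds for $e_L$ and $e_0$ collects terms as
\begin{align*}
L_{\mathrm{out}}A L_0\|x_h - Px_{h'}\| + L_{\mathrm{out}}A\bar b_p\, L\delta_p + \bigl(L_{\mathrm{out}}A C_0 + L_{\mathrm{out}}A\bar b_2 L + C_{\mathrm{out}}\bigr)\delta_2.
\end{align*}
The last bracket is at most $\max(L_{\mathrm{out}}AC_0 + C_{\mathrm{out}},\, L_{\mathrm{out}}A\bar b_2)(L+1)\delta_2$. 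Taking $C_{\mathrm{net}}$ to be the maximum of the three resulting coefficients yields the claimed bound, and this constant depends only on the listed quantities.

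The main obstacle is bookkeeping rather than a substantive difficulty. One must keep every constant independent of $L$, which is exactly why $A$ is required to bound all tail products uniformly rather than each $M_\ell$ individually. One must also check that each layerwise hypothesis is being invoked on the correct pair of feature maps: the normalization compatibility is only assumed for the corresponding maps produced by the two forward passes, and the recursion uses it precisely there. A minor point is that the bound is stated in the $L_h^2$ norm throughout, so no norm-switching constants enter.
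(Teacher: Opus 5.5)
Your proposal is correct and follows essentially the same route as the paper's proof: the per-block recursion $e_\ell \le M_\ell e_{\ell-1} + b_\ell$, unrolled using the tail-product bound $A$ and the uniform layerwise bounds, then closed off with the lifting and projection hypotheses. Your explicit collection of constants, including the step $a + bL \le \max(a,b)(L+1)$, simply spells out the absorption into $C_{\mathrm{net}}$ that the paper does implicitly.
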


\begin{remark}[Quadrature-consistent trajectory specialization]
	Under the hypotheses of Theorem~\ref{thm:transfer-bound}, suppose additionally that for each $\ell$ the compared pre-normalization features are exact nodal samples of a common $C^2$ feature field on endpoint-inclusive uniform tensor-product grids, the normalization layer uses the corresponding trapezoidal weights, these features satisfy the interpolation hypothesis of Theorem~\ref{thm:output-consistency}, and their weighted variances are bounded away from zero. Then Theorem~\ref{thm:output-consistency} yields the stronger trajectory-wise estimate
	\begin{align*}
		\|\mathcal{N}_{\ell,h}(z_h) - P_{h' \to h}\mathcal{N}_{\ell,h'}(z_{h'})\|_{L^2}
		\le
		\widetilde{C}_{N,\ell}\delta_2
	\end{align*}
	for some constant $\widetilde{C}_{N,\ell}$ independent of $h,h'$ along that trajectory. Thus, the normalization assumption above holds with $p=2$ there, for example, by taking $L_{N,\ell}=0$ and $C_{N,\ell}=\widetilde{C}_{N,\ell}$. If $\norm{x_h - P_{h' \to h}x_{h'}}_{L^2} = O(\delta_2)$, the theorem therefore gives an overall cross-resolution transfer bound of order $O(L\cdot\delta_2)$ for $L \ge 1$ under this cumulative-stability assumption.
\end{remark}

\begin{proof}[Proof of Theorem~\ref{thm:transfer-bound}]
	We decompose the forward pass of $\mathcal{G}_{\theta,h}$ into a lifting map $f_{0,h}$, processor blocks $\{f_{\ell,h}\}_{\ell=1}^L$, and a projection map $f_{L+1,h}$. Each processor block has the form
	\begin{align*}
		f_{\ell,h} = \sigma \circ \mathcal{N}_{\ell,h} \circ T_{\ell,h},
	\end{align*}
	and we write $\delta_2 = \max(h^2,h'^2)$ and $\delta_p = \max(h^p,h'^p)$.

	\paragraph{Lifting.} Let
	\begin{align*}
		e_{-1} = \|x_h - P_{h' \to h}x_{h'}\|_{L^2},
		\qquad
		e_0 = \|f_{0,h}(x_h) - P_{h' \to h}f_{0,h'}(x_{h'})\|_{L^2}.
	\end{align*}
	By the lifting assumption,
	\begin{align*}
		e_0 \le L_0 e_{-1} + C_0 \delta_2.
	\end{align*}

	\paragraph{Per-block error propagation.} For $1 \le \ell \le L$, let $e_\ell$ be the discrepancy after block $\ell$. If $z_{\ell,h}$ and $z_{\ell,h'}$ denote the inputs to $\mathcal{N}_{\ell,h}$ and $\mathcal{N}_{\ell,h'}$ after applying $T_{\ell,h}$ and $T_{\ell,h'}$, then the resolution-compatibility of $T_{\ell,h}$ gives
	\begin{align*}
		\|z_{\ell,h} - P_{h' \to h}z_{\ell,h'}\|_{L^2}
		\le
		L_{T,\ell} e_{\ell-1} + C_{T,\ell}\delta_2.
	\end{align*}
	Applying the resolution-compatible normalization map yields
	\begin{align*}
		\|\mathcal{N}_{\ell,h}(z_{\ell,h}) - P_{h' \to h}\mathcal{N}_{\ell,h'}(z_{\ell,h'})\|_{L^2}
		\le
		L_{N,\ell}(L_{T,\ell} e_{\ell-1} + C_{T,\ell}\delta_2)
		+ C_{N,\ell}\delta_p.
	\end{align*}
	Applying $\sigma$ and using the activation--interpolation commutator bound gives
	\begin{align*}
		e_\ell
		\le
		L_\sigma L_{N,\ell}L_{T,\ell} e_{\ell-1}
		+ L_\sigma C_{N,\ell}\delta_p
		+ (L_\sigma L_{N,\ell} C_{T,\ell} + C_{\sigma,\ell})\delta_2.
	\end{align*}
	With $M_\ell = L_\sigma L_{N,\ell}L_{T,\ell}$, this is
	\begin{align*}
		e_\ell
		\le
		M_\ell e_{\ell-1}
		+ L_\sigma C_{N,\ell}\delta_p
		+ (L_\sigma L_{N,\ell} C_{T,\ell} + C_{\sigma,\ell})\delta_2.
	\end{align*}

	\paragraph{Telescoping through the processor stack.} Unrolling the recursion yields
	\begin{align}
		e_L
		 & \le
		\left(\prod_{\ell=1}^{L} M_\ell\right)e_0
		+ \sum_{\ell=1}^{L}\left(\prod_{j=\ell+1}^{L} M_j\right)L_\sigma C_{N,\ell}\delta_p \nonumber \\
		 & \quad
		+ \sum_{\ell=1}^{L}\left(\prod_{j=\ell+1}^{L} M_j\right)(L_\sigma L_{N,\ell} C_{T,\ell} + C_{\sigma,\ell})\delta_2.
	\end{align}
	Using the tail product bound together with the uniform bounds on $L_{N,\ell}$, $C_{T,\ell}$, $C_{N,\ell}$, and $C_{\sigma,\ell}$ gives
	\begin{align*}
		e_L
		\le
		A e_0
		+ A L_\sigma \overline{C}_N L \delta_p
		+ A(L_\sigma \overline{L}_N \overline{C}_T + \overline{C}_\sigma)L \delta_2.
	\end{align*}
	Substituting the lifting bound for $e_0$ and absorbing $A$, $L_0$, $C_0$, $L_\sigma$, and the uniform bounds into a constant $C_{\mathrm{mid}}$ yields
	\begin{align*}
		e_L
		\le
		C_{\mathrm{mid}}(e_{-1} + L\delta_p + (L+1)\delta_2).
	\end{align*}

	\paragraph{Projection.} Apply the resolution-compatible projection map:
	\begin{align*}
		\|\mathcal{G}_{\theta,h}(x_h) - P_{h' \to h}\mathcal{G}_{\theta,h'}(x_{h'})\|_{L^2}
		\le
		L_{\mathrm{out}} e_L + C_{\mathrm{out}}\delta_2.
	\end{align*}
	Combining this with the previous estimate and absorbing $L_{\mathrm{out}}$ and $C_{\mathrm{out}}$ into a final constant gives
	\begin{align*}
		\|\mathcal{G}_{\theta,h}(x_h) - P_{h' \to h}\mathcal{G}_{\theta,h'}(x_{h'})\|_{L^2}
		\le
		C_{\mathrm{net}}(e_{-1} + L\delta_p + (L+1)\delta_2),
	\end{align*}
	which is exactly the stated bound.
\end{proof}

\section{Implementation and Experimental Setup}
\label{app:details}
\label{app:additional-details}

\paragraph{FNO architecture.} The architecture uses a lifting layer that maps the input together with two coordinate channels to the model width, followed by $L$ Fourier blocks with spectral convolution, pointwise linear layers, residual connections, normalization, and Gaussian error linear unit activations, and then a projection multilayer perceptron that maps the width to 128 and then to the output.

\paragraph{Optional adaptive residual gain.} \label{app:residual-gain} For FNO blocks with residual connections $y = x + F(x)$, one may optionally apply an energy-balancing gain:
\begin{equation}\label{eq:gain}
	y = x + \alpha_0 \sqrt{\frac{\E[\|x\|^2]}{\E[\|F(x)\|^2] + \varepsilon}} \cdot F(x),
\end{equation}
where $\alpha_0 \ge 0$ is a learnable scalar, such as one parameterized via a softplus and initialized at $0.5$. At the expectation level, Cauchy--Schwarz gives
\begin{align*}
	\E\|y\|^2 \leq (1 + \alpha_0)^2 \E\|x\|^2,
\end{align*}
so the residual branch has aggregate energy-growth control. In our experiments, this module is disabled by default. It is orthogonal to the normalization contribution and can be activated independently if desired.

\paragraph{Galerkin Transformer.} This model uses Galerkin-type attention with 4 layers, 4 heads, and width 64, with normalization after each block.

\paragraph{Quadrature weights.} On 2D uniform grids, the weight tensor is the outer product of 1D trapezoidal weights $w_{ij} = w_i^{(1)} \cdot w_j^{(2)}$. For nonuniform one-dimensional grids, the weights are $w_0 = (x_1 - x_0)/2$, $w_i = (x_{i+1} - x_{i-1})/2$ for $1 \le i \le n-2$, and $w_{n-1} = (x_{n-1} - x_{n-2})/2$. In higher dimensions and on irregular meshes, we use cell volumes or Voronoi volumes.

\paragraph{Reference PyTorch implementation.} Listing~\ref{lst:quadnorm-code} gives the compact \texttt{nn.Module} implementation of QuadNorm corresponding to the definitions used in this paper. The experiment code uses the same tensor convention, with tensors arranged as $(B,C,\ast\text{spatial})$.

\begin{lstlisting}[style=quadnormcode,caption={Minimal PyTorch implementation of QuadNorm.},label={lst:quadnorm-code}]
from typing import Literal, Tuple

import torch
import torch.nn as nn


def trapezoidal_weights_nd(
    shape: Tuple[int, ...],
    device: torch.device,
    dtype: torch.dtype,
) -> torch.Tensor:
    weights_1d = []
    for n in shape:
        if n == 1:
            w = torch.ones(1, device=device, dtype=dtype)
        else:
            h = 1.0 / (n - 1)
            w = torch.full((n,), h, device=device, dtype=dtype)
            w[0] = h / 2
            w[-1] = h / 2
        weights_1d.append(w)

    if len(weights_1d) == 1:
        return weights_1d[0]
    if len(weights_1d) == 2:
        return torch.outer(weights_1d[0], weights_1d[1])
    if len(weights_1d) == 3:
        return torch.einsum("i,j,k->ijk", *weights_1d)

    out = weights_1d[0]
    for w in weights_1d[1:]:
        out = out.unsqueeze(-1) * w
    return out


class QuadNorm(nn.Module):
    """Pure quadrature-weighted normalization for neural operators."""

    def __init__(
        self,
        num_features: int,
        eps: float = 1e-5,
        affine: bool = True,
        mode: Literal["layer", "instance", "group"] = "layer",
        num_groups: int = 1,
    ):
        super().__init__()
        self.eps = eps
        self.mode = mode
        self.num_groups = num_groups
        if affine:
            self.weight = nn.Parameter(torch.ones(num_features))
            self.bias = nn.Parameter(torch.zeros(num_features))
        else:
            self.register_parameter("weight", None)
            self.register_parameter("bias", None)

    def _affine(self, y: torch.Tensor) -> torch.Tensor:
        if self.weight is None:
            return y
        shape = [1, -1] + [1] * (y.ndim - 2)
        return y * self.weight.view(*shape) + self.bias.view(*shape)

    def forward(self, x: torch.Tensor) -> torch.Tensor:
        spatial = x.shape[2:]
        w = trapezoidal_weights_nd(spatial, x.device, x.dtype)
        w = w.view(1, 1, *spatial)

        if self.mode == "instance":
            dims = tuple(range(2, x.ndim))
            denom = w.sum()
            mean = (x * w).sum(dim=dims, keepdim=True) / denom
            var = ((x - mean).pow(2) * w).sum(dim=dims, keepdim=True) / denom
            return self._affine((x - mean) / torch.sqrt(var + self.eps))

        if self.mode == "layer":
            dims = tuple(range(1, x.ndim))
            denom = w.sum() * x.shape[1]
            mean = (x * w).sum(dim=dims, keepdim=True) / denom
            var = ((x - mean).pow(2) * w).sum(dim=dims, keepdim=True) / denom
            return self._affine((x - mean) / torch.sqrt(var + self.eps))

        if self.mode == "group":
            bsz, channels = x.shape[:2]
            groups = self.num_groups
            if channels % groups != 0:
                raise ValueError("num_features must be divisible by num_groups")
            xg = x.view(bsz, groups, channels // groups, *spatial)
            wg = w.unsqueeze(2)
            dims = tuple(range(2, xg.ndim))
            denom = w.sum() * (channels // groups)
            mean = (xg * wg).sum(dim=dims, keepdim=True) / denom
            var = ((xg - mean).pow(2) * wg).sum(dim=dims, keepdim=True) / denom
            y = (xg - mean) / torch.sqrt(var + self.eps)
            return self._affine(y.view(bsz, channels, *spatial))

        raise ValueError(f"unknown QuadNorm mode: {self.mode}")
\end{lstlisting}

\paragraph{BlendQuadNorm forward pass.} For each batch element, let $x$ have shape $(C, H, W)$, and let $w$ with shape $(H, W)$ denote the cached trapezoidal weights for that resolution. The LayerNorm and quadrature-weighted statistics are
\begin{align*}
	S_w                    & = \sum_{i,j} w_{ij},                                                \\
	\mu_{\text{LayerNorm}} & = \frac{1}{CHW} \sum_{c,i,j} x_c(i,j),                              \\
	v_{\text{LayerNorm}}   & = \frac{1}{CHW} \sum_{c,i,j} (x_c(i,j) - \mu_{\text{LayerNorm}})^2,
	\mu_{\text{WLN}} & = \frac{1}{C S_w} \sum_{c,i,j} w_{ij} x_c(i,j),                        \\
	v_{\text{WLN}}   & = \frac{1}{C S_w} \sum_{c,i,j} w_{ij} (x_c(i,j) - \mu_{\text{WLN}})^2.
\end{align*}
We then blend these statistics by
\begin{align*}
	\mu_b & = \alpha \mu_{\text{LayerNorm}} + (1 - \alpha) \mu_{\text{WLN}},                                                             \\
	v_b   & = \alpha v_{\text{LayerNorm}} + (1 - \alpha) v_{\text{WLN}} + \alpha(1-\alpha)(\mu_{\text{LayerNorm}} - \mu_{\text{WLN}})^2,
\end{align*}
and return
\begin{align*}
	y = \gamma \odot \frac{x - \mu_b}{\sqrt{v_b + \varepsilon}} + \beta.
\end{align*}

The blending coefficient $\alpha \in [0, 1]$ controls the trade-off. The choice $\alpha = 1$ recovers pure LayerNorm and prioritizes single-resolution accuracy without a consistency guarantee. The choice $\alpha = 0$ yields pure quadrature-weighted LayerNorm statistics with the same reduction axes as LayerNorm, and those moments are $O(h^2)$-consistent on endpoint-inclusive uniform grids. Intermediate values interpolate between these two endpoints.

\paragraph{Cross-resolution evaluation.} We train at the source resolution and evaluate at the target resolution using bicubic interpolation. Since FNO's spectral convolution, pointwise linear, lifting, and projection layers are all resolution-invariant, the normalization layer is the primary source of resolution-dependent behavior.

\paragraph{Statistical testing.} We use paired $t$-tests together with Holm--Bonferroni correction \citep{holm1979simple} at $\alpha = 0.05$, pairing observations by seed.

\paragraph{Data splits.} The Darcy benchmark uses 900 training examples and 200 test examples in the main Darcy benchmark transfer studies, 700 training examples and 200 test examples in the Transolver Darcy benchmark study, and 900 training examples, 100 validation examples, and 200 test examples in the comprehensive Darcy benchmark evaluation. The real Navier--Stokes one-step benchmark in the appendix study of Galerkin on real Navier--Stokes uses 8000 training examples and 2000 test sample pairs. The larger-scale Darcy setting uses 5000 training examples and 500 test examples at $64^2$. All inputs include two coordinate channels; outputs are single-channel solution fields.

\paragraph{Hyperparameter sensitivity.} All normalization variants use identical hyperparameters. The only BlendQuadNorm-specific hyperparameter is $\alpha = 0.3$, fixed across experiments; architecture-specific lower-$\alpha$ alternatives are evaluated separately in Appendix Table~\ref{tab:alpha-matrix}. Beyond this fixed $\alpha$, QuadNorm and BlendQuadNorm introduce no additional hyperparameters. Trapezoidal quadrature suffices; higher-order rules perform identically, as shown in Appendix~\ref{app:exp-quadrature-ablation}, and weights are fully determined by grid geometry.

\paragraph{Reproducibility Statement.}\phantomsection\label{app:reproducibility} All experiments use PyTorch 2.8. Source code is provided in the supplementary material. The data sources are the official FNO Darcy benchmark \citep{li2021fno}, obtained from its Zenodo release, the PDE benchmark suite (PDEBench) \citep{tran2023pdebench}, and synthetic sine-series data. Key hyperparameters are width-32 FNO with 4 layers and 12 modes, deep FNO with width 48 and 8 layers, Galerkin with 4 layers, 4 heads, and width 64, and Transolver with width 64 and 6 layers. Adam with decoupled weight decay (AdamW) uses a learning rate of $5 \times 10^{-4}$ for FNO and $3 \times 10^{-4}$ for Galerkin, with weight decay of $10^{-4}$, together with cosine annealing, gradient clipping at 1.0, and an early-stopping patience of 25 to 30 epochs. Seed counts vary by study. The largest transfer experiments use 10 seeds, most ablations and supporting studies use 5, and some real-benchmark extensions use 3.

\section{Supporting Main-Text Studies}
\label{app:supporting-experiments}

\paragraph{Experimental Roadmap.}\phantomsection\label{app:exp-overview-summary} The appendix is organized around five supporting themes that mirror our empirical claims. We first collect studies on gap and depth scaling, then architecture- and PDE-dependent behavior, including moderate-gap transfer and the $\alpha$ recommendation sweep. We next summarize the stronger Darcy benchmark protocol and results, followed by training-regime and model-scale studies. The remaining subsections cover breadth beyond Darcy together with mechanism, robustness, and statistical support analyses, including quadrature ablations, bootstrap confidence intervals, nonuniform mesh bias, and the Two One-Sided Tests (TOST) equivalence analysis.

\paragraph{Summary.} Table~\ref{tab:key-results-summary} summarizes our main empirical message: the strongest gains appear on nonperiodic PDEs and nonspectral architectures, large FNO transfer gains emerge at wider resolution gaps, and the improvement grows with model scale.

\begin{table}[t]
	\centering
	\caption{Summary of key results. Improvement is relative to LayerNorm, so positive values indicate better performance. The final column reports supporting evidence in the form of exact 95\% bootstrap CIs where they are tabulated later in Tables~\ref{tab:bootstrap-cis} and~\ref{tab:bootstrap-cis-breadth}, and otherwise rounded task-specific confidence intervals or study metadata for the 3-seed strong-recipe benchmark rows.}
	\label{tab:key-results-summary}
	\small
	\resizebox{\textwidth}{!}{%
		\begin{tabular}{llllcc}
			\toprule
			\tableheadrulebelow
			\rowcolor{tableheadgray}
			Setting                                  & PDE       & Arch       & Best          & Improvement (\%)                 & Support                    \\
			\tableheadruleabove
			\midrule
			\multicolumn{6}{l}{Nonperiodic PDEs with nonspectral architectures, our primary use case}                                                       \\
			Cavity flow transfer                     & Cavity    & Galerkin   & BlendQuadNorm & $\mathbf{+47}$                   & [33, 57]                   \\
			Poisson--Dirichlet 10-seed               & Poisson   & Galerkin   & QuadNorm      & $\mathbf{+26}$                   & [25, 27]                   \\
			Variable-coefficient native (Galerkin)   & Diffusion & Galerkin   & QuadNorm      & $\mathbf{+18}$                   & [17, 19]                   \\
			Variable-coefficient native (Transolver) & Diffusion & Transolver & QuadNorm      & $\mathbf{+21}$                   & [19, 23]                   \\
			Helmholtz 10-seed                        & Helmholtz & Galerkin   & BlendQuadNorm & $\mathbf{+16}$                   & [10, 21]                   \\
			Foundation multi-resolution training     & Darcy     & Transolver & QuadNorm      & $\mathbf{+24}$ to $\mathbf{+28}$ & task-specific CIs [22, 29] \\
			\midrule
			\multicolumn{6}{l}{Darcy benchmark results in the comprehensive evaluation}                                                                       \\
			Darcy Benchmark, 4$\times$ gap             & Darcy     & Transolver & QuadNorm      & $\mathbf{+30}$                   & 3 seeds, 500 epochs        \\
			Darcy Benchmark, 4$\times$ gap             & Darcy     & FNO        & QuadNorm      & $\mathbf{+32}$                   & 3 seeds, 500 epochs        \\
			\midrule
			\multicolumn{6}{l}{FNO with large resolution gaps}                                                                                                \\
			Darcy Benchmark, 13$\times$ gap            & Darcy     & FNO        & QuadNorm      & $\mathbf{+42}$                   & [41, 44]                   \\
			Darcy, 8$\times$ gap                       & Darcy     & FNO        & QuadNorm      & $\mathbf{+35}$                   & [33, 38]                   \\
			\midrule
			\multicolumn{6}{l}{Model scaling, measured by transfer improvement at a 4$\times$ gap}                                                              \\
			307K $\to$ 9.6M                          & Darcy     & Transolver & QuadNorm      & $18\% \to 38\%$                  & monotonic across scales    \\
			\bottomrule
		\end{tabular}
	}
\end{table}

\subsection{Darcy Benchmark Summary Tables}
\label{sec:exp-real-summary}
\paragraph{Comprehensive Darcy Benchmark Setup.} The comprehensive Darcy benchmark evaluation in Section~\ref{sec:exp-real-comprehensive} evaluates FNO, Galerkin, and Transolver on the official FNO Darcy benchmark \citep{li2021fno} with 3 seeds, 500 epochs, cosine annealing, and a 10-epoch warmup. The benchmark subset contains 900 training examples, 100 validation examples, and 200 test examples. The training and validation examples come from \texttt{smooth1}, and the test examples come from \texttt{smooth2}. The native resolution is 421 by 421. Models are trained at $64^2$ and evaluated at $64^2$, $128^2$, and $256^2$ by subsampling from the full-resolution data.

Table~\ref{tab:real-benchmark-best-summary} summarizes the best improvement of either QuadNorm or BlendQuadNorm over LayerNorm on the Darcy benchmark configurations.

\begin{table}[t]
	\centering
	\caption{Larger improvement over LayerNorm between QuadNorm and BlendQuadNorm in percent on the Darcy benchmark with 500 epochs and 3 seeds. Positive values indicate that QuadNorm or BlendQuadNorm performs better, while negative values indicate that LayerNorm performs better. Improvement percentages are computed from the underlying unrounded means, so very small deltas may not be computed exactly from the rounded error columns.}
	\label{tab:real-benchmark-best-summary}
	\small
	\begin{tabular}{lccc}
		\toprule
		\tableheadrulebelow
		\rowcolor{tableheadgray}
		Architecture & Native & 2$\times$ & 4$\times$ \\
		\tableheadruleabove
		\midrule
		FNO          & 0.0\%  & +17.3\% & +31.7\% \\
		Galerkin     & -1.1\% & +5.8\%  & +13.9\% \\
		Transolver   & +8.2\% & +20.8\% & +30.1\% \\
		\bottomrule
	\end{tabular}
\end{table}

\paragraph{Summary of Darcy Benchmark Results.} Table~\ref{tab:real-benchmark-summary} synthesizes the Darcy benchmark results across the standard and extreme-gap settings. QuadNorm is the strongest method throughout, while BlendQuadNorm stays close to LayerNorm on the smaller-effect Darcy benchmark settings here, where the gains range from $+0.5\%$ to $+2.9\%$.

\begin{table}[t]
	\centering
	\caption{Summary of BlendQuadNorm and QuadNorm improvement over LayerNorm across Darcy benchmark experiments. Rows are organized by protocol, architecture, and train-to-test resolution transfer. $\Delta$ denotes relative improvement, so positive values indicate better performance than LayerNorm. Bold indicates the larger improvement between BlendQuadNorm and QuadNorm in each architecture--resolution block.}
	\label{tab:real-benchmark-summary}
	\resizebox{\textwidth}{!}{%
		\begin{tabular}{l l c c c c c c}
			\toprule
			\tableheadrulebelow
			\rowcolor{tableheadgray}
			Protocol        & Arch & Train $\to$ Test & LayerNorm        & BlendQuadNorm    & $\Delta$ & QuadNorm        & $\Delta$         \\
			\tableheadruleabove
			\midrule
			Darcy Benchmark & FNO  & $64^2 \to 128^2$ & 4.01 $\pm$ 0.11  & 3.98 $\pm$ 0.20  & +0.9\%   & 3.78 $\pm$ 0.13 & \textbf{+5.9\%}  \\
			Darcy Benchmark & FNO  & $64^2 \to 256^2$ & 5.30 $\pm$ 0.17  & 5.24 $\pm$ 0.32  & +1.0\%   & 4.07 $\pm$ 0.17 & \textbf{+23.2\%} \\
			Darcy Benchmark & FNO  & $64^2 \to 421^2$ & 5.77 $\pm$ 0.20  & 5.71 $\pm$ 0.37  & +1.0\%   & 4.16 $\pm$ 0.20 & \textbf{+28.0\%} \\
			Darcy Benchmark & Gal  & $64^2 \to 128^2$ & 8.87 $\pm$ 0.42  & 8.83 $\pm$ 0.29  & +0.5\%   & 7.82 $\pm$ 0.15 & \textbf{+11.9\%} \\
			Darcy Benchmark & Gal  & $64^2 \to 256^2$ & 9.39 $\pm$ 0.43  & 9.32 $\pm$ 0.26  & +0.7\%   & 7.96 $\pm$ 0.15 & \textbf{+15.2\%} \\
			Extreme         & FNO  & $32^2 \to 128^2$ & 9.90 $\pm$ 0.33  & 9.76 $\pm$ 0.60  & +1.4\%   & 6.19 $\pm$ 0.37 & \textbf{+37.4\%} \\
			Extreme         & FNO  & $32^2 \to 256^2$ & 11.31 $\pm$ 0.38 & 11.14 $\pm$ 0.70 & +1.5\%   & 6.66 $\pm$ 0.42 & \textbf{+41.1\%} \\
			Extreme         & FNO  & $32^2 \to 421^2$ & 11.83 $\pm$ 0.40 & 11.65 $\pm$ 0.74 & +1.5\%   & 6.81 $\pm$ 0.44 & \textbf{+42.4\%} \\
			Extreme         & Gal  & $32^2 \to 128^2$ & 11.40 $\pm$ 0.55 & 11.13 $\pm$ 0.37 & +2.4\%   & 8.40 $\pm$ 0.15 & \textbf{+26.3\%} \\
			Extreme         & Gal  & $32^2 \to 256^2$ & 12.33 $\pm$ 0.64 & 11.96 $\pm$ 0.41 & +2.9\%   & 8.67 $\pm$ 0.18 & \textbf{+29.6\%} \\
			\bottomrule
		\end{tabular}
	}
\end{table}

\paragraph{Darcy Benchmark Cross-Resolution Transfer.}\label{sec:exp-real-darcy} Table~\ref{tab:real-darcy-crossres} reports transfer on the official FNO benchmark, whose native resolution is 421 by 421, using subsets with 900 training and 200 test examples for FNO and Galerkin Transformer trained at $64^2$.

\begin{table}[t]
	\centering
	\caption{Cross-resolution transfer on Darcy benchmark data from the official FNO \texttt{.mat} files, with native resolution 421 by 421 and a subset of 900 training examples and 200 test examples. Models are trained at $64^2$. FNO is reported up to $421^2$, while Galerkin is reported up to $256^2$. Bold indicates the best entry within each architecture--resolution block.}
	\label{tab:real-darcy-crossres}
	\begin{tabular}{lcccc}
		\toprule
		\tableheadrulebelow
		\rowcolor{tableheadgray}
		\multicolumn{5}{l}{FNO}                                                                                                                   \\
		\tableheadruleabove
		\midrule
		\tableheadrulebelow
		\rowcolor{tableheadgray}
		Method                        & 64$^2$                   & 128$^2$                  & 256$^2$                  & 421$^2$                  \\
		\tableheadruleabove
		\midrule
		None                          & 2.62 $\pm$ 0.10          & 4.44 $\pm$ 0.18          & 5.84 $\pm$ 0.31          & 6.36 $\pm$ 0.36          \\
		LayerNorm                     & \textbf{2.28 $\pm$ 0.08} & 4.01 $\pm$ 0.11          & 5.30 $\pm$ 0.17          & 5.77 $\pm$ 0.20          \\
		QuadNorm                      & 3.38 $\pm$ 0.08          & \textbf{3.78 $\pm$ 0.13} & \textbf{4.07 $\pm$ 0.17} & \textbf{4.16 $\pm$ 0.20} \\
		BlendQuadNorm                 & 2.29 $\pm$ 0.08          & 3.98 $\pm$ 0.20          & 5.24 $\pm$ 0.32          & 5.71 $\pm$ 0.37          \\
		BlendQuadNorm, $\alpha = 0.1$ & 2.29 $\pm$ 0.08          & 4.03 $\pm$ 0.23          & 5.33 $\pm$ 0.38          & 5.82 $\pm$ 0.44          \\
		\bottomrule
	\end{tabular}

	\vspace{0.4em}
	\begin{tabular}{lccc}
		\toprule
		\tableheadrulebelow
		\rowcolor{tableheadgray}
		\multicolumn{4}{l}{Galerkin}                                                                                   \\
		\tableheadruleabove
		\midrule
		\tableheadrulebelow
		\rowcolor{tableheadgray}
		Method                        & 64$^2$                   & 128$^2$                  & 256$^2$                  \\
		\tableheadruleabove
		\midrule
		None                          & 9.42 $\pm$ 0.14          & 9.96 $\pm$ 0.15          & 10.48 $\pm$ 0.17         \\
		LayerNorm                     & 8.35 $\pm$ 0.42          & 8.87 $\pm$ 0.42          & 9.39 $\pm$ 0.43          \\
		QuadNorm                      & \textbf{7.63 $\pm$ 0.16} & \textbf{7.82 $\pm$ 0.15} & \textbf{7.96 $\pm$ 0.15} \\
		BlendQuadNorm                 & 8.32 $\pm$ 0.33          & 8.83 $\pm$ 0.29          & 9.32 $\pm$ 0.26          \\
		BlendQuadNorm, $\alpha = 0.1$ & 8.30 $\pm$ 0.30          & 8.80 $\pm$ 0.28          & 9.29 $\pm$ 0.26          \\
		\bottomrule
	\end{tabular}
\end{table}

Table~\ref{tab:real-darcy-crossres} gives the per-resolution breakdown for the official Darcy benchmark and confirms the earlier synthetic findings. For FNO, QuadNorm achieves $4.16\%$ at $421^2$, compared with $5.77\%$ for LayerNorm, which corresponds to a $28\%$ improvement; for Galerkin, it achieves $7.96\%$ at $256^2$, compared with $9.39\%$, which corresponds to a $15\%$ improvement. In both architectures, the advantage grows monotonically with the resolution gap.

At extreme gaps, namely $32^2 \to 421^2$ with a 13-fold refinement in the Darcy benchmark extreme cross-resolution study reported in Table~\ref{tab:real-darcy-extreme}, the advantage is even more dramatic. QuadNorm reaches $6.81\%$, compared with LayerNorm at $11.83\%$ for FNO, for a $42\%$ improvement. The exact bootstrap confidence interval for this comparison appears in Appendix Table~\ref{tab:bootstrap-cis}. Appendix Table~\ref{tab:real-benchmark-summary} collects a compact synthesis of these Darcy benchmark comparisons.

\paragraph{Transolver on the Darcy Benchmark.}\label{app:exp-transolver-real-darcy} We evaluate Transolver with width 64, depth 6, 8 physics slices, and 4 heads on the official FNO Darcy benchmark. This experiment uses a subset with 700 training examples and 200 test examples, trains at $64^2$ for 300 epochs with 5 seeds, as reported in Table~\ref{tab:transolver-darcy}. QuadNorm achieves an $18\%$ improvement over LayerNorm at a 4$\times$ gap, with a 95\% CI of [17\%, 19\%].

\begin{table}[t]
	\centering
	\caption{Transolver \citep{wu2024transolver} on Darcy benchmark data with a subset of 700 training examples and 200 test examples. Training uses $64^2$, and testing spans $64^2$ to $256^2$, which corresponds to a 4$\times$ gap. Results report relative $L^2$ error in percent as mean $\pm$ 95\% CI over 5 seeds. Bold indicates the best method in each column.}
	\label{tab:transolver-darcy}
	\begin{tabular}{lccc}
		\toprule
		\tableheadrulebelow
		\rowcolor{tableheadgray}
		Method                        & $64^2$ (native)          & $128^2$ (2$\times$)        & $256^2$ (4$\times$)        \\
		\tableheadruleabove
		\midrule
		None                          & 7.73 $\pm$ 0.14          & 8.32 $\pm$ 0.09          & 8.92 $\pm$ 0.12          \\
		LayerNorm                     & 6.85 $\pm$ 0.21          & 7.45 $\pm$ 0.20          & 8.09 $\pm$ 0.19          \\
		QuadNorm                      & \textbf{6.32 $\pm$ 0.22} & \textbf{6.50 $\pm$ 0.23} & \textbf{6.64 $\pm$ 0.25} \\
		BlendQuadNorm                 & 6.47 $\pm$ 0.15          & 7.19 $\pm$ 0.10          & 7.95 $\pm$ 0.11          \\
		BlendQuadNorm, $\alpha = 0.1$ & 6.45 $\pm$ 0.13          & 7.22 $\pm$ 0.06          & 8.01 $\pm$ 0.08          \\
		\bottomrule
	\end{tabular}
\end{table}

\paragraph{Darcy Benchmark Extreme Cross-Resolution Transfer.}\label{sec:exp-real-extreme} Table~\ref{tab:real-darcy-extreme} shows that at the 13$\times$ gap, QuadNorm reaches $6.81\%$, compared with LayerNorm at $11.83\%$, for a $42\%$ improvement for FNO; for Galerkin at $256^2$, it reaches $8.67\%$, compared with $12.33\%$, for a $30\%$ improvement.

\begin{table}[t]
	\centering
	\caption{Extreme cross-resolution transfer on Darcy benchmark data with training at $32^2$ and testing up to $421^2$, which corresponds to a 13$\times$ gap. FNO is reported up to $421^2$, while Galerkin is reported up to $256^2$.}
	\label{tab:real-darcy-extreme}
	\resizebox{\textwidth}{!}{%
		\begin{tabular}{lccccc}
			\toprule
			\tableheadrulebelow
			\rowcolor{tableheadgray}
			\multicolumn{6}{l}{FNO}                                                                                                                                              \\
			\tableheadruleabove
			\midrule
			\tableheadrulebelow
			\rowcolor{tableheadgray}
			Method                        & 32$^2$                   & 64$^2$                   & 128$^2$                  & 256$^2$                  & 421$^2$                  \\
			\tableheadruleabove
			\midrule
			None                          & 3.32 $\pm$ 0.06          & 7.74 $\pm$ 0.55          & 10.82 $\pm$ 0.78         & 12.37 $\pm$ 0.88         & 12.95 $\pm$ 0.92         \\
			LayerNorm                     & \textbf{3.13 $\pm$ 0.05} & 7.08 $\pm$ 0.23          & 9.90 $\pm$ 0.33          & 11.31 $\pm$ 0.38         & 11.83 $\pm$ 0.40         \\
			QuadNorm                      & 4.03 $\pm$ 0.05          & \textbf{5.23 $\pm$ 0.25} & \textbf{6.19 $\pm$ 0.37} & \textbf{6.66 $\pm$ 0.42} & \textbf{6.81 $\pm$ 0.44} \\
			BlendQuadNorm                 & 3.13 $\pm$ 0.06          & 7.00 $\pm$ 0.39          & 9.76 $\pm$ 0.60          & 11.14 $\pm$ 0.70         & 11.65 $\pm$ 0.74         \\
			BlendQuadNorm, $\alpha = 0.1$ & 3.14 $\pm$ 0.06          & 7.10 $\pm$ 0.46          & 9.92 $\pm$ 0.71          & 11.33 $\pm$ 0.83         & 11.85 $\pm$ 0.88         \\
			\bottomrule
		\end{tabular}
	}

	\vspace{0.4em}
	\begin{tabular}{lcccc}
		\toprule
		\tableheadrulebelow
		\rowcolor{tableheadgray}
		\multicolumn{5}{l}{Galerkin}                                                                                                              \\
		\tableheadruleabove
		\midrule
		\tableheadrulebelow
		\rowcolor{tableheadgray}
		Method                        & 32$^2$                   & 64$^2$                   & 128$^2$                  & 256$^2$                  \\
		\tableheadruleabove
		\midrule
		None                          & 9.31 $\pm$ 0.16          & 10.80 $\pm$ 0.19         & 12.35 $\pm$ 0.33         & 13.25 $\pm$ 0.41         \\
		LayerNorm                     & 8.26 $\pm$ 0.42          & 9.80 $\pm$ 0.43          & 11.40 $\pm$ 0.55         & 12.33 $\pm$ 0.64         \\
		QuadNorm                      & \textbf{7.58 $\pm$ 0.21} & \textbf{7.94 $\pm$ 0.16} & \textbf{8.40 $\pm$ 0.15} & \textbf{8.67 $\pm$ 0.18} \\
		BlendQuadNorm                 & 8.32 $\pm$ 0.29          & 9.70 $\pm$ 0.32          & 11.13 $\pm$ 0.37         & 11.96 $\pm$ 0.41         \\
		BlendQuadNorm, $\alpha = 0.1$ & 8.30 $\pm$ 0.28          & 9.66 $\pm$ 0.31          & 11.09 $\pm$ 0.37         & 11.91 $\pm$ 0.41         \\
		\bottomrule
	\end{tabular}
\end{table}

\subsection{Experimental Setup}
\paragraph{Normalization Baselines.}\label{sec:baselines} We compare several normalization methods, all applied identically within FNO blocks after the spectral convolution and pointwise linear layer and before the activation. The no-normalization baseline acts as the identity. LayerNorm, introduced by \citet{ba2016layer}, computes $\mu$ and $\sigma^2$ over all channels and spatial dimensions per sample, so its reduction axes are $(C, H, W)$; in our implementation, the affine parameters are per-channel and constant over spatial positions. InstanceNorm, introduced by \citet{ulyanov2016instance}, computes $\mu$ and $\sigma^2$ per channel and per sample over the spatial dimensions only, so its reduction axes are $(H, W)$. GroupNorm, introduced by \citet{wu2018group}, partitions channels into eight groups and computes statistics over the channels within each group together with the spatial dimensions, so its reduction axes are $(C/G, H, W)$. RMSNorm, introduced by \citet{zhang2019root}, applies root-mean-square scaling without mean centering and uses reduction axes $(H, W)$. QuadNorm uses pure trapezoidal quadrature weights with the native reduction axes of the layer under study, for example, per-channel spatial moments for InstanceNorm-style layers or the $\alpha = 0$ LayerNorm-style endpoint in LayerNorm-based sweeps. BlendQuadNorm blends LayerNorm with its quadrature-weighted LayerNorm counterpart, fixes $\alpha = 0.3$, and uses the same spatially constant affine parameterization described in Section~\ref{sec:blendquadnorm}.

In the extended baseline experiment in Section~\ref{sec:exp-extended-baselines}, we additionally compare SpectralNorm, which performs activation spectral normalization, RI-BandNorm, which performs spectral band normalization with resolution-invariant weighting, and QuadBandNorm, which combines quadrature with spectral bands. Learnable baselines are reported in Appendix~\ref{app:exp-learnable-baselines}.

\paragraph{Datasets.}\label{sec:datasets} For Darcy flow, we use the official FNO Darcy benchmark \citep{li2021fno}, specifically, the two \texttt{.mat} files \texttt{piececonst\_r421\_N1024\_smooth1.mat} and \texttt{piececonst\_r421\_N1024\_smooth2.mat} from its Zenodo release. We train at resolutions $32^2$, $64^2$, or $128^2$ and evaluate cross-resolution transfer to higher resolutions. The input is the log-permeability coefficient, and the output is the solution. The release provides two 1,024-sample files at native 421 by 421 resolution; our real-benchmark studies draw fixed subsets from these files, and the exact split is stated with each study that uses them.

For the Helmholtz equation, we generate solutions to $-\Delta u + k^2 u = f$ on $[0,1]^2$ with homogeneous Dirichlet boundary conditions, so $u = 0$ on $\partial\Omega$, and we use a random wavenumber $k \in [1, 10]$. The solver uses a discrete sine transform, which naturally enforces zero boundary conditions on an interior grid. This Helmholtz setting is a critical test case because periodic FFT grids make quadrature weights collapse to uniform weights by Proposition~\ref{prop:periodic}, whereas Dirichlet boundary conditions create a nontrivial interaction between normalization and the architecture's spectral assumptions. We generate Helmholtz solutions independently at each resolution because Dirichlet grid points do not nest across resolutions. We reuse the same random seeds across resolutions to match PDE instances. Sample counts vary by study and are stated with each study. Most studies train at $64^2$ and evaluate at $96^2$ and $128^2$, while additional experiments evaluate at $256^2$.

Additional synthetic nonperiodic PDEs used in the appendix include Poisson--Dirichlet, cavity flow, elasticity, variable-coefficient diffusion, and reaction-diffusion; their task-specific generation protocols are stated in the corresponding experiment subsections.

\paragraph{Training Protocol.}\label{sec:protocol} All models use AdamW \citep{loshchilov2019adamw} with cosine annealing, warmup, gradient clipping, and early stopping. Darcy training budgets vary by study. The early synthetic sweeps typically use up to 200 epochs with patience 30, while the stronger, later Darcy benchmarks increase the budget to 300 or 500 epochs as stated in the relevant study descriptions.

Seed counts are study-specific and are summarized in Appendix~\ref{app:reproducibility}; individual tables state the exact number of seeds used. Unless otherwise stated, we report mean $\pm$ 95\% CIs and perform paired $t$-tests with Holm--Bonferroni correction \citep{holm1979simple} over the family of tested hypotheses. Full reproducibility details are provided in Appendix~\ref{app:details} under the reproducibility statement.

Appendix experiments provide supporting ablations on nonuniform grids and extended baselines. The nonuniform grid bias analysis shows that quadrature weights reduce mean estimation bias by more than 1500 times, as shown in Figure~\ref{fig:nonuniform-bias}. The extended baseline study shows that spectral-domain methods fail catastrophically, exceeding $97\%$ error at $256^2$ in Appendix Table~\ref{tab:extended-baselines}; details appear in Appendix~\ref{sec:exp-extended-baselines}.

\subsection{Cross-Resolution Error Comparison}
\label{sec:exp-error-decomp}

\begin{table}[t]
	\centering
	\caption{Empirical comparison of cross-resolution degradation across different normalization strategies for Darcy, trained at $64^2$ and evaluated at $128^2$ and $256^2$ with 5 seeds. Native $L^2$ denotes the error at training resolution. Degradation denotes the increase in $L^2$ error at higher resolution and is reported in percentage points. The final column reports degradation relative to the no-normalization model. Values below 1 indicate that the normalization reduces cross-resolution degradation.}
	\label{tab:error-decomp}
	\begin{tabular}{llcccc}
		\toprule
		\tableheadrulebelow
		\rowcolor{tableheadgray}
		Method        & Test Res & Native $L^2$    & Transfer $L^2$  & Degradation & Baseline ratio    \\
		\tableheadruleabove
		\midrule
		No Norm       & $128^2$  & 3.41 $\pm$ 0.13 & 4.94 $\pm$ 0.11 & +1.53       & 1.00 times (ref.) \\
		              & $256^2$  &                 & 6.23 $\pm$ 0.23 & +2.82       & 1.00 times (ref.) \\
		\midrule
		LayerNorm     & $128^2$  & 2.92 $\pm$ 0.15 & 4.42 $\pm$ 0.13 & +1.50       & 0.98 times        \\
		              & $256^2$  &                 & 5.64 $\pm$ 0.17 & +2.71       & 0.96 times        \\
		\midrule
		QuadNorm      & $128^2$  & 4.21 $\pm$ 0.07 & 4.55 $\pm$ 0.14 & +0.34       & 0.22 times        \\
		              & $256^2$  &                 & 4.85 $\pm$ 0.22 & +0.64       & 0.23 times        \\
		\midrule
		BlendQuadNorm & $128^2$  & 2.92 $\pm$ 0.15 & 4.35 $\pm$ 0.17 & +1.42       & 0.93 times        \\
		              & $256^2$  &                 & 5.51 $\pm$ 0.25 & +2.58       & 0.91 times        \\
		\bottomrule
	\end{tabular}
\end{table}

To probe the role of normalization in cross-resolution transfer, we train models with and without normalization and compare degradation at $128^2$ and $256^2$. The no-normalization model provides a reference with no normalization-statistics mismatch, but this comparison is empirical rather than a strict additive decomposition because retraining changes the optimization trajectory as well as the test-time statistics.

Table~\ref{tab:error-decomp} reveals that LayerNorm matches the reference, with 0.98 times at $128^2$ and 0.96 times at $256^2$. QuadNorm shows only 0.22 times to 0.23 times the reference degradation, which is a reduction by a factor of 4.4 to 4.5. In this setting, it therefore yields less degradation than the no-normalization model. BlendQuadNorm achieves 0.91 times to 0.93 times while maintaining LayerNorm's native accuracy. This degradation comparison, visualized in Figure~\ref{fig:error-decomposition}, supports normalization consistency as an important mechanism for cross-resolution robustness.

\begin{figure}[t]
	\centering
	\includegraphics[width=\textwidth]{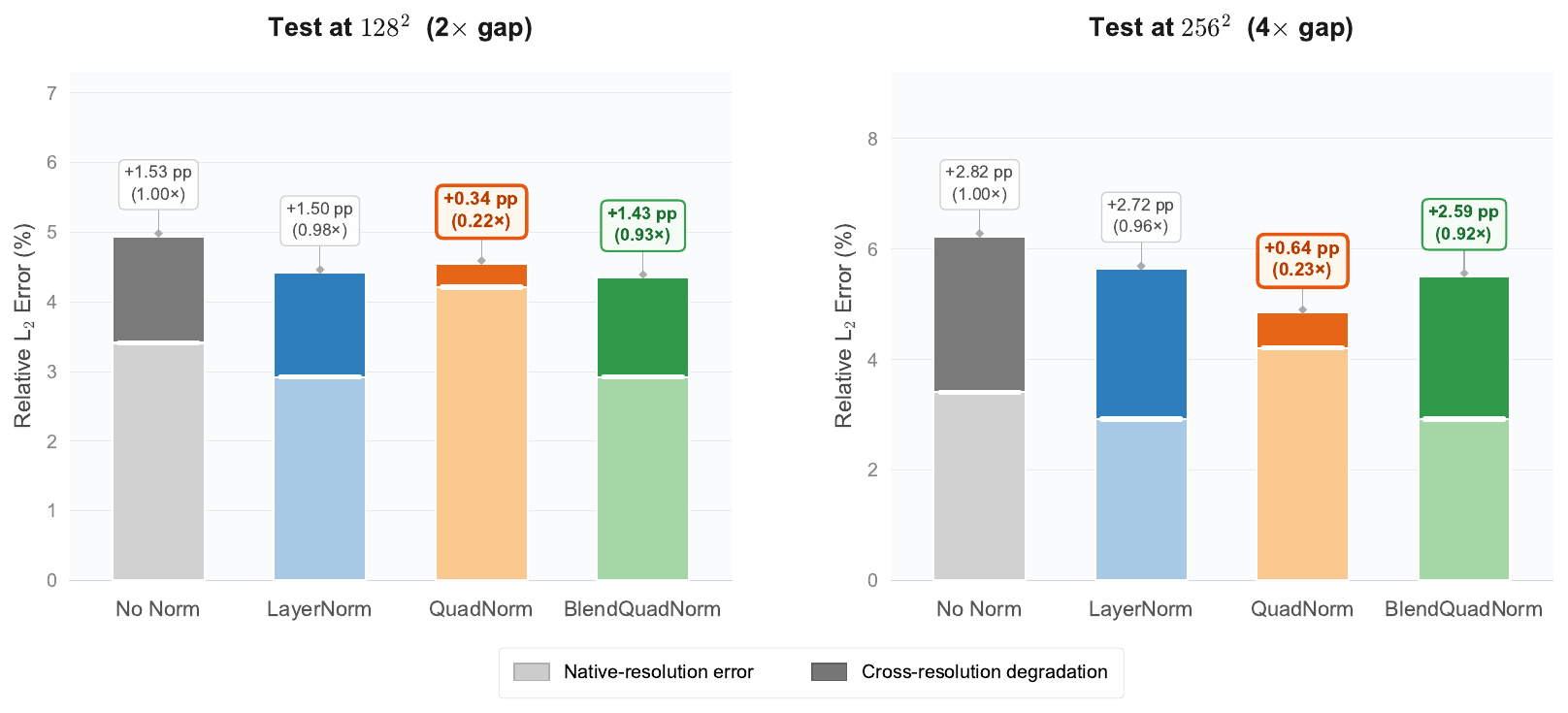}
	\caption{Empirical error-comparison visualization. Each bar shows native error in light shading and cross-resolution degradation in dark shading. The degradation ratio relative to the no-normalization reference shows that QuadNorm yields substantially less degradation than the no-normalization model in this setting, with ratios of 0.22 times to 0.23 times. BlendQuadNorm provides near-baseline degradation, namely 0.91 times to 0.93 times, while matching LayerNorm's native accuracy.}
	\label{fig:error-decomposition}
\end{figure}

\subsection{Transolver Results}
\label{sec:exp-transolver}

We evaluate Transolver \citep{wu2024transolver} by replacing its LayerNorm with QuadNorm or BlendQuadNorm via a spatially aware wrapper that reshapes tokens from $(B, N, D)$ to $(B, D, H, W)$. The Darcy benchmark Transolver transfer study is reported earlier in Appendix~\ref{sec:exp-real-summary} under the heading \hyperref[app:exp-transolver-real-darcy]{Transolver on the Darcy Benchmark}. We also report the nonperiodic Helmholtz and multi-resolution training settings here in the appendix, since they broaden the architecture and training-regime coverage beyond the standard Darcy transfer benchmark.

\paragraph{Transolver on Helmholtz.} Table~\ref{tab:transolver-helmholtz} evaluates Transolver on the nonperiodic Helmholtz equation with 10 seeds. QuadNorm achieves a $5.5\%$ improvement at $128^2$ in Table~\ref{tab:transolver-helmholtz}, with a 95\% CI of [1\%, 9\%]. The wider interval relative to Darcy reflects higher seed sensitivity because 7 of 10 seeds favor QuadNorm. At native resolution, the same experiment shows about twofold lower seed variance for QuadNorm, with $\sigma = 0.60\%$ compared with $\sigma = 1.21\%$ for LayerNorm, which provides more predictable performance.

\begin{table}[t]
	\centering
	\caption{Transolver on the nonperiodic Helmholtz problem. Training uses $64^2$ and testing spans $64^2$ to $128^2$, which corresponds to a 2$\times$ gap. Results report relative $L^2$ error in percent as mean $\pm$ 95\% CI over 10 seeds. QuadNorm outperforms LayerNorm by $5.5\%$ at $128^2$, with a 95\% CI of [1\%, 9\%], and QuadNorm has about twofold lower seed variance at native resolution. Bold indicates the best method in each column.}
	\label{tab:transolver-helmholtz}
	\begin{tabular}{lccc}
		\toprule
		\tableheadrulebelow
		\rowcolor{tableheadgray}
		Method                        & $64^2$ (native)           & $96^2$ (1.5$\times$)        & $128^2$ (2$\times$)         \\
		\tableheadruleabove
		\midrule
		None                          & 19.38 $\pm$ 1.06          & 19.14 $\pm$ 1.06          & 19.02 $\pm$ 1.05          \\
		LayerNorm                     & 14.03 $\pm$ 0.86          & 13.99 $\pm$ 0.91          & 14.02 $\pm$ 0.84          \\
		QuadNorm                      & \textbf{13.33 $\pm$ 0.43} & \textbf{13.13 $\pm$ 0.38} & \textbf{13.25 $\pm$ 0.49} \\
		BlendQuadNorm                 & 13.89 $\pm$ 1.04          & 13.68 $\pm$ 1.19          & 13.65 $\pm$ 1.17          \\
		BlendQuadNorm, $\alpha = 0.1$ & 13.89 $\pm$ 1.08          & 13.69 $\pm$ 1.21          & 13.65 $\pm$ 1.20          \\
		\bottomrule
	\end{tabular}
\end{table}

\paragraph{Transolver Multi-Resolution Training.} When trained jointly on Darcy data at $32^2$ and $64^2$ and tested from $32^2$ to $256^2$, as shown in Table~\ref{tab:transolver-multires} with 5 seeds, QuadNorm achieves a $24\%$ to $28\%$ improvement across all resolutions. It reaches $5.25\%$, compared with LayerNorm at $7.20\%$, at $32^2$, and $5.56\%$, compared with $7.36\%$, at $256^2$ in Table~\ref{tab:transolver-multires}. This multi-resolution training result confirms that the advantage extends to foundation-style multi-resolution training.

\begin{table}[t]
	\centering
	\caption{Transolver under multi-resolution training in a foundation-style scenario. Training uses both $32^2$ and $64^2$, and testing spans $32^2$ to $256^2$. Results report relative $L^2$ error in percent as mean $\pm$ 95\% CI over 5 seeds. QuadNorm achieves a $24\%$ to $28\%$ improvement over LayerNorm across all test resolutions in this setting.}
	\label{tab:transolver-multires}
	\begin{tabular}{lcccc}
		\toprule
		\tableheadrulebelow
		\rowcolor{tableheadgray}
		Method                        & $32^2$ (0.5$\times$)       & $64^2$ (native)          & $128^2$ (2$\times$)        & $256^2$ (4$\times$)        \\
		\tableheadruleabove
		\midrule
		None                          & 8.89 $\pm$ 0.47          & 8.48 $\pm$ 0.54          & 8.92 $\pm$ 0.46          & 9.03 $\pm$ 0.80          \\
		LayerNorm                     & 7.20 $\pm$ 0.49          & 6.78 $\pm$ 0.48          & 7.28 $\pm$ 0.45          & 7.36 $\pm$ 0.53          \\
		QuadNorm                      & \textbf{5.25 $\pm$ 0.20} & \textbf{4.86 $\pm$ 0.31} & \textbf{5.33 $\pm$ 0.33} & \textbf{5.56 $\pm$ 0.41} \\
		BlendQuadNorm                 & 6.61 $\pm$ 0.29          & 6.21 $\pm$ 0.33          & 6.60 $\pm$ 0.24          & 6.69 $\pm$ 0.38          \\
		BlendQuadNorm, $\alpha = 0.1$ & 6.59 $\pm$ 0.34          & 6.25 $\pm$ 0.35          & 6.67 $\pm$ 0.28          & 6.78 $\pm$ 0.44          \\
		\bottomrule
	\end{tabular}
\end{table}

\subsection{Model-Scale Amplification}
\label{sec:exp-scaling}

	\paragraph{Model scaling from 307K to 9.6M parameters.} Results for progressively larger Transolver variants, including an extra-large (XL) model, are reported in Appendix Tables~\ref{tab:transolver-darcy-largescale} and~\ref{tab:transolver-darcy-xl} with 5 seeds each; they show QuadNorm's transfer improvement growing monotonically: 27\% at 1.6M and 38\% at 9.6M on Darcy at a 4$\times$ transfer gap. On Helmholtz at the 9.6M scale, Appendix Table~\ref{tab:transolver-helmholtz-xl} shows that QuadNorm achieves $2.68\%$, compared with LayerNorm at $5.49\%$, at native resolution for a $51\%$ improvement. Table~\ref{tab:model-scaling} summarizes this monotonic trend, supporting the broader claim that discretization consistency becomes more important as model scale increases.

\begin{table}[t]
	\centering
	\caption{Scaling of the QuadNorm advantage with model size on Transolver for Darcy. All models train at $64^2$, and the reported improvement is measured at a 4$\times$ transfer gap, namely at $256^2$.}
	\label{tab:model-scaling}
	\begin{small}
		\begin{tabular}{lccc}
			\toprule
			\tableheadrulebelow
			\rowcolor{tableheadgray}
			Model                       & Params & LayerNorm error at 4$\times$ gap & QuadNorm improvement \\
			\tableheadruleabove
			\midrule
			Standard, width 64, depth 6 & 307K   & 8.09\%                     & 18\% [17 to 19\%]    \\
			Large, width 128, depth 8   & 1.6M   & 6.70\%                     & 27\% [24 to 30\%]    \\
			XL, width 256, depth 12     & 9.6M   & 6.20\%                     & 38\% [36 to 40\%]    \\
			\bottomrule
		\end{tabular}
	\end{small}
\end{table}

The Poisson--Dirichlet follow-up study extends this pattern, as reported in Appendix~\ref{app:exp-pd-galerkin-transolver}. Table~\ref{tab:galerkin-poisson-10seed} shows a $26.2\%$ QuadNorm improvement for Galerkin, with exact bootstrap confidence intervals in Table~\ref{tab:bootstrap-cis}, reinforcing that the benefit extends beyond Darcy on nonspectral architectures.

Additional studies in the appendix report the architecture-dependent $\alpha$ sweep in Appendix~\ref{sec:exp-alpha-matrix}, the bootstrap confidence summaries supporting the headline claims in Appendix~\ref{sec:bootstrap-cis}, and the moderate-gap nonspectral transfer comparison in Appendix~\ref{sec:2x-gap}.

\subsection{Architecture--PDE Alpha Sweep}
\label{sec:exp-alpha-matrix}

\paragraph{Systematic $\alpha$ sweep across architectures and PDEs.} We sweep $\alpha \in \{0, 0.1, 0.2, 0.3, 0.5\}$ across two nonspectral, nonperiodic architecture--PDE combinations, namely Galerkin with Poisson--Dirichlet and Transolver with Helmholtz, using 5 seeds for each setting in Table~\ref{tab:alpha-matrix}. The matrix reveals a clean pattern. In both settings, lower $\alpha$ values are favored for transfer, with the pure quadrature-weighted endpoint at $\alpha = 0$ giving the strongest transfer on the tested tasks. This transfer pattern motivates an empirical recommendation. For nonspectral architectures on nonperiodic PDEs, favor the pure quadrature endpoint. For example, one can choose $\alpha = 0$ in the LayerNorm-style family or use the matching pure quadrature variant for the architecture, while BlendQuadNorm remains the safest default when deployment is uncertain.

\begin{table}[t]
	\centering
	\caption{Architecture--PDE $\alpha$ recommendation matrix. The sweep uses $\alpha \in \{0, 0.1, 0.2, 0.3, 0.5\}$ across two architecture--PDE combinations with 5 seeds each. Results report relative $L^2$ error in percent as mean $\pm$ standard deviation. Bold indicates the best method in each column.}
	\label{tab:alpha-matrix}
	\begin{tabular}{lcccc}
		\toprule
		\tableheadrulebelow
		\rowcolor{tableheadgray}
		                              & \multicolumn{2}{c}{Galerkin+Poisson} & \multicolumn{2}{c}{Transolver+Helmholtz}                                                        \\
		\tableheadruleabove
		\cmidrule(lr){2-3} \cmidrule(lr){4-5}
		\tableheadrulebelow
		\rowcolor{tableheadgray}
		$\alpha$                      & Native                               & 2$\times$                                  & Native                    & 2$\times$                  \\
		\tableheadruleabove
		\midrule
		LayerNorm, baseline           & 24.26 $\pm$ 3.92                     & 21.06 $\pm$ 3.32                         & 11.00 $\pm$ 0.56          & 8.32 $\pm$ 0.39          \\
		$\alpha = 0$, pure quadrature & \textbf{21.08 $\pm$ 3.00}            & \textbf{16.91 $\pm$ 1.80}                & \textbf{10.19 $\pm$ 0.75} & \textbf{7.11 $\pm$ 0.44} \\
		$\alpha=0.1$                  & 24.42 $\pm$ 4.59                     & 20.31 $\pm$ 4.27                         & 10.86 $\pm$ 0.54          & 8.44 $\pm$ 0.29          \\
		$\alpha=0.2$                  & 23.20 $\pm$ 0.81                     & 18.95 $\pm$ 1.31                         & 10.86 $\pm$ 0.53          & 8.45 $\pm$ 0.30          \\
		$\alpha=0.3$ (BlendQuadNorm)  & 23.87 $\pm$ 3.25                     & 19.31 $\pm$ 2.16                         & 10.89 $\pm$ 0.56          & 8.45 $\pm$ 0.30          \\
		$\alpha=0.5$                  & 23.10 $\pm$ 1.47                     & 18.59 $\pm$ 1.03                         & 10.93 $\pm$ 0.59          & 8.44 $\pm$ 0.30          \\
		\bottomrule
	\end{tabular}
\end{table}

\paragraph{Statistical Confidence.}
\label{sec:bootstrap-cis}

All headline improvement claims based on 5 to 10 seeds are supported by 10,000-sample bootstrap CIs. Tables~\ref{tab:bootstrap-cis} and~\ref{tab:bootstrap-cis-breadth} tabulate the exact intervals cited most often. Appendix Table~\ref{tab:key-results-summary} may show rounded task-specific confidence intervals when the same setting is not duplicated there. The 3-seed strong-recipe benchmark results from the comprehensive Darcy benchmark evaluation are reported separately without bootstrap CI entries in Appendix Table~\ref{tab:key-results-summary}. CIs are tight; the $35\%$ result for the 8$\times$ gap has CI $[33\%, 38\%]$, the $42\%$ result for the 13$\times$ gap has CI $[41\%, 44\%]$, and the 10-seed Galerkin Poisson--Dirichlet result yields $26.2\%$ at native resolution with CI $[24.9\%, 27.4\%]$. Several 2$\times$ FNO comparisons, especially the standard Darcy BlendQuadNorm setting, have CIs that include zero, while the clearest positive gains appear at larger gaps or on nonspectral architectures.

\paragraph{Summary for FNO users.} Our strongest results arise on nonspectral architectures, but BlendQuadNorm is still a valuable upgrade for FNO. In mixed-resolution deployment, it serves as a conservative near-parity default whose FNO gains are scenario-dependent but often positive, while requiring only a single drop-in replacement. The spectral and band baselines in Appendix~\ref{app:bandnorm-ablation} further show that spectral filtering is not a robust substitute under resolution transfer: SpectralNorm, RI-BandNorm, and QuadBandNorm exceed $97\%$ error at $256^2$ in Table~\ref{tab:extended-baselines}.

\subsection{Moderate-Gap Nonspectral Transfer}
\label{sec:2x-gap}

As shown in Table~\ref{tab:galerkin-2xgap}, Galerkin Transformer results on two nonperiodic PDEs at $64 \to 128$ with 5 seeds show a consistent improvement at a 2$\times$ gap more clearly than the FNO results. The FNO gains at 2$\times$ gaps are smaller and more setting-dependent, which indicates that the benefit is architecture-dependent rather than purely gap-dependent.

\begin{table}[t]
	\centering
	\caption{Galerkin Transformer at the practical 2$\times$ gap from $64$ to $128$ on two nonperiodic PDEs, using 5 seeds for each setting. Nonspectral architectures benefit from QuadNorm even at moderate resolution gaps. Results report relative $L^2$ error in percent as mean $\pm$ standard deviation.}
	\label{tab:galerkin-2xgap}
	\small
	\begin{tabular}{lcccc}
		\toprule
		\tableheadrulebelow
		\rowcolor{tableheadgray}
		              & \multicolumn{2}{c}{Poisson (Dir.)} & \multicolumn{2}{c}{Darcy Benchmark}                                                       \\
		\tableheadruleabove
		\cmidrule(lr){2-3} \cmidrule(lr){4-5}
		\tableheadrulebelow
		\rowcolor{tableheadgray}
		Method        & Native                             & 2$\times$                             & Native                   & 2$\times$                  \\
		\tableheadruleabove
		\midrule
		LayerNorm     & 11.04 $\pm$ 0.50                   & 11.13 $\pm$ 0.50                    & 8.35 $\pm$ 0.30          & 8.87 $\pm$ 0.31          \\
		QuadNorm      & \textbf{7.66 $\pm$ 0.32}           & \textbf{7.87 $\pm$ 0.31}            & \textbf{7.63 $\pm$ 0.12} & \textbf{7.82 $\pm$ 0.11} \\
		BlendQuadNorm & 7.87 $\pm$ 0.25                    & 7.92 $\pm$ 0.25                     & 8.32 $\pm$ 0.24          & 8.83 $\pm$ 0.21          \\
		None          & 9.45 $\pm$ 0.30                    & 9.60 $\pm$ 0.29                     & 9.42 $\pm$ 0.10          & 9.96 $\pm$ 0.11          \\
		\bottomrule
	\end{tabular}
\end{table}

	\paragraph{Darcy Architecture Transfer.}\phantomsection\label{app:exp-arch-transfer} Table~\ref{tab:arch-transfer} reports Darcy transfer using Galerkin as an additional architecture-level comparison, showing that Galerkin achieves a $5.1\%$ improvement, decreasing the error from $19.36\%$ to $18.37\%$.

\begin{table}[t]
	\centering
	\caption{Architecture transfer for Darcy using Galerkin Transformer \citep{cao2021galerkin}, reporting relative $L^2$ error in percent as mean $\pm$ 95\% CI over 10 seeds. The table reports native performance at $64^2$ and transfer from $64$ to $128$.}
	\label{tab:arch-transfer}
	\begin{tabular}{lcc}
		\toprule
		\tableheadrulebelow
		\rowcolor{tableheadgray}
		Method        & Galerkin $64^2$           & Galerkin $64^2 \to 128^2$ \\
		\tableheadruleabove
		\midrule
		None          & 21.00 $\pm$ 1.65          & 21.30 $\pm$ 1.63          \\
		LayerNorm     & 19.17 $\pm$ 0.45          & 19.36 $\pm$ 0.47          \\
		RMSNorm       & 19.00 $\pm$ 0.50          & 19.17 $\pm$ 0.52          \\
		BlendQuadNorm & \textbf{18.16 $\pm$ 0.67} & \textbf{18.37 $\pm$ 0.68} \\
		\bottomrule
	\end{tabular}
\end{table}

\subsection{Bootstrap Confidence Intervals for Main Text Results}
\label{app:bootstrap-ci-details}

These intervals, reported in Table~\ref{tab:bootstrap-cis}, provide exact paired bootstrap confidence intervals for the headline improvement ratios cited in the main text. They reinforce the same pattern emphasized throughout the paper: the strongest support appears for larger-gap Darcy transfer, deeper models, and nonspectral nonperiodic operators, while near-zero intervals are confined to close comparisons where the architecture-dependent tradeoff between QuadNorm and the safer BlendQuadNorm is small.

\begin{table*}[t]
	\centering
	\caption{Bootstrap 95\% CIs for key improvement ratios, grouped by the headline claims emphasized in the paper. Improvement is defined by $1 - (\overline{e}_{\text{method}}/\overline{e}_{\text{LayerNorm}})$ using 10,000 paired resamples. Mean error is shown in percent as LayerNorm $\to$ the better of QuadNorm and BlendQuadNorm.}
	\label{tab:bootstrap-cis}
	\small
	\begingroup
	\setlength{\tabcolsep}{4pt}
	\renewcommand{\arraystretch}{1.12}
	\begin{tabularx}{\textwidth}{@{}>{\raggedright\arraybackslash}p{0.17\textwidth}>{\raggedright\arraybackslash}Xl c>{\raggedright\arraybackslash}p{0.17\textwidth}cc@{}}
		\toprule
		\tableheadrulebelow
		\rowcolor{tableheadgray}
		Block                       & Experiment                      & Best norm     & $n$ & Mean error (\%)     & Improvement     & 95\% CI          \\
		\tableheadruleabove
		\midrule
		\multicolumn{7}{@{}l}{\textbf{FNO on Darcy: gap and depth scaling}}                                                                            \\
		Synthetic Darcy             & $32^2 \to 256^2$, 8$\times$ gap   & QuadNorm      & 10  & 10.995 $\to$ 7.102  & \textbf{35.4\%} & [33.0\%, 37.9\%] \\
		Synthetic Darcy             & $32^2 \to 128^2$, 4$\times$ gap   & QuadNorm      & 10  & 9.619 $\to$ 6.600   & \textbf{31.4\%} & [29.0\%, 33.9\%] \\
		Darcy Benchmark             & $32^2 \to 421^2$, 13$\times$ gap  & QuadNorm      & 5   & 11.828 $\to$ 6.813  & \textbf{42.4\%} & [40.7\%, 44.1\%] \\
		Darcy Benchmark             & $32^2 \to 256^2$, 8$\times$ gap   & QuadNorm      & 5   & 11.312 $\to$ 6.659  & \textbf{41.1\%} & [39.4\%, 42.9\%] \\
		Darcy Benchmark             & $64^2 \to 256^2$                & QuadNorm      & 5   & 5.298 $\to$ 4.069   & \textbf{23.2\%} & [21.8\%, 24.5\%] \\
		Darcy Benchmark             & $64^2 \to 128^2$                & BlendQuadNorm & 5   & 4.012 $\to$ 3.976   & 0.9\%           & [-0.5\%, 2.3\%]  \\
		Synthetic Darcy, 8-layer    & $64^2 \to 256^2$                & QuadNorm      & 10  & 5.137 $\to$ 4.174   & \textbf{18.8\%} & [17.6\%, 19.8\%] \\
		\midrule
		\multicolumn{7}{@{}l}{\textbf{Galerkin on nonperiodic PDEs and the Darcy benchmark}}                                                          \\
		Helmholtz                   & $64^2 \to 128^2$                & BlendQuadNorm & 5   & 15.568 $\to$ 14.058 & \textbf{9.7\%}  & [6.8\%, 12.6\%]  \\
		Helmholtz                   & $64^2 \to 96^2$                 & BlendQuadNorm & 5   & 15.533 $\to$ 14.003 & \textbf{9.9\%}  & [7.0\%, 12.7\%]  \\
		Helmholtz                   & $64^2 \to 128^2$                & QuadNorm      & 5   & 15.568 $\to$ 14.917 & 4.2\%           & [-0.7\%, 7.5\%]  \\
		Cavity flow                 & $32^2 \to 128^2$                & BlendQuadNorm & 5   & 5.410 $\to$ 2.890   & \textbf{46.6\%} & [33.1\%, 57.0\%] \\
		Darcy Benchmark             & $64^2 \to 128^2$                & BlendQuadNorm & 5   & 8.872 $\to$ 8.827   & 0.5\%           & [-2.5\%, 2.4\%]  \\
		Darcy Benchmark             & $64^2 \to 256^2$                & BlendQuadNorm & 5   & 9.386 $\to$ 9.316   & 0.7\%           & [-2.0\%, 2.5\%]  \\
		Darcy Benchmark             & $32^2 \to 256^2$, 8$\times$ gap   & QuadNorm      & 5   & 12.327 $\to$ 8.674  & \textbf{29.6\%} & [26.9\%, 31.5\%] \\
		Poisson--Dirichlet          & native at $64^2 \to 64^2$, 10 seeds & QuadNorm      & 10  & 6.104 $\to$ 4.506   & \textbf{26.2\%} & [24.9\%, 27.4\%] \\
		Poisson--Dirichlet          & $64^2 \to 128^2$                & QuadNorm      & 5   & 34.679 $\to$ 32.346 & 6.7\%           & [-0.3\%, 13.5\%] \\
		Poisson--Dirichlet          & $64^2 \to 128^2$                & BlendQuadNorm & 5   & 34.679 $\to$ 32.136 & \textbf{7.3\%}  & [3.1\%, 11.2\%]  \\
		Poisson--Dirichlet          & native at $64^2 \to 64^2$       & QuadNorm      & 5   & 34.634 $\to$ 31.627 & \textbf{8.7\%}  & [2.0\%, 15.6\%]  \\
		Poisson--Dirichlet          & native at $64^2 \to 64^2$       & BlendQuadNorm & 5   & 34.634 $\to$ 31.915 & \textbf{7.9\%}  & [3.7\%, 11.6\%]  \\
		\midrule
		\multicolumn{7}{@{}l}{\textbf{Transolver and foundation-style training}}                                                                       \\
		Cross-fidelity foundation   & $32^2$, $64^2$, $128^2 \to 256^2$ & QuadNorm      & 5   & 6.271 $\to$ 5.439   & \textbf{13.3\%} & [6.2\%, 19.4\%]  \\
		Darcy                       & $64^2 \to 128^2$                & BlendQuadNorm & 5   & 7.449 $\to$ 7.194   & \textbf{3.4\%}  & [1.7\%, 4.9\%]   \\
		Darcy                       & $64^2 \to 256^2$                & QuadNorm      & 5   & 8.091 $\to$ 6.638   & \textbf{18.0\%} & [17.1\%, 18.8\%] \\
		Helmholtz                   & $64^2 \to 128^2$                & QuadNorm      & 10  & 14.017 $\to$ 13.251 & \textbf{5.5\%}  & [1.4\%, 9.2\%]   \\
		Helmholtz                   & $64^2 \to 128^2$                & BlendQuadNorm & 10  & 14.017 $\to$ 13.652 & 2.6\%           & [-0.4\%, 5.6\%]  \\
		Foundation multi-resolution & $32^2$, $64^2 \to 256^2$        & QuadNorm      & 5   & 7.360 $\to$ 5.560   & \textbf{24.5\%} & [21.5\%, 27.3\%] \\
		\bottomrule
	\end{tabularx}
	\endgroup
\end{table*}

\section{Extended Benchmarks, Ablations, and Robustness Checks}
\label{app:additional-experiments}

\subsection{Helmholtz Galerkin Transfer}
\label{app:exp-helmholtz-galerkin-strong}

\begin{table}[t]
	\centering
	\caption{Helmholtz equation with a stronger Galerkin Transformer baseline using width 48, 300 epochs, and patience 60. Training uses $64^2$, and testing covers $64^2$, $96^2$, and $128^2$. Results report relative $L^2$ error in percent as mean $\pm$ 95\% CI over 5 seeds. Bold indicates the best method in each column.}
	\label{tab:helmholtz-galerkin-strong}
	\begin{tabular}{lccc}
		\toprule
		\tableheadrulebelow
		\rowcolor{tableheadgray}
		Method                        & $64^2$ (native)           & $96^2$ (1.5$\times$)        & $128^2$ (2$\times$)         \\
		\tableheadruleabove
		\midrule
		None                          & 18.59 $\pm$ 3.85          & 18.64 $\pm$ 3.84          & 18.73 $\pm$ 3.84          \\
		LayerNorm                     & 15.51 $\pm$ 1.07          & 15.53 $\pm$ 1.06          & 15.57 $\pm$ 1.05          \\
		QuadNorm                      & 14.88 $\pm$ 1.31          & 14.86 $\pm$ 1.32          & 14.92 $\pm$ 1.33          \\
		BlendQuadNorm                 & \textbf{13.96 $\pm$ 1.31} & \textbf{14.00 $\pm$ 1.29} & \textbf{14.06 $\pm$ 1.29} \\
		BlendQuadNorm, $\alpha = 0.1$ & \textbf{13.96 $\pm$ 1.31} & 14.01 $\pm$ 1.30          & 14.07 $\pm$ 1.30          \\
		\bottomrule
	\end{tabular}
\end{table}

Table~\ref{tab:helmholtz-galerkin-strong} shows that at a 2$\times$ gap, BlendQuadNorm reaches $14.1\%$, compared with LayerNorm at $15.6\%$. This corresponds to a $9.7\%$ improvement with a 95\% CI of [6.8\%, 12.6\%]. This improvement is not a weak-baseline artifact. Table~\ref{tab:poisson-dirichlet-galerkin-strong} shows the same pattern for Poisson--Dirichlet. At a 2$\times$ gap, BlendQuadNorm remains the best for transfer, with $32.1\%$ compared with LayerNorm at $34.7\%$ and a $7.3\%$ improvement with a 95\% CI of [3.1\%, 11.2\%], while QuadNorm is slightly best natively but degrades at the 2$\times$ gap.

\begin{table}[t]
	\centering
	\caption{Poisson--Dirichlet with Galerkin Transformer using width 48 and 300 epochs. Training uses $64^2$, and testing covers $64^2$, $96^2$, and $128^2$. The results report relative $L^2$ error in percent over 5 seeds. Bold indicates the best method in each column.}
	\label{tab:poisson-dirichlet-galerkin-strong}
	\begin{tabular}{lccc}
		\toprule
		\tableheadrulebelow
		\rowcolor{tableheadgray}
		Method                        & $64^2$ (native)           & $96^2$ (1.5$\times$)        & $128^2$ (2$\times$)         \\
		\tableheadruleabove
		\midrule
		None                          & 38.98 $\pm$ 3.83          & 39.19 $\pm$ 3.74          & 39.42 $\pm$ 3.62          \\
		LayerNorm                     & 34.63 $\pm$ 2.92          & 34.64 $\pm$ 2.92          & 34.68 $\pm$ 2.91          \\
		QuadNorm                      & \textbf{31.63 $\pm$ 3.04} & \textbf{31.95 $\pm$ 3.02} & 32.35 $\pm$ 3.07          \\
		BlendQuadNorm                 & 31.91 $\pm$ 1.44          & 32.03 $\pm$ 1.36          & 32.14 $\pm$ 1.32          \\
		BlendQuadNorm, $\alpha = 0.1$ & 31.92 $\pm$ 1.67          & 32.00 $\pm$ 1.67          & \textbf{32.11 $\pm$ 1.65} \\
		\bottomrule
	\end{tabular}
\end{table}

\paragraph{Real Navier--Stokes Galerkin Transfer.}\phantomsection\label{app:exp-real-ns-galerkin} Table~\ref{tab:real-ns-galerkin} tests whether Galerkin, as a non-FFT architecture on real periodic Navier--Stokes data, avoids the catastrophic QuadNorm failure observed for FNO in the appendix study of FNO on periodic Navier--Stokes.

\begin{table}[t]
	\centering
	\caption{Real Navier--Stokes one-step prediction with Galerkin Transformer using width 48 and a dataset with 8000 training examples and 2000 test examples. This setting tests a non-FFT architecture on real periodic Navier--Stokes data. Results report relative $L^2$ error in percent as mean $\pm$ 95\% CI over 5 seeds. Bold indicates the best method in each column.}
	\label{tab:real-ns-galerkin}
	\begin{tabular}{lcc}
		\toprule
		\tableheadrulebelow
		\rowcolor{tableheadgray}
		Method                        & $64^2$ (native)          & $128^2$ (2$\times$)        \\
		\tableheadruleabove
		\midrule
		None                          & 5.78 $\pm$ 0.34          & 6.36 $\pm$ 0.52          \\
		LayerNorm                     & 3.90 $\pm$ 0.31          & 4.25 $\pm$ 0.44          \\
		QuadNorm                      & \textbf{3.52 $\pm$ 0.23} & 4.09 $\pm$ 0.30          \\
		BlendQuadNorm                 & 3.57 $\pm$ 0.14          & \textbf{3.89 $\pm$ 0.24} \\
		BlendQuadNorm, $\alpha = 0.1$ & 3.60 $\pm$ 0.15          & 3.98 $\pm$ 0.30          \\
		\bottomrule
	\end{tabular}
\end{table}

\subsection{Quadrature Rule Comparison}
\label{app:exp-quadrature-ablation}

\begin{table}[t]
	\centering
	\caption{Quadrature rule ablation on Darcy under cross-resolution transfer, reporting relative $L^2$ error in percent as mean $\pm$ 95\% CI over 5 seeds. The table compares trapezoidal and higher-order Newton--Cotes variants, namely Simpson and Boole, whose nominal orders are $O(h^4)$ and $O(h^6)$ when their composite compatibility conditions hold, together with trapezoidal BlendQuadNorm as a reference.}
	\label{tab:quadrature-ablation}
	\begin{tabular}{lccc}
		\toprule
		\tableheadrulebelow
		\rowcolor{tableheadgray}
		Quadrature Rule   & $64^2$          & $128^2$         & $256^2$         \\
		\tableheadruleabove
		\midrule
		QuadNorm          & 3.84 $\pm$ 0.09 & 4.19 $\pm$ 0.07 & 4.51 $\pm$ 0.13 \\
		QuadNorm, Simpson & 3.83 $\pm$ 0.06 & 4.19 $\pm$ 0.11 & 4.50 $\pm$ 0.18 \\
		QuadNorm, Boole   & 3.83 $\pm$ 0.06 & 4.19 $\pm$ 0.11 & 4.50 $\pm$ 0.18 \\
		BlendQuadNorm     & 2.64 $\pm$ 0.14 & 4.15 $\pm$ 0.17 & 5.35 $\pm$ 0.26 \\
		\bottomrule
	\end{tabular}
\end{table}

Table~\ref{tab:quadrature-ablation} shows that all three rules achieve essentially identical performance. Native errors range from $3.83\%$ to $3.84\%$, and cross-resolution differences are within $0.01$ percentage points. The simple trapezoidal rule suffices.

\subsection{Learnable Baselines}
\label{app:exp-learnable-baselines}

\begin{table}[t]
	\centering
	\caption{Learnable baselines on Darcy under cross-resolution transfer, reporting relative $L^2$ error in percent as mean $\pm$ 95\% CI over 5 to 10 seeds. Fixed BlendQuadNorm with $\alpha = 0.3$ is competitive with learned alternatives. Bold indicates the best method in each column.}
	\label{tab:learnable-baselines}
	\begin{tabular}{lccc}
		\toprule
		\tableheadrulebelow
		\rowcolor{tableheadgray}
		Normalization                       & $64^2$ (native)          & $128^2$ (transfer)       & $256^2$ (transfer)       \\
		\tableheadruleabove
		\midrule
		LayerNorm                           & 2.65 $\pm$ 0.04          & 4.13 $\pm$ 0.05          & 5.34 $\pm$ 0.13          \\
		BlendQuadNorm, fixed $\alpha = 0.3$ & 2.65 $\pm$ 0.05          & \textbf{4.03 $\pm$ 0.08} & \textbf{5.14 $\pm$ 0.13} \\
		BlendQuadNorm, learned $\alpha$     & 2.64 $\pm$ 0.14          & 4.14 $\pm$ 0.15          & 5.34 $\pm$ 0.23          \\
		ResConditioned                      & 2.67 $\pm$ 0.14          & 5.10 $\pm$ 0.64          & 7.80 $\pm$ 1.24          \\
		LearnableQuad                       & \textbf{2.63 $\pm$ 0.14} & 4.18 $\pm$ 0.17          & 5.41 $\pm$ 0.25          \\
		\bottomrule
	\end{tabular}
\end{table}

Table~\ref{tab:learnable-baselines} shows that learned $\alpha$ converges near $0.3$ to $0.4$, and fixed $\alpha = 0.3$ outperforms it. This comparison between learned and fixed blending supports the simpler choice.

\paragraph{Effect Size Analysis.}\phantomsection\label{app:exp-effect-sizes} Table~\ref{tab:effect-sizes} summarizes Cohen's $d$ effect sizes for BlendQuadNorm and QuadNorm relative to LayerNorm across representative cross-resolution experiments. For BlendQuadNorm, effect sizes are modest on standard Darcy transfer, with $d = 0.56$ at a 2$\times$ gap and $d = 0.14$ at a 4$\times$ gap, but they become large at the extreme 8$\times$ gap where $d = 1.02$. In settings that favor QuadNorm, the effect magnitudes are often much larger, ranging from $d = 4.62$ to $d = 9.62$.

\begin{table}[t]
	\centering
	\caption{Cohen's $d$ effect sizes for BlendQuadNorm relative to LayerNorm across cross-resolution experiments. Positive $d$ favors the method. Magnitudes below 0.2 are negligible; magnitudes from 0.2 to 0.5 are small; magnitudes from 0.5 to 0.8 are medium; and values above 0.8 are large.}
	\label{tab:effect-sizes}
	\begin{tabular}{lcccc}
		\toprule
		\tableheadrulebelow
		\rowcolor{tableheadgray}
		Setting                                & BlendQuadNorm $d$ & $p$    & QuadNorm $d$ & $p$    \\
		\tableheadruleabove
		\midrule
		Extreme $32^2 \to 256^2$               & 1.02              & <0.001 & 9.62         & <0.001 \\
		Deep 8L $64^2 \to 256^2$               & 0.30              & 0.22   & 8.40         & <0.001 \\
		Extended baselines $64^2 \to 256^2$    & 1.11              & <0.001 & 4.62         & <0.001 \\
		High-to-Low $128^2 \to 32^2$           & 0.53              & 0.034  & 13.99        & <0.001 \\
		High-to-Low $256^2 \to 64^2$           & 0.81              & 0.023  & 7.63         & <0.001 \\
		Multi-resolution training $64^2 \to 256^2$ & 0.86              & <0.001 & 7.46         & <0.001 \\
		\bottomrule
	\end{tabular}
\end{table}

\subsection{High-to-Low Resolution Transfer}
\label{app:exp-high-to-low}

\begin{table}[t]
	\centering
	\caption{High-to-Low resolution transfer on Darcy flow, reporting relative $L^2$ error in percent as mean $\pm$ 95\% CI over 5 seeds. Models are trained at $128^2$ or $256^2$ and tested at lower resolutions. Bold indicates the best method in each column of each subtable.}
	\label{tab:high-to-low}
	\begin{tabular}{lccc}
		\toprule
		\tableheadrulebelow
		\rowcolor{tableheadgray}
		Train 128$^2$ & $128^2$ (native)         & $64^2$ (transfer)        & $32^2$ (transfer)        \\
		\tableheadruleabove
		\midrule
		None          & 3.34 $\pm$ 0.12          & 4.72 $\pm$ 0.11          & 11.06 $\pm$ 0.89         \\
		LayerNorm     & \textbf{2.87 $\pm$ 0.14} & 4.25 $\pm$ 0.13          & 10.16 $\pm$ 0.49         \\
		InstanceNorm  & 4.10 $\pm$ 0.07          & 5.14 $\pm$ 0.06          & 10.14 $\pm$ 0.15         \\
		GroupNorm     & 2.95 $\pm$ 0.18          & 4.32 $\pm$ 0.17          & 10.11 $\pm$ 0.42         \\
		RMSNorm       & 3.38 $\pm$ 0.13          & 4.58 $\pm$ 0.15          & 9.90 $\pm$ 0.19          \\
		QuadNorm      & 4.12 $\pm$ 0.06          & 4.32 $\pm$ 0.08          & \textbf{5.91 $\pm$ 0.21} \\
		BlendQuadNorm & \textbf{2.87 $\pm$ 0.15} & \textbf{4.16 $\pm$ 0.17} & 9.90 $\pm$ 0.71          \\
		\bottomrule
	\end{tabular}

	\vspace{1em}

	\begin{tabular}{lccc}
		\toprule
		\tableheadrulebelow
		\rowcolor{tableheadgray}
		Train 256$^2$ & $256^2$ (native)         & $128^2$ (transfer)       & $64^2$ (transfer)        \\
		\tableheadruleabove
		\midrule
		None          & 3.40 $\pm$ 0.12          & 3.73 $\pm$ 0.08          & 5.96 $\pm$ 0.23          \\
		LayerNorm     & \textbf{2.91 $\pm$ 0.14} & 3.26 $\pm$ 0.12          & 5.42 $\pm$ 0.17          \\
		InstanceNorm  & 4.16 $\pm$ 0.07          & 4.41 $\pm$ 0.06          & 6.14 $\pm$ 0.06          \\
		GroupNorm     & 2.99 $\pm$ 0.18          & 3.34 $\pm$ 0.17          & 5.48 $\pm$ 0.20          \\
		RMSNorm       & 3.43 $\pm$ 0.13          & 3.73 $\pm$ 0.16          & 5.65 $\pm$ 0.18          \\
		QuadNorm      & 4.17 $\pm$ 0.07          & 4.17 $\pm$ 0.07          & \textbf{4.54 $\pm$ 0.12} \\
		BlendQuadNorm & \textbf{2.91 $\pm$ 0.14} & \textbf{3.22 $\pm$ 0.13} & 5.27 $\pm$ 0.27          \\
		\bottomrule
	\end{tabular}
\end{table}

Table~\ref{tab:high-to-low} shows that at $128^2 \to 32^2$, QuadNorm reaches $5.91\%$, compared with LayerNorm at $10.16\%$, for a $42\%$ improvement, while BlendQuadNorm matches LayerNorm bidirectionally.

\paragraph{No-Regression Guarantee via Equivalence Testing.}\phantomsection\label{app:exp-tost} Table~\ref{tab:tost} reports a TOST equivalence test in a 10-seed native Darcy comparison using FNO and formally establishes native-resolution equivalence between BlendQuadNorm and LayerNorm. The results confirm equivalence for this native Darcy comparison using FNO, with $p < 0.0001$, mean difference $+0.0024\%$, and 90\% CI $[-0.0011\%, +0.0059\%]$.

\begin{table}[t]
	\centering
	\caption{TOST equivalence test for a native Darcy comparison using FNO at $64^2 \to 64^2$ using 10 paired seeds. BlendQuadNorm is statistically equivalent to LayerNorm at native resolution with $p < 0.0001$ and equivalence margin $\delta = \pm 0.5\%$.}
	\label{tab:tost}
	\begin{tabular}{lc}
		\toprule
		\tableheadrulebelow
		\rowcolor{tableheadgray}
		Metric                      & Value               \\
		\tableheadruleabove
		\midrule
		LayerNorm mean              & 2.0944\%            \\
		BlendQuadNorm mean          & 2.0968\%            \\
		Mean difference             & 0.0024\%            \\
		90\% CI of difference       & [-0.0011, 0.0059]\% \\
		Equivalence margin $\delta$ & $\pm$0.5\%          \\
		TOST $p$-value              & $< 0.0001$          \\
		Equivalent at $p < 0.05$    & Yes                 \\
		$n$, paired seeds           & 10                  \\
		\bottomrule
	\end{tabular}
\end{table}

\subsection{Nonperiodic Breadth}
\label{app:exp-galerkin-breadth}

To address the concern that our nonperiodic PDE results are narrowly concentrated on Darcy flow, we evaluate the Galerkin Transformer with width 64 and depth 4 on three additional nonperiodic PDEs with Dirichlet boundary conditions. These are linear elasticity, variable-coefficient diffusion $-\nabla\cdot(a(x)\nabla u)=f$, and reaction-diffusion $-D\nabla^2 u + ru = f$. All settings use spectral and sine-basis data generation with exact or iterative solutions, train at $64^2$, test at $64^2$ and $128^2$, and use 5 seeds, as reported in Tables~\ref{tab:galerkin-elasticity}, \ref{tab:galerkin-varcoeff}, and~\ref{tab:galerkin-rxndiff}.

\begin{table}[t]
	\centering
	\caption{Galerkin Transformer on Elasticity with width 64, depth 4, training at $64^2$, Dirichlet boundary conditions, and 5 seeds. Bold indicates the best method in each column.}
	\label{tab:galerkin-elasticity}
	\begin{tabular}{lcc}
		\toprule
		\tableheadrulebelow
		\rowcolor{tableheadgray}
		Method        & $64^2$ (native)          & $128^2$ (2$\times$)        \\
		\tableheadruleabove
		\midrule
		LayerNorm     & 8.59 $\pm$ 3.43          & 6.49 $\pm$ 1.60          \\
		QuadNorm      & \textbf{7.28 $\pm$ 2.97} & \textbf{5.21 $\pm$ 1.03} \\
		BlendQuadNorm & 7.92 $\pm$ 2.75          & 5.76 $\pm$ 1.43          \\
		\bottomrule
	\end{tabular}
\end{table}

\begin{table}[t]
	\centering
	\caption{Galerkin Transformer on variable-coefficient diffusion with width 64, depth 4, training at $64^2$, Dirichlet boundary conditions, and 5 seeds. The task maps a log-normal coefficient field $a(x)$ to the solution $u(x)$, and the reference solutions are computed with preconditioned conjugate gradient.}
	\label{tab:galerkin-varcoeff}
	\begin{tabular}{lcc}
		\toprule
		\tableheadrulebelow
		\rowcolor{tableheadgray}
		Method        & $64^2$ (native)          & $128^2$ (2$\times$)        \\
		\tableheadruleabove
		\midrule
		LayerNorm     & 4.91 $\pm$ 0.25          & 5.38 $\pm$ 0.15          \\
		QuadNorm      & \textbf{4.04 $\pm$ 0.27} & \textbf{4.82 $\pm$ 0.22} \\
		BlendQuadNorm & 4.44 $\pm$ 0.25          & 5.06 $\pm$ 0.18          \\
		\bottomrule
	\end{tabular}
\end{table}

\begin{table}[t]
	\centering
	\caption{Galerkin Transformer on reaction-diffusion with width 64, depth 4, training at $64^2$, Dirichlet boundary conditions, and 5 seeds. The setup fixes $D=0.05$ and $r=1.0$ and maps the forcing $f$ to the solution $u$. Bold indicates the best method in each column.}
	\label{tab:galerkin-rxndiff}
	\begin{tabular}{lcc}
		\toprule
		\tableheadrulebelow
		\rowcolor{tableheadgray}
		Method        & $64^2$ (native)          & $128^2$ (2$\times$)        \\
		\tableheadruleabove
		\midrule
		LayerNorm     & 5.71 $\pm$ 1.24          & 4.57 $\pm$ 0.50          \\
		QuadNorm      & 5.27 $\pm$ 2.04          & \textbf{3.72 $\pm$ 0.94} \\
		BlendQuadNorm & \textbf{5.26 $\pm$ 1.28} & 3.93 $\pm$ 0.67          \\
		\bottomrule
	\end{tabular}
\end{table}

Tables~\ref{tab:galerkin-elasticity}, \ref{tab:galerkin-varcoeff}, and~\ref{tab:galerkin-rxndiff} show that QuadNorm achieves the best transfer on all three PDEs: a $19.7\%$ improvement on elasticity, a $10.4\%$ improvement on variable-coefficient diffusion, and an $18.6\%$ improvement on reaction-diffusion, extending the Galerkin+Poisson pattern to three additional nonperiodic PDEs.

\paragraph{Transolver on Variable-Coefficient Diffusion.}\label{app:exp-transolver-breadth} We extend the Transolver evaluation to variable-coefficient diffusion with Dirichlet boundary conditions, using width 64, depth 6, and 5 seeds. Table~\ref{tab:transolver-varcoeff} shows that on variable-coefficient diffusion, QuadNorm improves natively by $+21\%$ and $+9\%$ at a 2$\times$ gap.

\begin{table}[t]
	\centering
	\caption{Transolver on variable-coefficient diffusion with width 64, depth 6, training at $64^2$, Dirichlet boundary conditions, and 5 seeds.}
	\label{tab:transolver-varcoeff}
	\begin{tabular}{lcc}
		\toprule
		\tableheadrulebelow
		\rowcolor{tableheadgray}
		Method        & $64^2$ (native)          & $128^2$ (2$\times$)        \\
		\tableheadruleabove
		\midrule
		LayerNorm     & 4.25 $\pm$ 0.30          & 4.77 $\pm$ 0.09          \\
		QuadNorm      & \textbf{3.36 $\pm$ 0.33} & \textbf{4.35 $\pm$ 0.15} \\
		BlendQuadNorm & 3.87 $\pm$ 0.29          & 4.69 $\pm$ 0.11          \\
		\bottomrule
	\end{tabular}
\end{table}

\paragraph{Qualitative variable-coefficient diffusion examples.} Figure~\ref{fig:varcoeff-qualitative} complements Tables~\ref{tab:galerkin-varcoeff} and~\ref{tab:transolver-varcoeff} with representative native-resolution field visualizations on the nonperiodic variable-coefficient diffusion problem. For both Galerkin and Transolver, QuadNorm preserves the interior solution profile more faithfully and yields visibly smaller localized error, showing that the advantage already appears at native resolution on nonspectral, nonperiodic PDEs.

\begin{figure*}[t]
	\centering
	\includegraphics[width=\textwidth]{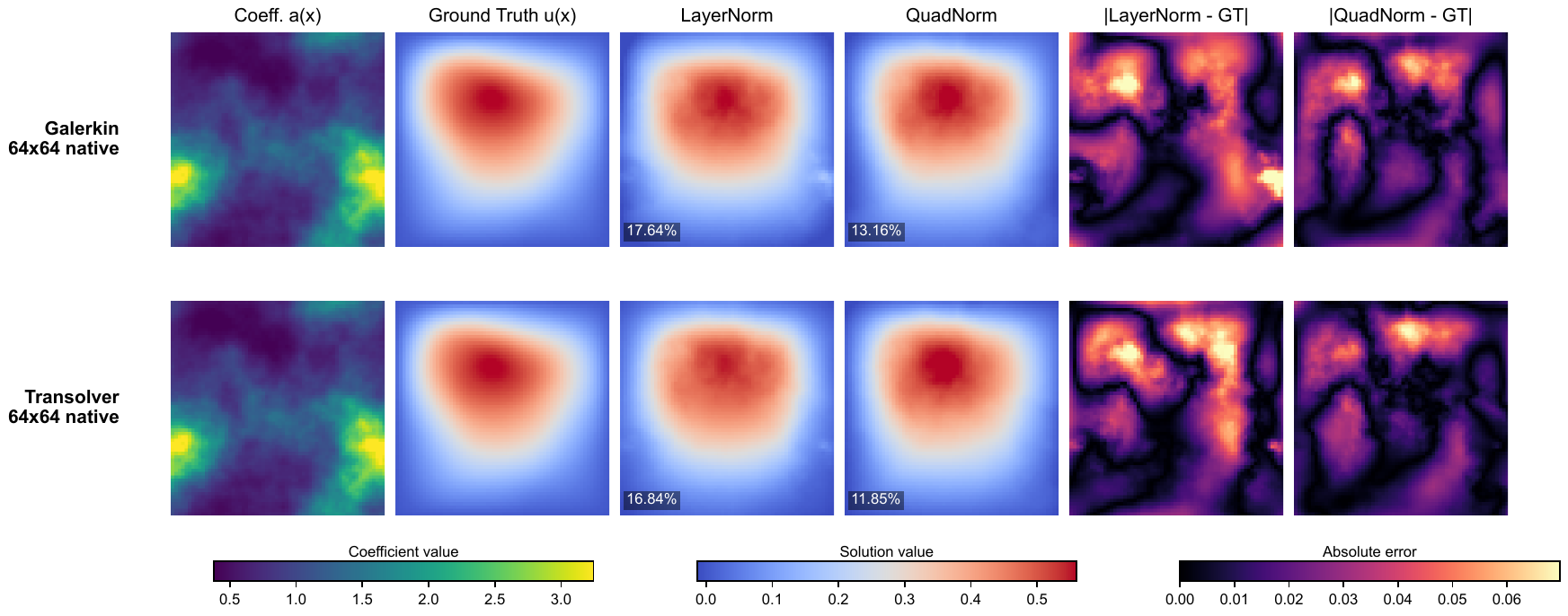}
	\caption{Qualitative predictions on the nonperiodic variable-coefficient diffusion problem at native resolution ($64^2 \to 64^2$). Rows correspond to Galerkin Transformer and Transolver, and columns show the coefficient field $a(x)$, the ground-truth solution $u(x)$, predictions from LayerNorm and QuadNorm, and the corresponding absolute error maps. Across both nonspectral architectures, QuadNorm reduces the spatially structured error and better preserves the native-resolution solution morphology.}
	\label{fig:varcoeff-qualitative}
\end{figure*}

\subsection{Scaling and Validation Details}

	\paragraph{Per-Scale Darcy Scaling Tables.}\phantomsection\label{app:scaling-darcy-tables} Tables~\ref{tab:transolver-darcy-largescale} and~\ref{tab:transolver-darcy-xl} unpack the model-scale trend from Section~\ref{sec:exp-scaling} by reporting the full Darcy error profile at each evaluation resolution. In both the large and XL Transolver settings, QuadNorm remains better than LayerNorm from native resolution to a 4$\times$ gap, indicating that the scaling gain is not driven by a single endpoint.

\begin{table}[t]
	\centering
	\caption{Large-scale Transolver on the Darcy benchmark with width 128, depth 8, and about 1.6M parameters. Training uses $64^2$, and testing spans $64^2$ to $256^2$. The results report relative $L^2$ error in percent as mean $\pm$ 95\% CI over 5 seeds.}
	\label{tab:transolver-darcy-largescale}
	\begin{tabular}{lccc}
		\toprule
		\tableheadrulebelow
		\rowcolor{tableheadgray}
		Method    & $64^2$ (native)          & $128^2$ (2$\times$)        & $256^2$ (4$\times$)        \\
		\tableheadruleabove
		\midrule
		LayerNorm & 5.06 $\pm$ 0.12          & 5.87 $\pm$ 0.09          & 6.70 $\pm$ 0.10          \\
		QuadNorm  & \textbf{4.55 $\pm$ 0.28} & \textbf{4.72 $\pm$ 0.26} & \textbf{4.88 $\pm$ 0.25} \\
		\bottomrule
	\end{tabular}
\end{table}

\begin{table}[t]
	\centering
	\caption{XL Transolver on the Darcy benchmark with width 256, depth 12, and about 9.6M parameters. Training uses $64^2$, and testing spans $64^2$ to $256^2$. The results report relative $L^2$ error in percent as mean $\pm$ 95\% CI over 5 seeds.}
	\label{tab:transolver-darcy-xl}
	\begin{tabular}{lccc}
		\toprule
		\tableheadrulebelow
		\rowcolor{tableheadgray}
		Method    & $64^2$ (native)          & $128^2$ (2$\times$)        & $256^2$ (4$\times$)        \\
		\tableheadruleabove
		\midrule
		LayerNorm & 4.19 $\pm$ 0.10          & 5.22 $\pm$ 0.11          & 6.20 $\pm$ 0.11          \\
		QuadNorm  & \textbf{3.47 $\pm$ 0.18} & \textbf{3.68 $\pm$ 0.18} & \textbf{3.85 $\pm$ 0.17} \\
		\bottomrule
	\end{tabular}
\end{table}

	\paragraph{10-Seed Helmholtz Validation.}\phantomsection\label{app:exp-helmholtz-10seed} Table~\ref{tab:galerkin-helmholtz-10seed} reports a 10-seed evaluation on nonperiodic Helmholtz, showing that the Galerkin improvement is not a small-sample artifact. BlendQuadNorm stays ahead of LayerNorm across all tested scales, reinforcing that nonspectral architectures benefit from this normalization family in this setting.

\begin{table}[t]
	\centering
	\caption{Galerkin Transformer on the nonperiodic Helmholtz problem with 10 seeds. Training uses $64^2$, and testing spans $64^2$ to $128^2$. Results report relative $L^2$ error in percent as mean $\pm$ 95\% CI. BlendQuadNorm achieves a 16\% improvement over LayerNorm with a 95\% CI of [10\%, 21\%].}
	\label{tab:galerkin-helmholtz-10seed}
	\begin{tabular}{lccc}
		\toprule
		\tableheadrulebelow
		\rowcolor{tableheadgray}
		Method        & $64^2$ (native)          & $96^2$ (1.5$\times$)       & $128^2$ (2$\times$)        \\
		\tableheadruleabove
		\midrule
		LayerNorm     & 6.57 $\pm$ 0.68          & 6.56 $\pm$ 0.59          & 6.68 $\pm$ 0.64          \\
		QuadNorm      & 8.43 $\pm$ 0.49          & 8.54 $\pm$ 0.46          & 8.56 $\pm$ 0.38          \\
		BlendQuadNorm & \textbf{5.54 $\pm$ 0.58} & \textbf{5.50 $\pm$ 0.50} & \textbf{5.61 $\pm$ 0.54} \\
		\bottomrule
	\end{tabular}
\end{table}

\paragraph{Per-Scale Helmholtz Scaling Tables.}\phantomsection\label{app:scaling-helmholtz-tables} Tables~\ref{tab:transolver-helmholtz-largescale} and~\ref{tab:transolver-helmholtz-xl} complement the Darcy scaling study on a second nonperiodic benchmark where the normalization choice matters strongly. QuadNorm improves the large Transolver modestly and the XL Transolver substantially, showing that the scale-amplification pattern carries beyond Darcy.

\begin{table}[t]
	\centering
	\caption{Large-scale Transolver on the nonperiodic Helmholtz problem with width 128, depth 8, and about 1.6M parameters. Training uses $64^2$, and testing spans $64^2$ to $128^2$. The results report relative $L^2$ error in percent as mean $\pm$ 95\% CI over 5 seeds.}
	\label{tab:transolver-helmholtz-largescale}
	\begin{tabular}{lccc}
		\toprule
		\tableheadrulebelow
		\rowcolor{tableheadgray}
		Method    & $64^2$ (native)          & $96^2$ (1.5$\times$)       & $128^2$ (2$\times$)        \\
		\tableheadruleabove
		\midrule
		LayerNorm & 8.45 $\pm$ 0.84          & 8.49 $\pm$ 0.85          & 8.46 $\pm$ 0.66          \\
		QuadNorm  & \textbf{7.62 $\pm$ 0.25} & \textbf{7.56 $\pm$ 0.32} & \textbf{7.83 $\pm$ 0.35} \\
		\bottomrule
	\end{tabular}
\end{table}

\begin{table}[t]
	\centering
	\caption{XL Transolver on the nonperiodic Helmholtz problem with width 256, depth 12, and about 9.6M parameters. Training uses $64^2$, and testing spans $64^2$ to $128^2$. Results report relative $L^2$ error in percent as mean $\pm$ standard deviation over 5 seeds. QuadNorm achieves about a 51\% improvement at native resolution.}
	\label{tab:transolver-helmholtz-xl}
	\begin{tabular}{lccc}
		\toprule
		\tableheadrulebelow
		\rowcolor{tableheadgray}
		Method    & $64^2$ (native)          & $96^2$ (1.5$\times$)       & $128^2$ (2$\times$)        \\
		\tableheadruleabove
		\midrule
		LayerNorm & 5.49 $\pm$ 0.88          & 5.52 $\pm$ 0.90          & 5.55 $\pm$ 0.79          \\
		QuadNorm  & \textbf{2.68 $\pm$ 0.19} & \textbf{2.76 $\pm$ 0.19} & \textbf{3.05 $\pm$ 0.18} \\
		\bottomrule
	\end{tabular}
\end{table}

	\paragraph{Nonuniform grids.}\label{sec:exp-nonuniform} On nonuniform grids, each weight $w_i$ is set to the cell area associated with grid point $i$ or, in higher dimensions, to the corresponding cell volume. These weights can be obtained, \textit{e.g.}, via Voronoi tessellation or the metric tensor, following the control-volume viewpoint standard in finite-volume discretizations \citep{eymard2000finitevolume, leveque2002finitevolume}. This control-volume weighting scheme makes QuadNorm applicable to arbitrary meshes once the corresponding weights are available. In many PDE and simulation settings, these weights are available from the discretization itself or can be precomputed once from the mesh geometry. Figure~\ref{fig:bias-convergence} shows how the statistic bias grows with mesh nonuniformity and how quadrature weighting suppresses it across mesh families.

\begin{figure}[t]
	\centering
	\includegraphics[width=\textwidth]{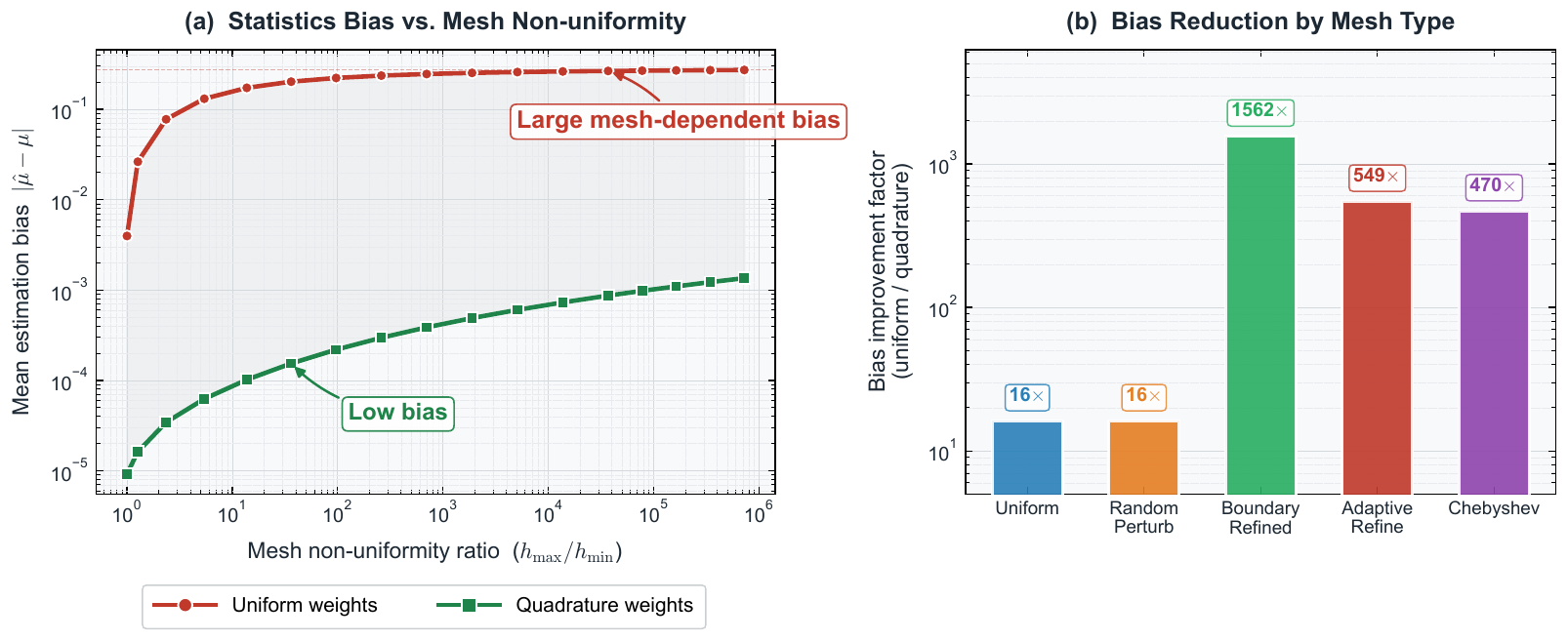}
	\caption{The first plot shows statistic bias against the mesh nonuniformity ratio for a fixed-resolution sweep over a smooth boundary-refined mesh family. Uniform weights shown in red exhibit substantial mesh-dependent bias as the mesh becomes more skewed, whereas quadrature weights shown in green remain far smaller across the full range. The second plot shows the bias reduction factor by mesh type. On highly nonuniform grids such as boundary-refined and Chebyshev meshes, the improvement exceeds 200 times.}
	\label{fig:bias-convergence}
\end{figure}

\paragraph{Nonuniform Grid Bias.} Figure~\ref{fig:nonuniform-bias} isolates the nonuniform-mesh failure mode motivating QuadNorm. It shows that uniform point averaging can produce large mesh-dependent statistic bias, whereas quadrature or control-volume weights sharply reduce that bias across the tested mesh families.

\begin{figure}[t]
	\centering
	\includegraphics[width=\textwidth]{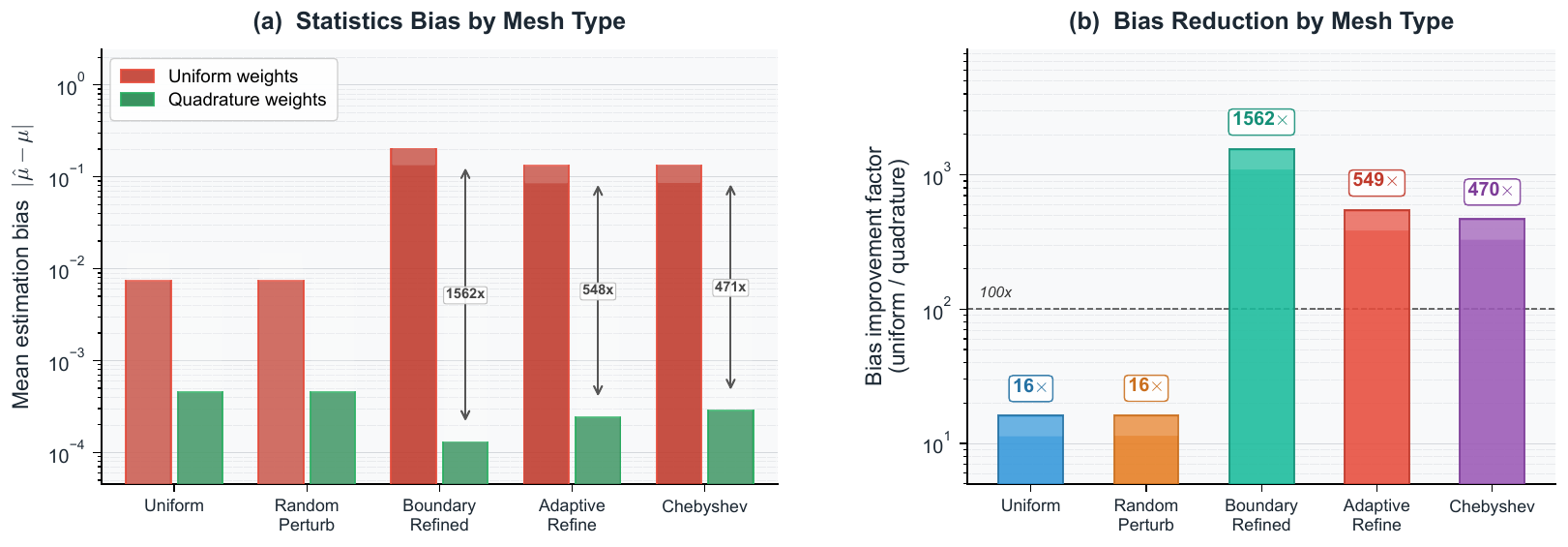}
	\caption{Nonuniform mesh bias analysis. The first plot shows statistic bias by mesh type, comparing uniform weights in red with quadrature weights in green. On highly nonuniform grids, the gap exceeds three orders of magnitude. The second plot shows the bias improvement factor: quadrature weights reduce mean estimation bias by more than 1500 times on boundary-refined grids.}
	\label{fig:nonuniform-bias}
\end{figure}

\paragraph{Qualitative boundary-refined mesh mechanism.} Figure~\ref{fig:nonuniform-mechanism} turns the quantitative bias result in Figure~\ref{fig:nonuniform-bias} into a single concrete example. Rather than showing end-to-end PDE prediction, it isolates the normalization mechanism on a deterministic PDE-like scalar field sampled on the same boundary-refined mesh family used in the nonuniform-grid study. The key point is that the mesh places many samples in a thin boundary strip, but each of those samples represents only a tiny physical area. Because the field is small near the boundary and large in the interior, uniform point averaging counts too many low-value boundary samples and too little of the high-value interior, whereas quadrature or control-volume weighting restores the correct area contribution.

\begin{figure*}[t]
	\centering
	\includegraphics[width=\textwidth]{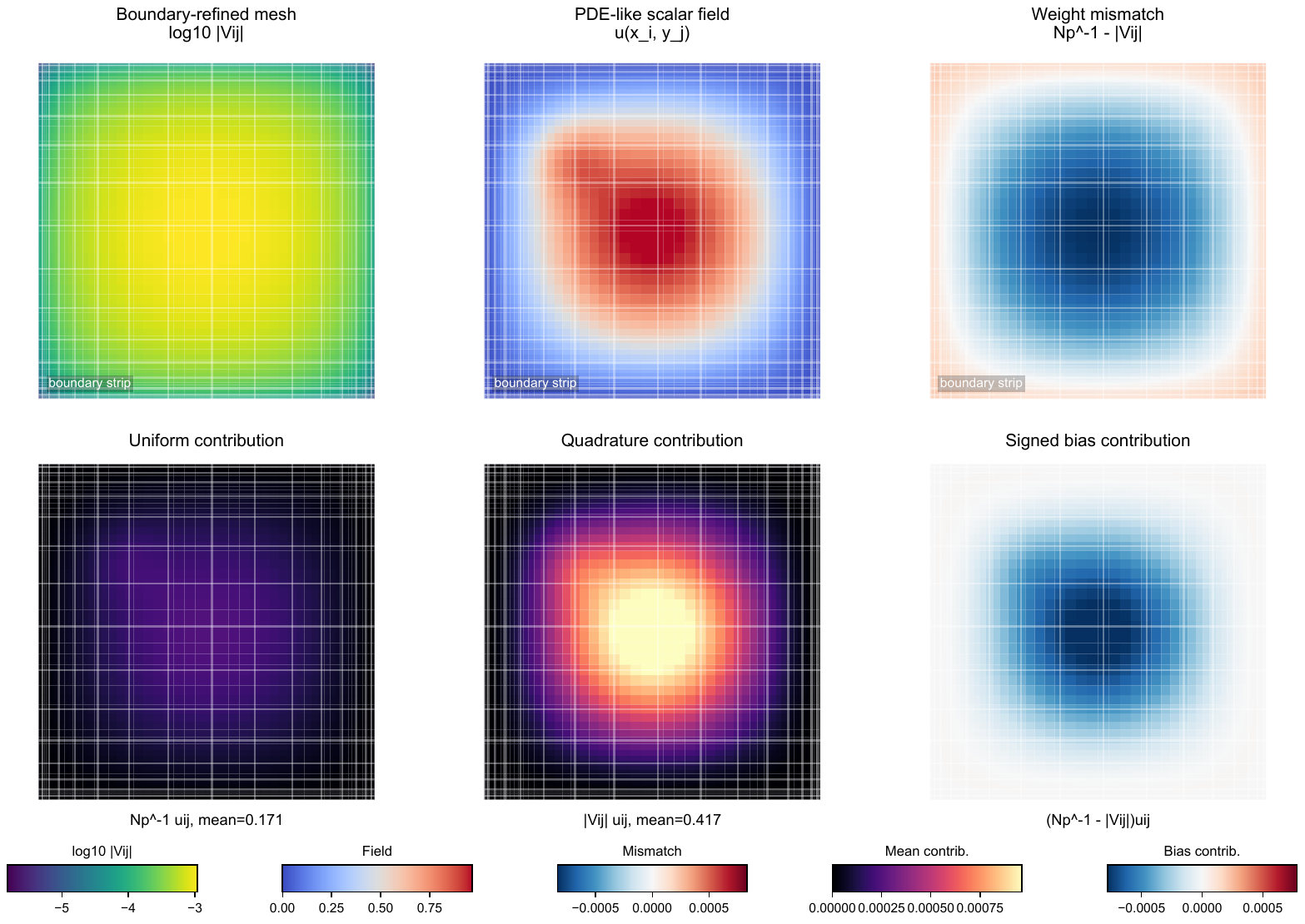}
	\caption{Mechanism visualization on a boundary-refined mesh at $64^2$. Read the panels from left to right. The top row shows that boundary points have very small control volumes, that the scalar field is weakest in the boundary strip and strongest in the interior, and that the weight mismatch $N_p^{-1} - |V_{ij}|$ gives too much weight to boundary points and too little to interior points under uniform averaging. The bottom row converts those weights into local contributions to the estimated mean for uniform averaging, quadrature weighting, and their signed difference. As a result, the uniform estimate is pulled down to $0.171$ because it overcounts many low-value boundary samples, while the quadrature estimate recovers the reference area average $0.417$ by weighting each sample according to the area it represents.}
	\label{fig:nonuniform-mechanism}
\end{figure*}

\paragraph{Extended Baseline Comparison.}\label{sec:exp-extended-baselines} Table~\ref{tab:extended-baselines} tests whether the observed gains are specific to classical normalization baselines or persist against broader alternatives. The results show that spectral and band-based normalization baselines are highly unstable under resolution transfer, reinforcing that discretization-consistent activation statistics are the relevant design axis here.

\begin{table}[t]
	\centering
	\caption{Extended baseline comparison on Darcy under cross-resolution transfer from $64$ to $128$ and $256$. The table includes SpectralNorm, RI-BandNorm, and QuadBandNorm alongside the standard normalization baselines.}
	\label{tab:extended-baselines}
	\begin{tabular}{lccc}
		\toprule
		\tableheadrulebelow
		\rowcolor{tableheadgray}
		Normalization & $64^2$                   & $128^2$                  & $256^2$                  \\
		\tableheadruleabove
		\midrule
		None          & 3.02 $\pm$ 0.09          & 4.73 $\pm$ 0.08          & 6.12 $\pm$ 0.11          \\
		LayerNorm     & \textbf{2.65 $\pm$ 0.06} & \textbf{4.14 $\pm$ 0.09} & 5.34 $\pm$ 0.13          \\
		InstanceNorm  & 3.82 $\pm$ 0.06          & 5.02 $\pm$ 0.05          & 6.10 $\pm$ 0.04          \\
		GroupNorm     & \textbf{2.76 $\pm$ 0.07} & 4.25 $\pm$ 0.08          & 5.47 $\pm$ 0.11          \\
		RMSNorm       & 3.07 $\pm$ 0.08          & 4.48 $\pm$ 0.06          & 5.68 $\pm$ 0.08          \\
		QuadNorm      & 3.86 $\pm$ 0.06          & 4.25 $\pm$ 0.08          & \textbf{4.59 $\pm$ 0.10} \\
		BlendQuadNorm & \textbf{2.65 $\pm$ 0.05} & \textbf{4.03 $\pm$ 0.08} & 5.14 $\pm$ 0.13          \\
		SpectralNorm  & 12.43 $\pm$ 0.10         & 78.91 $\pm$ 1.14         & 98.22 $\pm$ 0.52         \\
		RI-BandNorm   & 38.46 $\pm$ 0.23         & 89.53 $\pm$ 5.28         & 97.40 $\pm$ 2.16         \\
		QuadBandNorm  & 38.31 $\pm$ 1.07         & 89.31 $\pm$ 9.46         & 98.48 $\pm$ 3.64         \\
		\bottomrule
	\end{tabular}
\end{table}

\paragraph{Consistent-Kernel Combination.} Table~\ref{tab:consistent-combo} checks whether normalization consistency remains useful once the operator kernel is also made discretization-consistent. The combined results show additive gains, supporting our broader claim that kernel mismatch and normalization mismatch are complementary error sources rather than redundant fixes.

\begin{table}[t]
	\centering
	\caption{Combining the quadrature normalization family with discretization-consistent kernels in the consistent FNO (cFNO) setting. Additive improvements confirm orthogonal error sources. Results use 5 seeds.}
	\label{tab:consistent-combo}
	\begin{tabular}{lccc}
		\toprule
		\tableheadrulebelow
		\rowcolor{tableheadgray}
		Configuration        & $64^2$ (native)          & $128^2$ (2$\times$)        & $256^2$ (4$\times$)        \\
		\tableheadruleabove
		\midrule
		\multicolumn{4}{l}{Standard FNO kernel}                                                               \\
		FNO + LayerNorm      & \textbf{2.92 $\pm$ 0.15} & 4.42 $\pm$ 0.13          & 5.64 $\pm$ 0.17          \\
		FNO + QuadNorm       & 4.21 $\pm$ 0.07          & 4.55 $\pm$ 0.14          & 4.85 $\pm$ 0.22          \\
		FNO + BlendQuadNorm  & \textbf{2.92 $\pm$ 0.15} & \textbf{4.35 $\pm$ 0.17} & 5.51 $\pm$ 0.25          \\
		\midrule
		\multicolumn{4}{l}{Consistent FNO kernel from \citet{gao2025discretization}}                          \\
		cFNO + LayerNorm     & \textbf{2.88 $\pm$ 0.10} & 4.36 $\pm$ 0.08          & 5.56 $\pm$ 0.17          \\
		cFNO + QuadNorm      & 3.85 $\pm$ 0.12          & \textbf{4.21 $\pm$ 0.14} & \textbf{4.53 $\pm$ 0.18} \\
		cFNO + BlendQuadNorm & \textbf{2.88 $\pm$ 0.10} & \textbf{4.19 $\pm$ 0.09} & 5.27 $\pm$ 0.15          \\
		\bottomrule
	\end{tabular}
\end{table}

\paragraph{Nonperiodic Breadth Summary.}\label{app:exp-breadth-summary} Table~\ref{tab:nonperiodic-breadth-summary} summarizes the nonperiodic breadth experiments. On Galerkin, these settings improve by roughly $10\%$ to $20\%$. On Transolver, the variable-coefficient diffusion task also improves.

\begin{table}[t]
	\centering
	\caption{Summary of nonperiodic breadth experiments across elasticity, reaction-diffusion, and variable-coefficient diffusion. All experiments train at $64^2$ and test at $64^2$ and $128^2$.}
	\label{tab:nonperiodic-breadth-summary}
	\resizebox{\textwidth}{!}{%
		\begin{tabular}{llcccc}
			\toprule
			\tableheadrulebelow
			\rowcolor{tableheadgray}
			Architecture & PDE                            & LayerNorm at native & LayerNorm at 2$\times$ gap & Best of QuadNorm/BlendQuadNorm at 2$\times$ gap & Improvement (\%) \\
			\tableheadruleabove
			\midrule
			Galerkin     & Elasticity                     & 8.59\%              & 6.49\%               & \textbf{5.21\%}, QuadNorm                 & \textbf{19.7}    \\
			Galerkin     & Variable-coefficient diffusion & 4.91\%              & 5.38\%               & \textbf{4.82\%}, QuadNorm                 & \textbf{10.4}    \\
			Galerkin     & Reaction-diffusion             & 5.71\%              & 4.57\%               & \textbf{3.72\%}, QuadNorm                 & \textbf{18.6}    \\
			Transolver   & Variable-coefficient diffusion & 4.25\%              & 4.77\%               & \textbf{4.35\%}, QuadNorm                 & \textbf{8.8}     \\
			\bottomrule
		\end{tabular}
	}
\end{table}

\paragraph{Bootstrap CIs for Nonperiodic Breadth.}\phantomsection\label{app:exp-bootstrap-breadth} Table~\ref{tab:bootstrap-cis-breadth} separates the breadth results that are clearly above zero from the settings where seed noise remains comparable to the mean gain. The strongest support remains for Galerkin elasticity, Galerkin variable-coefficient diffusion, and Transolver variable-coefficient diffusion, matching our emphasis on nonspectral, nonperiodic operators.

\begin{table}[t]
	\centering
	\caption{Bootstrap 95\% CIs for the nonperiodic breadth experiments, using 10,000 paired resamples.}
	\label{tab:bootstrap-cis-breadth}
	\small
	\resizebox{\textwidth}{!}{%
		\begin{tabular}{llllrr}
			\toprule
			\tableheadrulebelow
			\rowcolor{tableheadgray}
			Architecture & PDE                            & Method        & Gap          & Improvement (\%) & 95\% CI      \\
			\tableheadruleabove
			\midrule
			Galerkin     & Elasticity                     & QuadNorm      & $64^2 \to 128^2$ & \textbf{19.4}    & [11.0, 29.7] \\
			Galerkin     & Elasticity                     & BlendQuadNorm & $64^2 \to 128^2$ & 11.0             & [-4.1, 25.7] \\
			Galerkin     & Elasticity                     & QuadNorm      & $64^2 \to 64^2$  & \textbf{15.3}    & [9.3, 20.0]  \\
			Galerkin     & Elasticity                     & BlendQuadNorm & $64^2 \to 64^2$  & 7.6              & [-3.3, 18.0] \\
			Galerkin     & Reaction-diffusion             & QuadNorm      & $64^2 \to 128^2$ & \textbf{18.4}    & [2.5, 31.1]  \\
			Galerkin     & Reaction-diffusion             & BlendQuadNorm & $64^2 \to 128^2$ & \textbf{13.9}    & [3.7, 23.8]  \\
			Galerkin     & Reaction-diffusion             & QuadNorm      & $64^2 \to 64^2$  & 8.0              & [-4.1, 19.2] \\
			Galerkin     & Reaction-diffusion             & BlendQuadNorm & $64^2 \to 64^2$  & \textbf{8.0}     & [5.4, 9.8]   \\
			Galerkin     & Variable-coefficient diffusion & QuadNorm      & $64^2 \to 128^2$ & \textbf{10.4}    & [8.0, 12.8]  \\
			Galerkin     & Variable-coefficient diffusion & BlendQuadNorm & $64^2 \to 128^2$ & \textbf{6.0}     & [3.7, 8.0]   \\
			Galerkin     & Variable-coefficient diffusion & QuadNorm      & $64^2 \to 64^2$  & \textbf{17.9}    & [16.5, 19.1] \\
			Galerkin     & Variable-coefficient diffusion & BlendQuadNorm & $64^2 \to 64^2$  & \textbf{9.5}     & [7.6, 11.7]  \\
			\midrule
			Transolver   & Variable-coefficient diffusion & QuadNorm      & $64^2 \to 128^2$ & \textbf{8.9}     & [7.1, 10.6]  \\
			Transolver   & Variable-coefficient diffusion & BlendQuadNorm & $64^2 \to 128^2$ & 1.6              & [-0.4, 3.7]  \\
			Transolver   & Variable-coefficient diffusion & QuadNorm      & $64^2 \to 64^2$  & \textbf{21.1}    & [18.6, 23.2] \\
			Transolver   & Variable-coefficient diffusion & BlendQuadNorm & $64^2 \to 64^2$  & \textbf{9.1}     & [7.0, 11.1]  \\
			\bottomrule
		\end{tabular}
	}
\end{table}

\subsection{Non-Darcy Extreme-Gap Results}
\label{app:exp-extreme-gap}

We extend the 4$\times$ gap evaluation to non-Darcy PDE settings using Galerkin and Transolver, training at $64^2$ and testing at $256^2$.

\paragraph{Galerkin on Helmholtz at a 4$\times$ gap.} BlendQuadNorm achieves $6.21\%$, compared with LayerNorm at $6.82\%$, for a $+8.9\%$ improvement, while QuadNorm underperforms at native resolution, as shown in Table~\ref{tab:galerkin-helmholtz-4x}.

\begin{table}[t]
	\centering
	\caption{Galerkin Transformer on Helmholtz at a 4$\times$ gap with training at $64^2$ and 5 seeds. Bold indicates the best method in each column.}
	\label{tab:galerkin-helmholtz-4x}
	\begin{tabular}{lccc}
		\toprule
		\tableheadrulebelow
		\rowcolor{tableheadgray}
		Method        & $64^2$ (native)          & $128^2$ (2$\times$)        & $256^2$ (4$\times$)        \\
		\tableheadruleabove
		\midrule
		LayerNorm     & 8.41 $\pm$ 1.68          & 6.69 $\pm$ 1.26          & 6.82 $\pm$ 1.23          \\
		QuadNorm      & 10.30 $\pm$ 0.44         & 6.30 $\pm$ 0.55          & 6.63 $\pm$ 0.54          \\
		BlendQuadNorm & \textbf{7.34 $\pm$ 1.51} & \textbf{6.04 $\pm$ 1.31} & \textbf{6.21 $\pm$ 1.29} \\
		\bottomrule
	\end{tabular}
\end{table}

\paragraph{Galerkin on Elasticity at a 4$\times$ gap.} Table~\ref{tab:galerkin-elasticity-4x} shows that QuadNorm achieves $5.64\%$, compared with LayerNorm at $7.03\%$, for a $+19.8\%$ improvement. Dirichlet boundary conditions amplify the boundary weight mismatch.

\begin{table}[t]
	\centering
	\caption{Galerkin Transformer on Elasticity at a 4$\times$ gap with training at $64^2$, 5 seeds, and Dirichlet boundary conditions. Bold indicates the best method in each column.}
	\label{tab:galerkin-elasticity-4x}
	\begin{tabular}{lccc}
		\toprule
		\tableheadrulebelow
		\rowcolor{tableheadgray}
		Method        & $64^2$ (native)          & $128^2$ (2$\times$)        & $256^2$ (4$\times$)        \\
		\tableheadruleabove
		\midrule
		LayerNorm     & 8.65 $\pm$ 2.21          & 6.73 $\pm$ 1.82          & 7.03 $\pm$ 1.73          \\
		QuadNorm      & \textbf{7.04 $\pm$ 3.23} & \textbf{5.02 $\pm$ 0.83} & \textbf{5.64 $\pm$ 0.78} \\
		BlendQuadNorm & 8.35 $\pm$ 3.49          & 6.04 $\pm$ 1.91          & 6.36 $\pm$ 1.83          \\
		\bottomrule
	\end{tabular}
\end{table}

\paragraph{Qualitative elasticity examples at a 4$\times$ gap.} Figure~\ref{fig:elasticity-4x-qualitative} complements Table~\ref{tab:galerkin-elasticity-4x} with a representative Galerkin transfer example from $64^2$ to $256^2$ on the nonperiodic elasticity problem. QuadNorm reduces the high-resolution transfer error most clearly near the Dirichlet boundary region, which matches the mechanism's claim that boundary weight mismatch becomes more pronounced as the resolution gap widens.

\begin{figure*}[t]
	\centering
	\includegraphics[width=\textwidth]{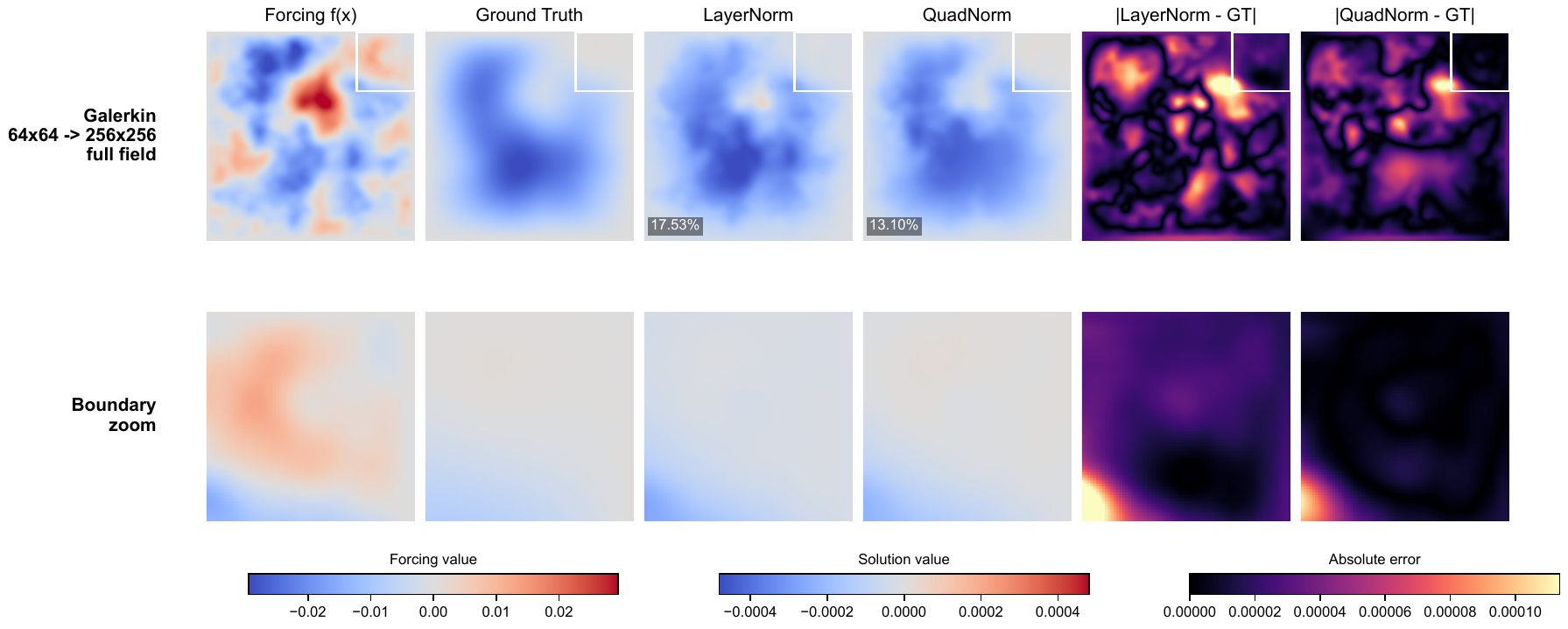}
	\caption{Qualitative prediction on the nonperiodic elasticity problem at a 4$\times$ gap ($64^2 \to 256^2$) using Galerkin Transformer. The top row shows the full forcing field, the ground-truth solution, predictions from LayerNorm and QuadNorm, and the corresponding absolute error maps; the white box marks the automatically selected boundary crop. The bottom row shows the corresponding boundary zoom. QuadNorm suppresses the boundary-aligned transfer error more effectively than LayerNorm while preserving the interior structure.}
	\label{fig:elasticity-4x-qualitative}
\end{figure*}

\paragraph{Transolver on Helmholtz at a 4$\times$ gap.} Table~\ref{tab:transolver-helmholtz-4x} shows that QuadNorm achieves $6.08\%$, compared with LayerNorm at $6.56\%$, for a $+7.3\%$ improvement, while BlendQuadNorm does not improve. This 4$\times$ Helmholtz result is consistent with Transolver favoring QuadNorm.

\begin{table}[t]
	\centering
	\caption{Transolver on Helmholtz at a 4$\times$ gap with training at $64^2$ and 5 seeds. Bold indicates the best method in each column.}
	\label{tab:transolver-helmholtz-4x}
	\begin{tabular}{lccc}
		\toprule
		\tableheadrulebelow
		\rowcolor{tableheadgray}
		Method        & $64^2$ (native)          & $128^2$ (2$\times$)        & $256^2$ (4$\times$)        \\
		\tableheadruleabove
		\midrule
		LayerNorm     & 9.33 $\pm$ 0.51          & 6.39 $\pm$ 0.36          & 6.56 $\pm$ 0.38          \\
		QuadNorm      & \textbf{8.65 $\pm$ 0.94} & \textbf{5.69 $\pm$ 0.45} & \textbf{6.08 $\pm$ 0.41} \\
		BlendQuadNorm & 9.02 $\pm$ 0.68          & 6.40 $\pm$ 0.41          & 6.54 $\pm$ 0.38          \\
		\bottomrule
	\end{tabular}
\end{table}

\paragraph{Extreme gap summary.} Table~\ref{tab:extreme-gap-summary} summarizes the 4$\times$ gap settings. Elasticity improves by $19.8\%$, and Helmholtz improves by 7\% to 9\%, confirming that gap-scaling generalizes beyond Darcy on nonspectral architectures.

\begin{table}[t]
	\centering
	\caption{Summary of 4$\times$ gap experiments on non-Darcy PDEs. The resolution-gap scaling pattern observed on Darcy, where the gain reaches $35\%$ at an 8$\times$ gap, extends to non-Darcy settings with nonspectral architectures.}
	\label{tab:extreme-gap-summary}
	\resizebox{\textwidth}{!}{%
		\begin{tabular}{llccccc}
			\toprule
			\tableheadrulebelow
			\rowcolor{tableheadgray}
			Architecture & PDE        & LayerNorm at native & LayerNorm at 2$\times$ gap & LayerNorm at 4$\times$ gap & Best at 4$\times$ gap       & Improvement (\%) \\
			\tableheadruleabove
			\midrule
			Galerkin     & Helmholtz  & 8.41\%              & 6.69\%               & 6.82\%               & 6.21\%, BlendQuadNorm & +8.9\%           \\
			Galerkin     & Elasticity & 8.65\%              & 6.73\%               & 7.03\%               & 5.64\%, QuadNorm      & +19.8\%          \\
			Transolver   & Helmholtz  & 9.33\%              & 6.39\%               & 6.56\%               & 6.08\%, QuadNorm      & +7.3\%           \\
			\bottomrule
		\end{tabular}
	}
\end{table}

\subsection{Galerkin on Poisson--Dirichlet}
\label{app:exp-pd-galerkin-transolver}

On Poisson--Dirichlet with 10 seeds, Table~\ref{tab:galerkin-poisson-10seed} shows that Galerkin QuadNorm achieves a $26.2\%$ native improvement, with an exact bootstrap confidence interval $[24.9\%, 27.4\%]$ reported in Table~\ref{tab:bootstrap-cis}, extending the nonspectral gains beyond Darcy and Helmholtz.

\begin{table}[t]
	\centering
	\caption{Galerkin on the nonperiodic Poisson--Dirichlet problem with 10 seeds. Training uses $64^2$, and testing spans $64^2$ to $128^2$. Results report relative $L^2$ error in percent as mean $\pm$ 95\% CI.}
	\label{tab:galerkin-poisson-10seed}
	\begin{tabular}{lccc}
		\toprule
		\tableheadrulebelow
		\rowcolor{tableheadgray}
		Method        & $64^2$ (native)          & $96^2$ (1.5$\times$)       & $128^2$ (2$\times$)        \\
		\tableheadruleabove
		\midrule
		LayerNorm     & 6.10 $\pm$ 0.13          & 6.48 $\pm$ 0.13          & 6.93 $\pm$ 0.13          \\
		QuadNorm      & \textbf{4.51 $\pm$ 0.11} & \textbf{5.21 $\pm$ 0.10} & \textbf{5.97 $\pm$ 0.10} \\
		BlendQuadNorm & 5.97 $\pm$ 0.27          & 6.40 $\pm$ 0.25          & 6.89 $\pm$ 0.23          \\
		\bottomrule
	\end{tabular}
\end{table}

\paragraph{BandNorm Ablation.} \label{app:bandnorm-ablation} Three spectral and band variants for Darcy using FNO and 5 seeds are summarized in Table~\ref{tab:extended-baselines}. SpectralNorm, RI-BandNorm, and QuadBandNorm all degrade sharply at higher resolutions, confirming that spectral filtering introduces resolution dependence.

\section{Discussion}
\label{app:discussion}

\paragraph{Settings where the quadrature normalization family helps most.} The benefit is strongest on nonspectral architectures, especially Galerkin and Transolver, when solving nonperiodic PDEs. Across our studies, substantial native-resolution improvements occur, reaching 26\% for Poisson using Galerkin and 21\% for variable-coefficient diffusion using Transolver. For FNO, the most popular architecture, the clearest gains appear at larger resolution gaps and on stronger real-data benchmarks. For periodic PDEs using FNO, BlendQuadNorm is equivalent to LayerNorm by Proposition~\ref{prop:periodic}; QuadNorm can still differ because it uses per-channel spatial normalization. BlendQuadNorm therefore serves as a conservative default for FNO users, while QuadNorm can be a substantial upgrade for several nonspectral architectures in the industry-critical nonperiodic setting.

\paragraph{Supporting appendix studies.} The supporting appendix studies sharpen the same practical picture. Across the nonperiodic breadth experiments and the architecture-dependent $\alpha$ sweep, the gains remain architecture-dependent: nonspectral models often favor lower-$\alpha$ or pure quadrature choices, while BlendQuadNorm remains the safer default when deployment conditions are uncertain. Separately, the appendix shows that QuadNorm and BlendQuadNorm compose with discretization-consistent kernels \citep{gao2025discretization} in the consistent-kernel combination study.

\paragraph{Architecture-dependent reversal.} QuadNorm often outperforms BlendQuadNorm in nonspectral settings, while BlendQuadNorm remains the safer default for FNO, consistent with the broader architecture dependence discussed above.

\paragraph{Composability and model scale.} The quadrature normalization family composes with discretization-consistent kernels \citep{gao2025discretization} in the consistent-kernel combination study. QuadNorm's advantage also amplifies with model size, with transfer improvement growing from $18\%$ at 307K parameters to $38\%$ at 9.6M parameters, as summarized in Table~\ref{tab:model-scaling}.

\paragraph{Limitations.} Its benefit often appears when the resolution changes, and small-gap or purely native FNO settings can show only modest gains. This limitation is partly by design: BlendQuadNorm stays close to LayerNorm at native resolution, with formal native-resolution equivalence in the 10-seed Darcy FNO comparison in Appendix Table~\ref{tab:tost}, while the target use case is mixed-resolution operator deployment where the gap- and depth-scaling studies show larger effects. Finally, the empirical coverage is necessarily finite across PDEs, architectures, and meshes. The counterpoint is that the mechanism is component-level rather than benchmark-specific: the $O(h^2)$ consistency result, the transfer-error bound, the nonuniform-grid bias analysis, and the consistent-kernel combination study all support the same conclusion that discretization-consistent activation statistics remain useful beyond the individual tasks tested here.

\paragraph{Practical guidance.} The safest single default for the Darcy setting using FNO is $\alpha = 0.3$; in a 10-seed native Darcy comparison using FNO, this $\alpha=0.3$ setting is statistically equivalent to LayerNorm, as reported in Appendix Table~\ref{tab:tost} with TOST $p < 0.0001$. For deployment-specific tuning, the empirical $\alpha$ study in Appendix Table~\ref{tab:alpha-matrix} is the most relevant reference.

\paragraph{Future directions.} Natural next steps include scaling to production-deployed foundation models such as Poseidon, neural general circulation models (NeuralGCM), and Aurora; extending the approach with Voronoi-based quadrature for three-dimensional unstructured meshes; integrating it with alias-free operator learning \citep{reno2023}; and extending it to diffusion-based generative models where the denoiser acts as a field-to-field operator.

\paragraph{Broader Impact} \label{app:broader-impact} The quadrature normalization family improves the reliability of neural PDE surrogates in safety-critical engineering where multi-resolution predictions are routine. The advantage amplifies with scale from $18\%$ to $38\%$. While QuadNorm and BlendQuadNorm reduce discretization-dependent errors, they do not eliminate them, and practitioners should validate in safety-critical settings.

\section{Related Work}
\label{app:related}

\paragraph{Neural operators.} \citet{li2021fno} introduced FNO, which learns integral kernels in Fourier space with resolution-invariant spectral convolutions. \citet{kovachki2021universal} further analyze universal approximation and error bounds for FNO. Representative examples include DeepONet \citep{lu2021deeponet}, FNO variants such as physics-informed neural operators \citep{li2022pino}, factorized FNO \citep{tran2023ffno}, and domain-agnostic FNO \citep{li2023dafno}, as well as U-shaped, convolutional, and latent operators \citep{rahman2023uno, raonic2024convolutional, wang2024latent}. Transformer- and message-passing-based approaches include transformers for PDE operator learning \citep{li2022transformerpde}, Galerkin Transformer \citep{cao2021galerkin}, the general neural operator transformer \citep{hao2023gnot}, message-passing solvers \citep{brandstetter2022message}, Transolver \citep{wu2024transolver}, geometry-aware operator transformers \citep{wen2025gaot}, and continuous vision transformers \citep{wang2025cvit}. Mesh- and geometry-aware operator families include graph kernel networks \citep{li2020gkn}, multipole graph neural operators \citep{li2020mgkn}, geometry-adaptive FNO \citep{li2023geofno}, and geometry-informed neural operators \citep{li2023gino}; related mesh-based neural simulators such as MeshGraphNets \citep{pfaff2021meshgraphnets} likewise target transfer across irregular discretizations. \citet{tran2023pdebench} introduced PDEBench, and \citet{pathak2022fourcastnet} demonstrated industrial-scale deployment with FourCastNet. Nevertheless, none of these works analyze the discretization dependence of normalization layers, which is orthogonal to operator kernel design.

\paragraph{Foundation models for PDEs.} Poseidon \citep{herde2024poseidon}, the autoregressive denoising operator transformer of \citet{hao2024dpot}, the multiple-physics-pretraining model of \citet{mccabe2024multiple}, and other foundation models \citep{subramanian2024towards, liu2024neural} train on diverse multi-resolution data, which is exactly the setting where discretization-dependent normalization introduces systematic bias. This trend accelerates with weather and climate models like the European Centre for Medium-Range Weather Forecasts' data-driven forecasting system \citep{lang2024aifs}, NeuralGCM \citep{kochkov2024neuralgcm}, and Aurora \citep{bodnar2024aurora}, which operate across multiple resolutions and mesh types in production. Latent neural operators \citep{wang2024latent}, the local-neural-field latent PDE model of \citet{serrano2024aroma}, and rollout-refinement models such as PDE-Refiner \citep{lippe2023pderefiner} still use standard normalization in their processor or update components. Here, our quadrature normalization family is complementary: It addresses a component-level source of resolution dependence that persists regardless of training data diversity or latent space design.

\paragraph{Normalization in deep learning.} Batch normalization \citep{ioffe2015batch}, LayerNorm \citep{ba2016layer}, InstanceNorm \citep{ulyanov2016instance}, GroupNorm \citep{wu2018group}, and RMSNorm \citep{zhang2019root} all compute statistics as simple averages over discrete tensor axes, making them inherently sensitive to the grid in neural operators. Weight normalization \citep{salimans2016weight} and spectral normalization \citep{miyato2018spectral} constrain parameter matrices rather than activations and are orthogonal to our approach. The quadrature normalization family instead addresses the discretization dependence of activation-level statistics. This dependence is unaffected by parameter-level constraints. In neural operator practice, InstanceNorm and LayerNorm are commonly used \citep{neuraloperator_norm_layers}, but their discretization dependence has not been formally analyzed.

\paragraph{Aliasing and representation equivalence.} \citet{reno2023} formalize representation equivalence; alias-free architectures such as the alias-free Mamba neural operator \citep{zheng2024mambano} and spectral methods \citep{bartolucci2024spectral} study aliasing; and \citet{gao2025discretization} address discretization mismatch in kernel and interpolation components. Our work is complementary, identifying normalization as an additional source of discretization dependence.

\paragraph{Resolution robustness.} Recent analyses also quantify discretization and truncation effects in FNO and related pipelines \citep{lanthaler2024discretization, subedi2024controlling, gao2025discretization}. Multi-resolution training \citep{li2023dafno}, spectral anti-aliasing \citep{bartolucci2024spectral}, and discretization-aware pipelines \citep{gao2025discretization} modify training or kernels. Our quadrature normalization family is complementary: It modifies the normalization layer, a component these approaches still require.

\paragraph{Novelty of applying quadrature to normalization.} While the trapezoidal rule is classical, several findings from our systematic study are unique to the neural operator setting and could not be anticipated from numerical analysis alone. They include a spectral mismatch, namely an $O(h)$ boundary perturbation from endpoint correction in Proposition~\ref{prop:spectral-mismatch} that is empirically harmful for FNO's periodic features and has no analog in traditional quadrature; an empirical overcompensation effect, where QuadNorm reduces cross-resolution degradation to 0.22 times the no-normalization reference in the empirical error comparison study; an $\alpha$-reversal, where QuadNorm outperforms BlendQuadNorm on nonspectral architectures but underperforms on spectral ones; and model-scale amplification from $18\%$ to $38\%$, which suggests that discretization consistency becomes a bottleneck at scale. These phenomena are genuine empirical discoveries that required the breadth of our empirical study to uncover.

\end{document}